\documentclass{article}

\usepackage{arxiv}

\usepackage[utf8]{inputenc} 
\usepackage[T1]{fontenc}    
\usepackage{hyperref}       
\usepackage{url}            
\usepackage{booktabs}       
\usepackage{amsfonts}       
\usepackage{nicefrac}       
\usepackage{microtype}      
\usepackage{lipsum}		
\usepackage{graphicx}
\usepackage{natbib}
\usepackage{doi}
\usepackage{multicol}

\newcommand{\real}{\mathbb{R}}

\renewcommand{\real}{\mathbb{R}}

\newcommand{\ssubset}{\subset\joinrel\subset}

\DeclareSymbolFont{bbold}{U}{bbold}{m}{n}
\DeclareSymbolFontAlphabet{\mathbbold}{bbold}

\newcommand{\vect}[1]{\mathbbold{#1}}
\newcommand{\vectorones}[1][]{\vect{1}_{#1}}
\newcommand{\vectorzeros}[1][]{\vect{0}_{#1}}

\usepackage{amsmath}
\usepackage{amssymb}
\usepackage{amsthm}
\usepackage{xcolor}
\usepackage{tikz}
\usetikzlibrary{matrix, backgrounds, calc}
\usepackage{hyperref}       
\usepackage{subfig}
\usepackage{mathtools, nccmath}
\usepackage{upgreek}
\usepackage{cancel}

\usepackage{graphicx}

\newtheorem{theorem}{Theorem}
\newtheorem{lemma}[theorem]{Lemma}
\newtheorem{corollary}[theorem]{Corollary}
\newtheorem{proposition}[theorem]{Proposition}
\newtheorem{definition}[theorem]{Definition}

\newtheorem{remark}{Remark}
\newtheorem{problem}{Problem}

\definecolor{gnblue4}{RGB}{0,108,212} 
\definecolor{gnblue6}{RGB}{35,156,255}
\definecolor{gnpurple4}{RGB}{72,46,146} 
\hypersetup{colorlinks=true, linkcolor=gnpurple4, breaklinks=true, urlcolor=gnpurple4, citecolor=gnpurple4}

\title{\emph{Safe-by-Design} Learning via Energy-based\\ Neural Networks}

\author{ \href{https://orcid.org/0009-0000-3444-0838}{\includegraphics[scale=0.06]{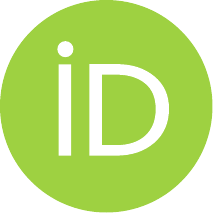}\hspace{1mm}Simone Betteti}\\
	RIAS Lab\\
	The Italian Institute of Artificial Intelligence for Industry\\
	Torino, 10129, ITA \\
	\texttt{\{simone.betteti\}@ai4i.it} \\
	\And
	\href{https://orcid.org/0000-0001-7549-4365}{\includegraphics[scale=0.06]{Fig/orcid.pdf}\hspace{1mm}Morteza Lahijanian} \\
	RECUV \\
	University of Colorado Boulder\\
	Boulder, CO 80303, USA \\
	\texttt{\{morteza.lahijanian\}@colorado.edu} \\
	\AND
	\href{https://orcid.org/0000-0003-1190-6097}{\includegraphics[scale=0.06]{Fig/orcid.pdf}\hspace{1mm}Luca Laurenti\thanks{L. Laurenti is also with the Delft Center for Systems and Control, TU Delft, Delft, 2600 AA, The Netherlands.} } \\
	RIAS Lab\\
	The Italian Institute of Artificial Intelligence for Industry\\
	Torino, 10129, ITA \\
	\texttt{\{luca.laurenti\}@ai4i.it}
}

\renewcommand{\shorttitle}{\emph{Safe-by-design} learning via Energy-Based Neural Ntworks}

\hypersetup{
pdftitle={Safe-by-design Learning via Energy-Based Neural Networks},
pdfauthor={Simone Betteti, Morteza Lahijanian, Luca Laurenti},
pdfkeywords={Energy-based models, Safety, Certifiability, neural ODEs},
}

\begin{document}

\maketitle

\begin{abstract}
Learning neural-network models of dynamical systems with safety guarantees is a fundamental requirement for their deployment in safety-critical settings. Safety is commonly established by proving the invariance of a desired subset in state-space, ensuring that every trajectory initialized in this subset remains confined to it for all time under admissible inputs. Existing frameworks, however, either rely on computationally expensive post-hoc verification or employ safety-enforcing mechanisms without formal correctness guarantees. In this paper, we introduce a novel neural architecture grounded in energy-based modern Hopfield networks to guarantee \emph{safety-by-design} while retaining sufficient expressiveness to model complex nonlinear dynamics. Specifically, we integrate modern Hopfield networks with a port-Hamiltonian neural ODE, enabling by design the construction of barrier functions yielding explicit admissible-input sets and quantitative robustness radii. Across several benchmarks, including an 12-dimensional nanodrone model, our framework achieves state-of-the-art performance while producing certified invariant sets that are more robust to external solicitations than comparable existing approaches.
\end{abstract}

\section{Introduction}

Learning dynamical systems from data is increasingly central to robotics, autonomous systems, and scientific modeling, where neural state-space models can reconstruct complex nonlinear trajectories with remarkable accuracy~\citep{KGE-KIG-LL:21,LC-ST-SG:23}. In safety-critical applications, however, predictive accuracy over observed trajectories is not sufficient. A model that performs well over the finite horizons represented during training may develop unstable or physically undesirable behavior when propagated under new inputs or for longer time intervals. \emph{Safety} is particularly important when learned dynamics are intended for applications such as flight, autonomous operation, or human-robot interaction, where the behavior of the model outside densely sampled training regions is itself operationally relevant~\citep{AR-CR-PC:21,BL-GM-HAW:22,GB-KA:25}. The central question is therefore not only whether a neural model can accurately identify nonlinear dynamics, but whether useful properties of its future evolution can be certified \emph{before} deployment.

Safety in learning-enabled dynamical systems is commonly enforced through external mechanisms on top of an unconstrained predictor, including safety filters, runtime monitors, constrained controllers, and post-hoc verification~\citep{BL-GM-HAW:22,HL-WKP-MM:20}. Certifiability mechanisms are indispensable components of safety-critical systems, but they leave open a complementary question: can certifiability be made an intrinsic property of the learned dynamics themselves? Neural ordinary differential equations (NODEs)~\citep{CRTQ-RY-BJ:18,GS-DM-YJ:19} have motivated several approaches towards safety-certified dynamics, using Lyapunov functions, dissipativity constraints, or algebraic structure to control behavior beyond the training trajectories~\citep{KJZ-MG:19,LN-LP-FM:20,KR-OY:22,KQ-SY-DQ:21,YA-XJ-RM:22}. Port-Hamiltonian neural networks are particularly attractive because they decompose the vector field into energy-preserving interconnection, energy dissipation, and external actuation through explicit input ports~\citep{VDSA:07,MS-PM-BM:20}. Their energy balance provides a natural structural primitive for stability and control without requiring the learned vector field to remain unconstrained.

A central tension nevertheless remains between \emph{certifiability} and \emph{expressivity}. Strong stability guarantees in recent port-Hamiltonian architectures are commonly obtained through restrictive Hamiltonian parameterizations, including convex energies designed around a single equilibrium~\citep{RFJ-KDK-KM:25}. Yet nonconvex energy landscapes are precisely what allow energy-based models to represent multiple stable regimes and complex nonlinear organization. Modern Hopfield networks provide a particularly expressive realization of this principle: advances in energy-based architectures have shown that rich, nonconvex landscapes can coexist with an explicit energy structure~\citep{KD-HJJ:20,HB-CDH-SH:22,HB-LY-PB:23}. For controlled physical dynamics, this suggests a different route to safe learning: rather than simplifying the Hamiltonian until certification becomes tractable, preserve an expressive energy landscape and exploit its geometry directly to determine which external inputs are compatible with safe evolution.

We pursue this approach through a \emph{port-Hamiltonian energy-based model} (pH-EBM), in which a radially unbounded modern Hopfield energy (Fig.~\ref{fig:graphical_abstract}b) parametrizes the Hamiltonian of a controlled neural ODE (Fig.~\ref{fig:graphical_abstract}a). Radial unboundedness confines bounded-energy trajectories to compact sublevel sets, while the interior Hamiltonian remains free to develop nonconvex and multi-well structure. The port-Hamiltonian decomposition provides an explicit energy balance between dissipation and external actuation. We then use the same learned Hamiltonian to define energy barrier functions: for a prescribed sublevel component, the barrier condition identifies the external inputs that cannot inject sufficient energy to drive the learned trajectory across its boundary (Fig.~\ref{fig:graphical_abstract}c). Safety is therefore not introduced as an additional network or post-processing stage, but derived from the energy geometry already governing the neural dynamics. The resulting guarantees concern invariance of prescribed regions of the \emph{learned model state space} under certified classes of inputs; when bounded disturbances or model mismatch are explicitly characterized, the same construction extends to robust certificates.

Our contributions are threefold. \textbf{First}, we introduce a pH-EBM that combines the dissipative structure of port-Hamiltonian NODEs with a coercive, globally nonconvex modern-Hopfield Hamiltonian, separating boundedness at large state norm from the geometry required to represent complex operating regimes. \textbf{Second}, we derive explicit set-valued safety certificates from the learned energy, such as exact state-dependent admissible-input sets, state-independent robustness radii for entire energy regions, and robust extensions under bounded perturbations. We further show how the certified margin is governed by the geometry of the energy shell, dissipation normal to its boundary, and exposure of this direction to the input port, with local curvature providing tractable closed-form bounds. \textbf{Third}, we evaluate the framework across nonlinear system-identification and controlled-mechanics benchmarks spanning classical physical systems, nonconvex multi-attractor dynamics, adversarial out-of-distribution forcing, and high-dimensional flight identification. The experiments show that pH-EBM can offer competitive predictive accuracy, often outperforming state-of-the-art methods, while providing non-trivial safety guarantees by design. 

\begin{figure}
    \centering
    \includegraphics[width=1\linewidth]{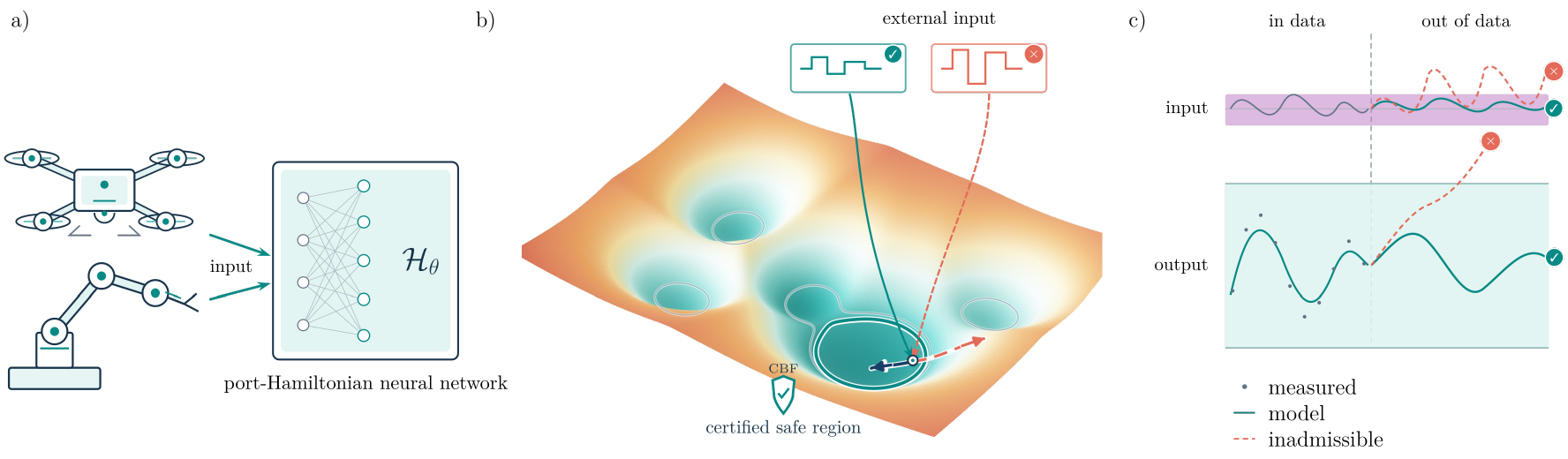}
    \caption{\textbf{\emph{Safe-by-design} Energy-Based Model.} a) Input-output trajectories from physical or simulated systems are used to train the pH-EBM: the external input $u$ drives the neural dynamics, while the measured output $y$ supervises reconstruction. b) The learned Hamiltonian, parameterized by a modern Hopfield energy, organizes the neural ODE into a continuous energy landscape whose low-energy regions encode stable operating regimes. Beyond interpolating between sparsely observed trajectories, this geometry identifies energy sublevel sets that can be certified as invariant for prescribed classes of inputs. c) Inputs lying within the certified admissible set therefore generate trajectories that remain confined to the desired operating region, turning the learned energy landscape into an explicit quantitative safety certificate.}
    \label{fig:graphical_abstract}
\end{figure}

\section{Problem formulation}

We consider the learning of an unknown controlled continuous-time 
dynamical system described by
\begin{align}
\label{eqn:SysDyn}
\dot{x}^\star(t)
&=
f^\star\bigl(x^\star(t),u(t)\bigr), \quad 
y(t)=h^\star\bigl(x^\star(t),u(t)\bigr)+v(t),
\end{align}
where $x^\star(t)\in\real^{n_\star}$ is the generally unobserved physical
state, $u(t)\in\mathcal U\subseteq\real^k$ is a known bounded external input,
$y(t)\in\real^p$ is the measured output, and $v$ denote measurement perturbations. Throughout the paper, we
assume that $f^\star$ satisfies the standard Carathéodory conditions, i.e., locally Lipschitz continuous in $x$ and locally essentially bounded over the admissible values of $u$. These assumptions guarantee the existence and uniqueness of forward-complete Carathéodory solutions~\citep{KHK:02}. Both the physical state $x^\star$ (including its dimensionality) and the vector field $f^\star$ are unknown, and the system can be observed only through sampled inputs $u$ and outputs $y$. The available dataset therefore consists of $M$ experiments, possibly collected from different unknown initial conditions and under different input signals, i.e.,
\begin{equation}
\label{eq:dataset}
\mathcal D
:=
\left\{
\left(
u^{(j)}(t_i),
y^{(j)}(t_i)
\right)_{i=0}^{N_j}
\right\}_{j=1}^{M}.
\end{equation}
Since the physical state $x^\star$ is unobserved, our objective is not necessarily to recover its original realization, but to learn a latent state-space model that reproduces the observed input-output behaviour and generalizes within a neighbourhood sufficiently covered by the data. 
Specifically, we seek a continuous-time neural state-space model of the form
\begin{align}
\label{eq:neurODE}
\dot z(t)
&=
f_\uptheta\bigl(z(t),u(t)\bigr), \quad 
\hat y(t)
=
h_\upphi\bigl(z(t),u(t)\bigr),
\end{align}
where $z(t)\in\real^d$ is the latent model state, $\uptheta$ and
$\upphi$ are trainable parameters, $h_{\upphi}$ is a continuous latent-to-output decoder function, and $\hat{y}(t) \in \real^p$ is the measurement of the latent state $z(t)$.  Note that $\hat{y}$ is in the same space as $y$ of System~(\ref{eqn:SysDyn}). The initial latent state associated
with each experiment is inferred jointly with the model parameters, or
produced by an appropriate encoder. Our objective is not merely to fit the trajectories in $\mathcal D$. Instead, we seek a \emph{safe-by-design} neural architecture. To give this notion a physical interpretation, let
$\mathcal Y_{\mathrm{safe}}\subseteq\real^p$ be a prescribed closed set
of outputs that satisfy the relevant operational safety requirements.
Safety of the learned model is established by jointly constructing
a robustly invariant latent set $\mathcal{Z}_{\mathrm{safe}}$ whose corresponding outputs remain in
$\mathcal Y_{\mathrm{safe}}$.

\begin{definition}[$\uprho$-Robust Invariant Sets]
\label{def:RobustInvariantSet}
Let $\uprho>0$ be an input-intensity threshold and define the associated input set  $u\in\mathcal{U}_\uprho:=\{u\in\mathcal{U}:\: \|u\|\leq \uprho\}$. We say that the compact set of latent states $\mathcal{C} \subset \mathbb{R}^d$ is $\uprho$-Robust invariant if:
\begin{itemize}
	\item  for all $z\in \mathcal{C}$ and $u\in\mathcal{U}_\uprho$  it holds that $h_{\upphi}(z,u) \in \mathcal Y_{\mathrm{safe}}$ and 
    \item for all $z(0)\in\mathcal{C}$ and input signals $u$ such that that $u(t)\in\mathcal{U}_{\uprho}$ for all $t>0$, it holds that $z(t)\in\mathcal{C}$ for all $t>0$.
\end{itemize}
\end{definition}

In this paper, we will establish results for the latent representations $z$, and then exploit the continuity of the decoder map $h_{\upphi}$ to extend safety to compacts in measurement space. Importantly, in our setting the geometry of $\mathcal{Z}_{\mathrm{safe}}$ is generally unknown before training. Since the latent representation $z$, its dynamics $f_\uptheta$, and the output map $h_\Phi$ are learned jointly from data, an invariant set in latent space $\mathcal{Z}_{\mathrm{safe}}$ cannot generally be prescribed a priori. It must instead be learned jointly with the neural state-space realization. This leads to the following problem:

\begin{problem}{\textbf{Learning and certifying neural ODEs.}}\label{prob:robust}
Given a dataset $\mathcal D$, a set $\mathcal{Z}_\mathrm{safe},$ and an input-intensity parameter $\uprho>0$, learn a neural ODE $\dot z(t)=f_\uptheta(z(t),u(t))$ with measured output $\hat y(t)=h_\upphi(z(t),u(t))$ such that for all $\{u(t)\in\mathcal{U}_{\uprho}\}_{t\geq0}$ returns a maximal $\uprho$-Robust Invariant Set $\mathcal{C}\subseteq \mathcal{Z}_{\mathrm{safe}}$.
\end{problem}

Many autonomous systems admit nonempty positively invariant regions, while
systems operating in closed loop with stabilising feedback often admit nonempty robust positively invariant regions under bounded disturbances
\citep{BF:99,BF-MS:15}. The objective of Problem~\ref{prob:robust} is therefore to endow an input-output model learned from data with this structural property: we seek to identify a latent state-space realization while simultaneously guaranteeing that a nontrivial latent safe set is robustly positively invariant. Solving Problem~\ref{prob:robust} is, however, particularly challenging because generic post-hoc methods for computing or certifying invariant sets for neural dynamics, based for instance on gridding, mixed-integer programming, or SMT solving, are computationally demanding and scale poorly with the state dimension and network size~\citep{AA-AD-EA:21,DC-GS-FC:23}. We therefore pursue a complementary route and consider neural architectures that admit a guaranteed $\uprho$-robust invariant set by construction. In particular, in Section~\ref{sec:safedesphNODE} we introduce a
port-Hamiltonian neural architecture whose Hamiltonian sublevel
components provide barrier certificates under explicit dissipation and
input conditions. The barrier certificates ultimately allow us to characterize safe $\uprho$-robustly forward-invariant sets. In Section~\ref{sec:experiments} we instead present the performances of our pH-EBM model in complex dynamics learning tasks, displaying how the model achieves state-of-the-art accuracy and provides explicit safety certificates, using drones' bounded long time rollouts as a paradigmatic proof of concept. 

\section{Safe-by-design port-Hamiltonian neural ODEs}
\label{sec:safedesphNODE}

The solution of Problem~\ref{prob:robust} with a neural ODE prompts the choice of a vector field structure that simultaneously combines two key ingredients: (i) a stability-inducing mechanism to confine trajectories and (ii) an input port. Thereby, following~\citep{RFJ-KDK-KM:25}, we specialize the neural state-space model in~\eqref{eq:neurODE} to the port-Hamiltonian vector field
\begin{equation}
    \dot z
    = f_{\uptheta}(z,u)
    = \left[J_{\uptheta_J}(z)-R_{\uptheta_R}(z)\right]
      \nabla\mathcal H_{\uptheta_H}(z)
      +G_{\uptheta_G}(z)u,
    \label{eq:ph_dynamics}
\end{equation}
where $z\in\real^d$, $u\in\mathcal U\subseteq\real^k$, and $\mathcal H_{\uptheta_H}:\real^d\to\real$ is a continuously differentiable Hamiltonian. Moreover, $J_{\uptheta_J}(z)=-J_{\uptheta_J}(z)^\top\in\real^{d\times d}$ is skew-symmetric,
$0\preceq R_{\uptheta_R}(z)=R_{\uptheta_R}(z)^\top\in\real^{d\times d}$ is symmetric positive semidefinite, and $G_{\uptheta_G}(z)\in\real^{d\times k}$ is the input map. The symmetry constraints on $J_{\uptheta_J}$ and $R_{\uptheta_R}$ are enforced directly through their parameterizations. We assume that $J_{\uptheta_J}$, $R_{\uptheta_R}$, $G_{\uptheta_G}$, and $\nabla\mathcal H_{\uptheta_H}$ are locally Lipschitz in $z$ and that
$u:[0,\infty)\to\mathcal U$ is locally essentially bounded. Consequently,~\eqref{eq:ph_dynamics} admits a unique Carathéodory solution on its maximal interval of existence. Equation~\ref{eq:ph_dynamics} is a standard port-Hamiltonian ODE and satisfies the energy balance~\citep{VDSA:07,WJC:07}
\begin{equation}
\begin{aligned}
\frac{\mathrm d}{\mathrm dt}\mathcal H_{\uptheta_H}(z)
&=
-\nabla\mathcal H_{\uptheta_H}(z)^\top
R_{\uptheta_R}(z)
\nabla\mathcal H_{\uptheta_H}(z)+
\nabla\mathcal H_{\uptheta_H}(z)^\top
G_{\uptheta_G}(z)u.
\end{aligned}
\label{eq:ph_energy_balance}
\end{equation}
Thus, skew-symmetry of $J_{\uptheta_J}$ makes the interconnection term energy preserving, positive semidefiniteness of $R_{\uptheta_R}$ makes the internal dynamics dissipative, and $G_{\uptheta_G}u$ accounts for energy exchanged with the environment. 

Importantly, this balance does not require convexity of the Hamiltonian. Consequently, we now define a non-convex Hamiltonian, which provides the key architectural ingredient for constructing compact confinement regions in the latent space. In particular, we employ the hybrid modern Hopfield Hamiltonian introduced in~\citep{BS-LL:26}:
\begin{equation}
    \mathcal H_{\uptheta_H}(z)
    =
    \frac12\|z\|_2^2
    -\sum_{h=1}^L g_h(z)^\top b_h
    -\sum_{h=2}^{L}
      \mathcal F_h\!\left(g_h(z)\right),
    \label{eq:hybrid_hamiltonian}
\end{equation}
where $L\geq2$ is the number of layers, with widths $N_1=d,N_2,\ldots,N_L$. We set $g_1(z)=z$, $\Psi_1(z)=z$ and recursively define the hidden feature vectors as
\begin{equation}
g_h(z)
=
W_{h(h-1)}
\Psi_{h-1}\!\left(g_{h-1}(z)\right),
\qquad h=2,\ldots,L.
\label{eq:hidden_features}
\end{equation}
Here,  $\mathcal F_h:\real^{N_h}\to\real$ are convex differentiable potentials and $\Psi_h=\nabla\mathcal F_h$ their continuous gradients, $W_{h(h+1)}\in\real^{N_h\times N_{h+1}}$ are interaction matrices between consecutive layers, with tied weights $W_{(h+1)h}=W_{h(h+1)}^\top$, and $b_h\in\real^{N_h}$ are bias vectors entering the Hamiltonian directly. The parameters $\uptheta_H$ collect the interaction matrices, biases, and any trainable parameters of the potentials. Notice that each $g_{h}(z)$ is obtained by composing all preceding layers, allowing increasingly complex features to be encoded throughout the architecture. In addition, each hidden layer $g_H$ is a standard MLP, to which the universal approximation theorem applies~\citep{HK-SM-WH:89}: consequently, the model can in principle approximate any vector field given sufficient data and parameters. The following lemma guarantees that a bounded first hidden activation is sufficient to preserve the quadratic growth of the Hamiltonian at infinity.

\begin{lemma}
\label{lemma:coercivity}
Assume that the first hidden activation $\Psi_2$ has bounded image. Then, $\mathcal H_{\uptheta_H}$ is coercive. That is, there exist constants $c_0,c_1\geq0$ such that
$\mathcal H_{\uptheta_H}(z)
\geq
\frac12\|z\|_2^2-c_1\|z\|_2-c_0.$ 
\end{lemma}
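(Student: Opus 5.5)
The plan is to bound each of the three groups of terms in \eqref{eq:hybrid_hamiltonian} from below, separately, by an affine function of $\|z\|_2$. The quadratic term $\frac12\|z\|_2^2$ is then left intact. The structure of \eqref{eq:hidden_features} with $\Psi_1(z)=z$ gives the key split:
\begin{itemize}
\item the first hidden feature $g_2(z)=W_{21}z$ is \emph{linear} in $z$;
\item every deeper feature $g_h$, $h\geq3$, is the image of a bounded quantity, because it factors through $\Psi_2$.
\end{itemize}
Throughout, let $M:=\sup_{w}\|\Psi_2(w)\|_2<\infty$, which is finite by the bounded-image assumption.

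\textbf{Step 1 (deep layers are bounded).} By induction on $h\geq3$, each $g_h$ takes values in a fixed compact set $K_h\subset\real^{N_h}$ independent of $z$.
\begin{itemize}
\item Base case: $g_3(z)=W_{32}\Psi_2(g_2(z))$ lies in the closed ball of radius $\|W_{32}\|M$.
\item Inductive step: if $g_{h-1}\in K_{h-1}$ compact, then continuity of $\Psi_{h-1}$ makes $\Psi_{h-1}(K_{h-1})$ compact, so $g_h=W_{h(h-1)}\Psi_{h-1}(g_{h-1})$ lies in a compact $K_h$.
\end{itemize}
Since each $\mathcal F_h$ is differentiable, hence continuous, $\mathcal F_h$ is bounded above on $K_h$. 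Likewise $g_h^\top b_h$ is bounded on $K_h$. Hence
\[
-\sum_{h=3}^L\bigl(g_h(z)^\top b_h+\mathcal F_h(g_h(z))\bigr)\geq -c_0'
\]
for a constant $c_0'\geq0$.

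\textbf{Step 2 (linear bias terms).} By Cauchy--Schwarz,
\[
-z^\top b_1-(W_{21}z)^\top b_2\geq -\bigl(\|b_1\|_2+\|W_{21}\|\,\|b_2\|_2\bigr)\|z\|_2 .
\]

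\textbf{Step 3 (the term $-\mathcal F_2(W_{21}z)$).} This is the main obstacle. The term enters with a minus sign, so convexity of $\mathcal F_2$, which only supplies \emph{lower} bounds, is of no help. Moreover, $\mathcal F_2$ is evaluated at an argument that grows linearly in $z$. I would instead use the boundedness of $\Psi_2=\nabla\mathcal F_2$ directly, through the mean value theorem along the segment from $0$ to $w$:
\[
\mathcal F_2(w)=\mathcal F_2(0)+\int_0^1\Psi_2(sw)^\top w\,\mathrm ds\leq \mathcal F_2(0)+M\|w\|_2 .
\]
Thus $\mathcal F_2$ is globally $M$-Lipschitz and grows at most linearly. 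Substituting $w=W_{21}z$ gives
\[
-\mathcal F_2(g_2(z))\geq -\mathcal F_2(0)-M\|W_{21}\|\,\|z\|_2 .
\]

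\textbf{Conclusion.} Summing Steps 1--3 yields $\mathcal H_{\uptheta_H}(z)\geq\frac12\|z\|_2^2-c_1\|z\|_2-c_0$ with
\[
c_1=\|b_1\|_2+\|W_{21}\|\bigl(\|b_2\|_2+M\bigr),\qquad c_0=c_0'+\max\{\mathcal F_2(0),0\}.
\]
Coercivity then follows, since the quadratic term dominates the affine terms as $\|z\|_2\to\infty$. If $\mathcal F_2$ is only differentiable rather than $C^1$, the integral formula is replaced by the scalar mean value theorem applied to $s\mapsto\mathcal F_2(sw)$, which gives the same bound.
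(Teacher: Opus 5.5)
Your proof is correct and follows essentially the same route as the paper's appendix argument. Both bound $\mathcal F_2(W_{21}z)$ by $\mathcal F_2(0)+M\|W_{21}\|\,\|z\|_2$ via the mean-value inequality, use recursive compactness of $g_h$ for $h\geq3$ to bound $\mathcal F_h(g_h)+g_h^\top b_h$ by constants, and control the linear bias terms by Cauchy--Schwarz, arriving at the same $c_1$ (your bookkeeping with $\|b_h\|_2$ and $\max\{\mathcal F_2(0),0\}$ is slightly cleaner than the paper's).
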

Lemma \ref{lemma:coercivity} follows by the boundedness of $\Psi_2=\nabla\mathcal F_2,$ which implies that
$\mathcal F_2$ grows at most linearly, while all feature vectors $g_{h}(z)$ with $h\geq3$ remain bounded. Hence, the negative terms in~\eqref{eq:hybrid_hamiltonian} grow at most linearly in $\|z\|_2$ and are dominated by its quadratic leading term. Coercivity and continuity of $\mathcal H_{\uptheta_H}$ guarantee that its sublevel sets are compact despite the potentially nonconvex hidden-layer contributions. This is a key property we rely on in the following subsection to construct $\uprho$-Robust Invariant Sets.

\subsection{$\uprho$-Robust Invariant Set Computation}
\label{sec:rob_inv}

To characterize $\uprho$-Robust Invariant Sets for \eqref{eq:ph_dynamics}, we rely on barrier certificates
\citep{AAD-XX-GJW:17}. In particular, we show that an isolated local minimum of $\mathcal H_{\uptheta_H}$ induces a compact energy sublevel component whose robustness to external inputs can be explicitly
quantified. To show that, we let $z_\star$ be an isolated local minimizer of $\mathcal H_{\uptheta_H}$ and define the shifted energy $V_\star(z):=\mathcal H_{\uptheta_H}(z)-\mathcal H_{\uptheta_H}(z_\star)$.
For $\epsilon>0$, consider
\begin{equation}
    \mathcal C_{\epsilon,\star}
    :=
    \operatorname{Comp}_{z_\star}
    \left\{z:h_{\epsilon,\star}(z)\geq0\right\},
    \qquad
    h_{\epsilon,\star}(z)
    :=
    \epsilon-V_\star(z),
    \qquad
    \Gamma_{\epsilon,\star}
    :=
    \partial\mathcal C_{\epsilon,\star},
    \label{eq:component_safe_set}
\end{equation}
where $\operatorname{Comp}_{z_\star}$ denotes the connected component containing $z_\star$. Since $\mathcal H_{\uptheta_H}$ is coercive, $\mathcal C_{\epsilon,\star}$ is compact. We assume that $\mathcal H_{\uptheta_H}(z_\star)=\epsilon$ is a regular value of $\mathcal H_{\uptheta_H}$, so that $\nabla\mathcal H_{\uptheta_H}(z)\neq0$ for every $z\in\Gamma_{\epsilon,\star}$. Given a norm $\|\cdot\|$ on the input space, we denote the corresponding dual norm by $\|\cdot\|_*$ and define
\[
d_H(z)
:=
\nabla\mathcal H_{\uptheta_H}(z)^\top
R_{\uptheta_R}(z)
\nabla\mathcal H_{\uptheta_H}(z)
\geq0,
\qquad
a_H(z)
:=
G_{\uptheta_G}(z)^\top
\nabla\mathcal H_{\uptheta_H}(z)
\in\real^k.
\]
Then, in Theorem \ref{thm:rob_inv} below, we rely on the fact that along the dynamics~\eqref{eq:ph_dynamics} it holds that
\begin{equation}
	\dot h_{\epsilon,\star}(z,u)
=
d_H(z)-a_H(z)^\top u
\geq
d_H(z)-\|a_H(z)\|_*\|u\|,
\end{equation}
to prove that $h_{\epsilon,\star}(z,u)$ is a valid barrier function for System \ref{eq:ph_dynamics}.
\begin{theorem}[\textbf{$\uprho$-robust invariant energy sets}]
\label{thm:rob_inv}
Let $\mathcal H_{\uptheta_H}$ be defined
in~\eqref{eq:hybrid_hamiltonian}, and consider the port-Hamiltonian
dynamics in~\eqref{eq:ph_dynamics}. For every
$z\in\Gamma_{\epsilon,\star}$, define
\begin{equation}
\uprho_{\mathrm{BF}}(z)
:=
\begin{cases}
\tfrac{d_H(z)}{\|a_H(z)\|_*},
& a_H(z)\neq0,\\[1.2ex]
+\infty,
& a_H(z)=0.
\end{cases}
\label{eq:pointwise_norm_bound}
\end{equation}
Then the vector field is inward-pointing or tangent to
$\mathcal C_{\epsilon,\star}$ at $z$ for every input satisfying
$\|u\|\leq\uprho_{\mathrm{BF}}(z)$. Consequently,
$\mathcal C_{\epsilon,\star}$ in~\eqref{eq:component_safe_set} is an
$\uprho_{\epsilon,\star}$-robust invariant set, where
\begin{equation}
	\uprho_{\epsilon,\star}
:=
\inf_{z\in\Gamma_{\epsilon,\star}}
\uprho_{\mathrm{BF}}(z).
\label{eq:uniform_robust_radius}
\end{equation}
\end{theorem}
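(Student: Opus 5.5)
The plan has three parts: a pointwise boundary condition, a Nagumo-type invariance argument, and a check of the output requirement. The main obstacle will be the invariance step in the borderline case $\uprho_{\epsilon,\star}=0$ or where the inequality is only non-strict.

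\textbf{Pointwise condition.} Fix $z\in\Gamma_{\epsilon,\star}$. By the regular-value assumption, $\nabla\mathcal H_{\uptheta_H}(z)\neq0$, so the outward normal to $\mathcal C_{\epsilon,\star}$ at $z$ is $\nabla\mathcal H_{\uptheta_H}(z)$. Since $\nabla h_{\epsilon,\star}=-\nabla\mathcal H_{\uptheta_H}$, the energy balance \eqref{eq:ph_energy_balance} gives
\[
\nabla h_{\epsilon,\star}(z)^\top f_\uptheta(z,u)=d_H(z)-a_H(z)^\top u\geq d_H(z)-\|a_H(z)\|_*\|u\|,
\]
using skew-symmetry of $J$ and the definition of the dual norm. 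If $a_H(z)=0$, the right-hand side equals $d_H(z)\geq0$ for every $u$. Otherwise $\|u\|\leq d_H(z)/\|a_H(z)\|_*$ makes it nonnegative. In both cases the vector field is inward-pointing or tangent to $\mathcal C_{\epsilon,\star}$ at $z$.

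\textbf{Invariance.} I take $\uprho\le\uprho_{\epsilon,\star}$ and an input with $u(t)\in\mathcal U_\uprho$ for all $t$, so $\dot h_{\epsilon,\star}\geq0$ whenever $z(t)\in\Gamma_{\epsilon,\star}$.

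First I reduce to a strict inequality by a perturbation argument. The simplest approach is to show the implication for every $\uprho'<\uprho_{\epsilon,\star}$. Then I perturb the input to $(1-\delta)u$, or add a small dissipative term, and let $\delta\to0$ using continuous dependence of Carathéodory solutions on the input over compact time intervals. The limit is valid because $\mathcal C_{\epsilon,\star}$ is closed.

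In the strict case, where $\dot h_{\epsilon,\star}>0$ on $\Gamma_{\epsilon,\star}$, I argue by contradiction. Let $t^\ast=\sup\{t: z(s)\in\mathcal C_{\epsilon,\star}\ \forall s\le t\}$ be the first exit time. Then $z(t^\ast)\in\Gamma_{\epsilon,\star}$. By continuity of $\nabla\mathcal H_{\uptheta_H}$, $R$ and $G$, together with essential boundedness of $u$, the function $h_{\epsilon,\star}(z(t))$ is absolutely continuous with derivative bounded below by a positive constant for a.e. $t$ near $t^\ast$. Hence $h_{\epsilon,\star}$ cannot become negative just after $t^\ast$.

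The trajectory also cannot leave $\mathcal C_{\epsilon,\star}$ into another connected component of $\{h_{\epsilon,\star}\ge0\}$. Because $\epsilon$ is a regular value, the components are separated by the level set, so any exit forces $h_{\epsilon,\star}<0$ at some time, which the previous step rules out. Compactness of $\mathcal C_{\epsilon,\star}$ (Lemma~\ref{lemma:coercivity}) then gives boundedness of the trajectory, hence forward completeness, so $z(t)\in\mathcal C_{\epsilon,\star}$ for all $t>0$.

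\textbf{Output requirement.} The first item of Definition~\ref{def:RobustInvariantSet} is understood to hold by choosing $\epsilon$ so that $h_\upphi(\mathcal C_{\epsilon,\star}\times\mathcal U_\uprho)\subseteq\mathcal Y_{\mathrm{safe}}$. This is well posed because $h_\upphi$ is continuous and $\mathcal C_{\epsilon,\star}\times\mathcal U_\uprho$ is compact. I take it as an assumption implicit in the statement.
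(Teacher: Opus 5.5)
Your proof is correct, and for the invariance step it takes a genuinely different route from the paper. The paper makes the same pointwise computation you do: $\dot h_{\epsilon,\star}=d_H-a_H^\top u\ge d_H-\|a_H\|_*\|u\|\ge0$ on $\Gamma_{\epsilon,\star}$. It then invokes Nagumo's tangent-cone condition as a black box (Appendix~\ref{app:barrier_primer} and Theorem~\ref{thm:uniform_boundary_set}(i)). It uses regularity of the shell only to identify the tangent cone of $\mathcal C_{\epsilon,\star}$ with the half-space $\{v:\nabla\mathcal H(z)^\top v\le0\}$.

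You instead reprove the Nagumo-type conclusion by hand: strictify, run a first-exit-time contradiction, and pass to the limit by continuous dependence. The paper's route is shorter. Yours makes visible exactly where each hypothesis enters:
\begin{itemize}
\item local Lipschitz continuity of $J,R,G,\nabla\mathcal H$, which gives uniqueness and justifies the limit (non-strict tangency genuinely needs this);
\item local essential boundedness of the measurable input;
\item compactness from Lemma~\ref{lemma:coercivity}, which gives forward completeness;
\item connectedness, i.e.\ that the trajectory stays in the component $\mathcal C_{\epsilon,\star}$ and not merely in $\{h_{\epsilon,\star}\ge0\}$, which the paper never addresses.
\end{itemize}
Your observation that the output bullet of Definition~\ref{def:RobustInvariantSet} must be taken as an assumption is also fair; the paper's proof is silent on it.

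Two refinements.

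First, only your second perturbation is sound in general. Scaling to $(1-\delta)u$, or working with $\uprho'<\uprho_{\epsilon,\star}$, does not produce $\dot h_{\epsilon,\star}>0$ on $\Gamma_{\epsilon,\star}$. Because $R$ is only positive semidefinite, $d_H$ may vanish at a boundary point. If $a_H$ also vanishes there, the margin is zero for every input. If $a_H$ does not vanish there, then $\uprho_{\epsilon,\star}=0$ and the scaled input is still zero. Replacing $R$ by $R+\delta I$ fixes this: it adds $\delta\|\nabla\mathcal H(z)\|_2^2>0$ on $\Gamma_{\epsilon,\star}$ by regularity. The limit $\delta\to0$ is then justified by Gronwall on compact subintervals of the nominal existence interval.

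Second, the component argument does not need the regular-value hypothesis. The set $z([0,t])$ is connected, so if it lies in $\{h_{\epsilon,\star}\ge0\}$ and contains $z(0)\in\mathcal C_{\epsilon,\star}$, it lies in $\mathcal C_{\epsilon,\star}$.
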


\begin{proof}
For every $z\in\Gamma_{\epsilon,\star}$, the barrier function condition implies $\dot h_{\epsilon,\star}(z,u)\geq0$ whenever $\|u\|\leq\uprho_{\mathrm{BF}}(z)$. The uniform radius in~\eqref{eq:uniform_robust_radius} therefore makes the vector field
inward-pointing or tangent at every boundary point. Invariance
then follows from the standard barrier-certificate conditions introduced in Appendix~\ref{app:barrier_primer}.
\end{proof}

Theorem~\ref{thm:rob_inv} provides an explicit robustness radius
quantifying the tolerance of each energy component
$\mathcal C_{\epsilon,\star}$ to external inputs. 

\begin{remark}
The construction extends directly to bounded additive perturbations.
In particular, consider $\dot z=f_{\uptheta}(z,u)+w$, $w\in\mathcal W(z)$ where $\mathcal W(z)$ is nonempty and compact, and define its support
function as $\upsigma_{\mathcal W(z)}(q):=\sup_{w\in\mathcal W(z)}q^\top w.$ 
In this case, $\dot h_{\epsilon,\star}\geq d_H(z)-\|a_H(z)\|_*\|u\|-\upsigma_{\mathcal W(z)}\bigl(\nabla\mathcal H_{\uptheta_H}(z)\bigr)$.
Hence, provided that $d_H(z)\geq \upsigma_{\mathcal W(z)}\bigl(\nabla\mathcal H_{\uptheta_H}(z)\bigr)$ for all $z\in\Gamma_{\epsilon,\star}$, the disturbance-robust radius is
\begin{equation}
\uprho_{\epsilon,\star}^{\mathcal W}
:=
\inf_{\substack{z\in\Gamma_{\epsilon,\star}\\a_H(z)\neq0}}
\tfrac{
d_H(z)
-\upsigma_{\mathcal W(z)}
 \bigl(\nabla\mathcal H_{\uptheta_H}(z)\bigr)
}{
\|a_H(z)\|_*
},
\label{eq:disturbance_robust_radius}
\end{equation}
with the quotient interpreted as $+\infty$ when $a_H(z)=0$.
\end{remark} 

\begin{remark}
The norm-ball certificate can also be generalized to arbitrary compact
convex input sets. In particular, a compact convex set $\mathcal U$ is
robustly admissible if ${\small \upsigma_{\mathcal U}(a_H(z))+\upsigma_{\mathcal W(z)}(\nabla\mathcal H_{\uptheta_H}(z))\leq d_H(z)}$, for all $z\in\Gamma_{\epsilon,\star}$. Equivalently, the largest input envelope certified by this condition is 
\begin{equation}
	\overline{\mathcal U}_{\epsilon,\star}^{\mathrm{rob}}:=\mathcal U\cap_{z\in\Gamma_{\epsilon,\star}}\{u\in\real^k:a_H(z)^\top u\leq d_H(z)-\upsigma_{\mathcal W(z)}(\nabla\mathcal H_{\uptheta_H}(z))\}
\end{equation}
This set is convex because it is the intersection of $\mathcal U$ with a family of half-spaces. Thus, the construction accommodates norm
balls, boxes, polytopes, ellipsoids, and asymmetric actuator
constraints.
\end{remark}

\begin{remark} A consequence of Theorem \ref{thm:input_energy} is that if $\mathcal H_{\uptheta_H}$ has finitely many
isolated local minima $\{z_i^\star\}_{i=1}^M$, repeating the
construction for suitable regular energy levels $\epsilon_i$ gives the
robustness atlas
\begin{equation}
	 \mathfrak A
:=\{(
\mathcal C_{\epsilon_i,i},
\overline{\mathcal U}_{\epsilon_i,i}^{\mathrm{rob}},
\uprho_{\epsilon_i,i}
)
\}_{i=1}^M,
\end{equation}
which assigns a certified operating envelope to each learned dynamical
regime.
\end{remark}

\subsection{Computable robustness bounds from local geometry}
\label{sec:computable_robustness}

The exact robust radius in~\eqref{eq:uniform_robust_radius} requires minimizing a state-dependent ratio over the generally nonconvex boundary $\Gamma_{\epsilon,\star}$. Although exact, this characterization
may be computationally demanding and does not explicitly reveal how the
robustness margin depends on the local geometry of the learned model. We
therefore derive a conservative lower bound in terms of the steepness of
the energy shell, dissipation in its normal direction, and exposure of the normal direction to the input port. We then bound the shell-steepness term using a local Polyak-\L{}ojasiewicz condition and derive an input-to-energy estimate under persistent forcing.

For readability, in what follows, we suppress the parameter subscripts in
$\mathcal H_{\uptheta_H}$, $R_{\uptheta_R}$, and $G_{\uptheta_G}$.
We restrict attention to energy levels for which
$\mathcal H(z_\star)+\epsilon$ is a regular value of $\mathcal H$, so
that $\nabla\mathcal H(z)\neq0$ on $\Gamma_{\epsilon,\star}$ and the
outward energy-normal direction $\upeta(z):=\frac{\nabla\mathcal H(z)}{\|\nabla\mathcal H(z)\|_2}$ is well defined. Since $\Gamma_{\epsilon,\star}$ is compact, regularity also implies that the gradient norm is uniformly bounded away from zero on the boundary. We now define the following quantities
\begin{equation}
r_{\epsilon,\star}
:=
\inf_{z\in\Gamma_{\epsilon,\star}}
\upeta(z)^\top R(z)\upeta(z),\quad
\bar g_{\epsilon,\star}^{\perp}
:=
\sup_{z\in\Gamma_{\epsilon,\star}}
\|G(z)^\top\upeta(z)\|_*,\quad
\kappa_{\epsilon,\star}
:=
\inf_{z\in\Gamma_{\epsilon,\star}}
\|\nabla\mathcal H(z)\|_2.
\label{eq:local_geometric_quantities}
\end{equation}
Here, $r_{\epsilon,\star}$ measures normal dissipation,
$\bar g_{\epsilon,\star}^{\perp}$ measures exposure of the energy shell
to the input port, and $\kappa_{\epsilon,\star}$ measures
the minimum steepness of the energy shell. Regularity and compactness
imply $\kappa_{\epsilon,\star}>0$, whereas
$r_{\epsilon,\star}>0$ is an additional dissipativity assumption:
positive semidefiniteness of $R$ alone allows the dissipation to vanish
in the normal direction. We are now ready to state Theorem \ref{thm:geometric_robustness_bound}. 
\begin{theorem}[Geometric lower bound on the robustness radius]
\label{thm:geometric_robustness_bound}
Suppose that $r_{\epsilon,\star}>0$ and
$\bar g_{\epsilon,\star}^{\perp}\neq 0$. Then
\begin{equation}
\uprho_{\epsilon,\star}
\geq
\underline{\uprho}_{\epsilon,\star}^{\mathrm{geo}}
:=
\frac{
r_{\epsilon,\star}\kappa_{\epsilon,\star}
}{
\bar g_{\epsilon,\star}^{\perp}
}.
\label{eq:geometric_robustness_bound}
\end{equation}
Consequently, $\mathcal C_{\epsilon,\star}$ is 
a $\underline{\uprho}_{\epsilon,\star}^{\mathrm{geo}}$-robust invariant set.
\end{theorem}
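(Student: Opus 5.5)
The plan is a pointwise lower bound on $\uprho_{\mathrm{BF}}(z)$ along $\Gamma_{\epsilon,\star}$, followed by taking the infimum and invoking Theorem~\ref{thm:rob_inv}.

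\textbf{Step 1: homogeneity.} Fix $z\in\Gamma_{\epsilon,\star}$. By regularity, $\nabla\mathcal H(z)\neq0$, so we can write $\nabla\mathcal H(z)=\|\nabla\mathcal H(z)\|_2\,\upeta(z)$. Both $d_H$ and $a_H$ then factor through the gradient norm:
\[
d_H(z)=\|\nabla\mathcal H(z)\|_2^2\,\upeta(z)^\top R(z)\upeta(z),\qquad
\|a_H(z)\|_*=\|\nabla\mathcal H(z)\|_2\,\|G(z)^\top\upeta(z)\|_* .
\]
The second identity uses absolute homogeneity of the dual norm.

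\textbf{Step 2: the case $a_H(z)\neq0$.} Dividing the two identities gives
\[
\uprho_{\mathrm{BF}}(z)=\|\nabla\mathcal H(z)\|_2\,\frac{\upeta(z)^\top R(z)\upeta(z)}{\|G(z)^\top\upeta(z)\|_*}.
\]
Bound the numerator below by $r_{\epsilon,\star}>0$, the gradient-norm factor below by $\kappa_{\epsilon,\star}$, and the denominator above by $\bar g^{\perp}_{\epsilon,\star}$. The denominator is positive here and the supremum is nonzero by hypothesis, so $0<\|G(z)^\top\upeta(z)\|_*\le\bar g^\perp_{\epsilon,\star}$. All factors are nonnegative, which yields $\uprho_{\mathrm{BF}}(z)\ge r_{\epsilon,\star}\kappa_{\epsilon,\star}/\bar g^\perp_{\epsilon,\star}$.

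\textbf{Step 3: the case $a_H(z)=0$.} Here $\uprho_{\mathrm{BF}}(z)=+\infty$ by definition, so the bound holds trivially.

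\textbf{Step 4: conclusion.} Taking the infimum over $\Gamma_{\epsilon,\star}$ gives $\uprho_{\epsilon,\star}\ge\underline{\uprho}^{\mathrm{geo}}_{\epsilon,\star}$. Since $\mathcal U_{\uprho}$ is monotone in $\uprho$, any input with $\|u\|\le\underline{\uprho}^{\mathrm{geo}}_{\epsilon,\star}$ also satisfies $\|u\|\le\uprho_{\mathrm{BF}}(z)$ at every boundary point. The inward-pointing/tangent conclusion of Theorem~\ref{thm:rob_inv}, together with the barrier argument of Appendix~\ref{app:barrier_primer}, then gives invariance of $\mathcal C_{\epsilon,\star}$. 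The output-safety requirement of Definition~\ref{def:RobustInvariantSet} is inherited in the same way it is in Theorem~\ref{thm:rob_inv}.

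\textbf{Main obstacle.} The algebra is routine; the delicate points are bookkeeping. One is confirming that $\kappa_{\epsilon,\star}>0$, which follows from compactness of $\Gamma_{\epsilon,\star}$ (a closed subset of the compact set $\mathcal C_{\epsilon,\star}$) and continuity of $\nabla\mathcal H$. The other is handling points where $G(z)^\top\upeta(z)=0$ without dividing by zero, which Step 3 covers.
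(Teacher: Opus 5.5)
Your proposal is correct and follows the paper's own argument in Appendix~\ref{app:regular_boundary}. Both proofs factor $d_H$ and $\|a_H\|_*$ through $\|\nabla\mathcal H(z)\|_2$ via the unit normal $\upeta(z)$, bound the three resulting factors pointwise by $r_{\epsilon,\star}$, $\kappa_{\epsilon,\star}$ and $\bar g^{\perp}_{\epsilon,\star}$, and then take the infimum. The only differences are that the paper carries a disturbance term $\upsigma_{\mathcal W(z)}(\upeta(z))$ through the same computation and recovers this statement by setting $\mathcal W(z)=\{0\}$, and that you treat the nominal case directly and handle the $a_H(z)=0$ points explicitly.
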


The proof is provided in Appendix~\ref{app:regular_boundary}. The theorem guarantees that a large overall input gain is not necessarily
detrimental: only its component normal to the energy shell consumes the
available robustness margin.

We next lower-bound $\kappa_{\epsilon,\star}$ using a local
Polyak--\L{}ojasiewicz (PL) condition
\citep{KH-NJ-SM:16,PY-CF-ZL:23}. In our framework, the Hamiltonian is a known scalar, while its gradient is implicitely defined: the local PL condition bounds the growth $\nabla\mathcal{H}$ on a shell $\Gamma_{\epsilon,\star}$with the specified energy budget $\epsilon>0$, and therefore readily provides a local estimate of the $\uprho_{\epsilon,\star}$-robustness radius. Since
$\mathcal H_{\uptheta_H}$ is $C^1$ with locally Lipschitz gradient, the
appropriate local curvature object is its Clarke generalized Hessian,
$\partial_C^2\mathcal H_{\uptheta_H}
:=\partial_C(\nabla\mathcal H_{\uptheta_H})$.

\begin{proposition}[Local Clarke-Hessian certificate for PL]
\label{prop:hessian_pl}
Let $\nabla\mathcal H_{\uptheta_H}(z_\star)=0$, and suppose that
$\mathcal H_{\uptheta_H}$ is $C^1$ with locally Lipschitz gradient in
a neighbourhood of $z_\star$. Define $m_\star:=\min_{Q\in\partial_C^2\mathcal H_{\uptheta_H}(z_\star)}\uplambda_{\min}(Q)$. If $m_\star>0$, then, for every $0<\upmu_\star<m_\star$, there exists a convex neighbourhood $\mathcal N_\updelta$ of $z_\star$ on which
\begin{equation}
\frac12\|\nabla\mathcal H_{\uptheta_H}(z)\|_2^2
\geq
\upmu_\star
\left(
\mathcal H_{\uptheta_H}(z)
-\mathcal H_{\uptheta_H}(z_\star)
\right).
\label{eq:clarke_pl_certificate}
\end{equation}
\end{proposition}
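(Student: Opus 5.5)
The plan is to turn the Clarke-Hessian positivity at $z_\star$ into a local strong-monotonicity (hence strong-convexity-type) estimate for $\nabla\mathcal H$ near $z_\star$, and then derive the PL inequality from the two standard consequences of strong convexity: a quadratic upper bound on the gap $\mathcal H-\mathcal H(z_\star)$ and a quadratic lower bound on $\|\nabla\mathcal H\|_2^2$.

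\emph{Uniform positivity of nearby generalized Hessians.} The Clarke generalized Jacobian $\partial_C(\nabla\mathcal H)$ is nonempty, compact and convex valued, and it is upper semicontinuous as a set-valued map. The function $Q\mapsto\uplambda_{\min}(Q)$ is continuous, and the minimum over $\partial_C^2\mathcal H(z_\star)$ equals $m_\star>0$. Fix $\upmu_\star<m'<m_\star$. By upper semicontinuity there is $\updelta>0$ such that, for all $z$ in the ball $\mathcal N_\updelta=B(z_\star,\updelta)$, every $Q\in\partial_C^2\mathcal H(z)$ lies within a small distance of $\partial_C^2\mathcal H(z_\star)$. It follows that $\uplambda_{\min}(Q)\geq m'$ for all such $Q$. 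The same upper semicontinuity, applied with a second margin, also yields a uniform bound $\|Q\|_2\le M$ on the ball. The ball is convex, which is why we work with it.

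\emph{Mean-value step, the main obstacle.} For $x,y\in\mathcal N_\updelta$, Clarke's mean value theorem for the locally Lipschitz map $\nabla\mathcal H$ gives
\[\nabla\mathcal H(x)-\nabla\mathcal H(y)\in\overline{\mathrm{co}}\,\partial_C^2\mathcal H([x,y])\,(x-y).\]
The segment $[x,y]$ stays in the ball by convexity. Every matrix in that convex hull has $\uplambda_{\min}\ge m'$, since the set of symmetric matrices with $\uplambda_{\min}\ge m'$ is convex and closed. The Clarke Jacobian of a gradient may contain nonsymmetric elements, but only the quadratic form $v^\top Qv$ matters, so we read $\uplambda_{\min}$ on the symmetric part. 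This gives strong monotonicity:
\[(\nabla\mathcal H(x)-\nabla\mathcal H(y))^\top(x-y)\ge m'\|x-y\|_2^2.\]
By the standard argument, integrating along segments, $\mathcal H$ is $m'$-strongly convex on $\mathcal N_\updelta$. The care needed here is in justifying the mean-value inclusion and the convex-hull positivity.

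\emph{Deriving the PL inequality.} Strong convexity on the convex set gives, for $z\in\mathcal N_\updelta$,
\[\mathcal H(z_\star)\ge\mathcal H(z)+\nabla\mathcal H(z)^\top(z_\star-z)+\tfrac{m'}2\|z_\star-z\|_2^2.\]
Minimizing the right-hand side over the free vector $w=z_\star-z$ gives the lower bound $\mathcal H(z)-\tfrac1{2m'}\|\nabla\mathcal H(z)\|_2^2$. The unconstrained minimization only lowers the bound, so it remains valid. Hence
\[\tfrac12\|\nabla\mathcal H(z)\|_2^2\ge m'\bigl(\mathcal H(z)-\mathcal H(z_\star)\bigr)\ge\upmu_\star\bigl(\mathcal H(z)-\mathcal H(z_\star)\bigr).\]
The last step uses $\mathcal H(z)\ge\mathcal H(z_\star)$, which holds because $\nabla\mathcal H(z_\star)=0$ and strong convexity make $z_\star$ the minimizer on $\mathcal N_\updelta$. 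This establishes \eqref{eq:clarke_pl_certificate}. Since any $m'\in(\upmu_\star,m_\star)$ works, the neighbourhood depends only on the margin $m_\star-\upmu_\star$.
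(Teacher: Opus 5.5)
Your proposal is correct and follows the same route as the paper. Upper semicontinuity of the Clarke generalized Hessian gives a uniform eigenvalue margin on a ball around $z_\star$; this yields $\mathcal H(z_\star)\geq\mathcal H(z)+\nabla\mathcal H(z)^\top(z_\star-z)+\tfrac{m'}{2}\|z_\star-z\|_2^2$, and optimizing the quadratic in $z-z_\star$ gives the PL bound. Your Clarke mean-value and strong-monotonicity step properly justifies what the paper calls a ``Taylor expansion'' with a generalized Hessian at the endpoint $z$, and your quantifier order (fix $\upmu_\star$, then choose the margin and $\updelta$) matches the statement more faithfully than the paper's proof does.
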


The proposition is strictly local and leaves the global Hamiltonian free
to be nonconvex and multi-well. When $\mathcal H$ is $C^2$ near
$z_\star$, it reduces to the classical positive-definite Hessian test.
The proof and activation-specific formulas are provided in
Appendix~\ref{app:activation_pl}. To translate the PL condition into a robustness bound, the selected component must satisfy $\mathcal C_{\epsilon,\star}\subseteq\mathcal N_\updelta$. We also require uniform dissipation along the energy gradient, i.e., the existence of $r_\star>0$ such that $d_H(z)\geq r_\star\|\nabla\mathcal H(z)\|_2^2$ on $\mathcal N_\updelta$, which is satisfied for neighbourhood of isolated local minima with no flat directions.

\begin{corollary}[PL-based robustness bound]
\label{cor:pl_robustness_bound}
Suppose~\eqref{eq:clarke_pl_certificate} holds for some
$0<\upmu_\star<m_\star$. Then
$\kappa_{\epsilon,\star}\geq\sqrt{2\upmu_\star\epsilon}$ and
$r_{\epsilon,\star}\geq r_\star$. Consequently, if
$\bar g_{\epsilon,\star}^{\perp}>0$, then
\begin{equation}
\uprho_{\epsilon,\star}
\geq
\uprho_{\epsilon,\star}^{\mathrm{PL}}
:=
\frac{
r_\star\sqrt{2\upmu_\star\epsilon}
}{
\bar g_{\epsilon,\star}^{\perp}
}.
\label{eq:regular_boundary_radius_PL}
\end{equation}
\end{corollary}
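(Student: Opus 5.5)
The plan is to reduce the corollary to Theorem~\ref{thm:geometric_robustness_bound}: bound separately the two geometric quantities $\kappa_{\epsilon,\star}$ and $r_{\epsilon,\star}$ defined in~\eqref{eq:local_geometric_quantities}, and then substitute these bounds into~\eqref{eq:geometric_robustness_bound}. Throughout I use the standing assumptions stated just before the corollary: $\mathcal C_{\epsilon,\star}\subseteq\mathcal N_\updelta$, so that $\Gamma_{\epsilon,\star}\subseteq\mathcal N_\updelta$ because $\mathcal C_{\epsilon,\star}$ is closed, and $d_H(z)\geq r_\star\|\nabla\mathcal H(z)\|_2^2$ on $\mathcal N_\updelta$.

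\textbf{Step 1 (steepness).} First I show that every boundary point lies exactly on the level $\epsilon$ of $V_\star$. Take $z\in\Gamma_{\epsilon,\star}$. Since $\mathcal C_{\epsilon,\star}$ is a connected component of the closed set $\{V_\star\leq\epsilon\}$, we have $V_\star(z)\leq\epsilon$. Suppose $V_\star(z)<\epsilon$. By continuity, a small ball around $z$ lies in $\{V_\star<\epsilon\}$. That ball is connected and meets $\mathcal C_{\epsilon,\star}$, so it is contained in the component, and then $z$ is an interior point, which is a contradiction. Hence $V_\star(z)=\epsilon$. Applying the PL inequality~\eqref{eq:clarke_pl_certificate}, valid because $z\in\mathcal N_\updelta$, gives $\tfrac12\|\nabla\mathcal H(z)\|_2^2\geq\upmu_\star\epsilon$. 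Taking the infimum over $\Gamma_{\epsilon,\star}$ yields $\kappa_{\epsilon,\star}\geq\sqrt{2\upmu_\star\epsilon}$.

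\textbf{Step 2 (normal dissipation).} On $\Gamma_{\epsilon,\star}$ the gradient is nonzero by Step 1, since $\epsilon>0$. Therefore
\[
\upeta(z)^\top R(z)\upeta(z)=\frac{d_H(z)}{\|\nabla\mathcal H(z)\|_2^2}\geq r_\star .
\]
Taking the infimum gives $r_{\epsilon,\star}\geq r_\star>0$. In particular, this verifies the hypothesis $r_{\epsilon,\star}>0$ of Theorem~\ref{thm:geometric_robustness_bound}.

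\textbf{Step 3 (conclusion).} With $\bar g^{\perp}_{\epsilon,\star}>0$, Theorem~\ref{thm:geometric_robustness_bound} gives $\uprho_{\epsilon,\star}\geq r_{\epsilon,\star}\kappa_{\epsilon,\star}/\bar g^{\perp}_{\epsilon,\star}$. Since $r_{\epsilon,\star}$ and $\kappa_{\epsilon,\star}$ are nonnegative and enter the numerator monotonically, Steps 1--2 give $\uprho_{\epsilon,\star}\geq r_\star\sqrt{2\upmu_\star\epsilon}/\bar g^{\perp}_{\epsilon,\star}$. One should also check that $\bar g^{\perp}_{\epsilon,\star}<\infty$. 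This holds because $G$ and $\upeta$ are continuous on the compact set $\Gamma_{\epsilon,\star}$, where the gradient is bounded away from zero.

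The main obstacle is Step 1, namely the topological identification of the boundary with points where $V_\star=\epsilon$. It must be done with care, because $\operatorname{Comp}$ selects only one component of a possibly disconnected sublevel set of a nonconvex energy. The argument relies on the components of $\{V_\star\le\epsilon\}$ being closed and on the strict sublevel set being open. It also relies on the containment $\mathcal C_{\epsilon,\star}\subseteq\mathcal N_\updelta$, which is what makes the strictly local PL certificate of Proposition~\ref{prop:hessian_pl} applicable on the entire shell.
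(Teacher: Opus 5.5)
Your proposal is correct and follows essentially the same route as the paper. The appendix likewise applies the PL inequality on $\Gamma_{\epsilon,\star}$ to get $\kappa_{\epsilon,\star}\geq\sqrt{2\upmu_\star\epsilon}$, bounds $r_{\epsilon,\star}$ below by $r_\star$ via the uniform-dissipation assumption, and substitutes both into the normal robustness bound of Theorem~\ref{thm:geometric_robustness_bound}; your check that $V_\star=\epsilon$ on $\Gamma_{\epsilon,\star}\subseteq\mathcal N_\updelta$ makes explicit a step the paper leaves implicit.
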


The same local PL induced geometry gives more than invariance of one prescribed shell, and extends the idea of energy-driven convergence to energy-tube confiment of sollicited trajectories.
\begin{proposition}[Input-to-energy tube]
Let the uniform bounds $\|G(z)\|_2\leq\bar g$, and $\|u(t)\|_2\leq\bar u$ hold in the component $\mathcal{C}_{\epsilon,\star}$  and set $c:=\bar g\bar u$. If $c\leq r_\star\sqrt{2\upmu_\star\epsilon}$ then $\mathcal C_{\epsilon,\star}$ is a $\bar u$-Robust Invariant set. In particular,
\begin{equation}
    \liminf_{t\to\infty}h_{\epsilon,\star}(z(t))
    \geq\epsilon-\frac{c^2}{2r_\star^2\upmu_\star}.
\end{equation}
\end{proposition}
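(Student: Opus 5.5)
The plan is to work directly with the shifted energy $V_\star(z)=\mathcal H(z)-\mathcal H(z_\star)$ along trajectories in $\mathcal C_{\epsilon,\star}$. Throughout I assume, as for Corollary~\ref{cor:pl_robustness_bound}, that $\mathcal C_{\epsilon,\star}\subseteq\mathcal N_\updelta$, so that both the PL inequality~\eqref{eq:clarke_pl_certificate} and the uniform dissipation bound $d_H\geq r_\star\|\nabla\mathcal H\|_2^2$ hold on the whole component.

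\textbf{Step 1: a scalar differential inequality.} From the energy balance~\eqref{eq:ph_energy_balance}, Cauchy--Schwarz, and $\|G\|_2\leq\bar g$, $\|u\|_2\leq\bar u$, I expect to get, writing $s:=\|\nabla\mathcal H(z)\|_2$,
\[
\dot V_\star\leq -r_\star s^2+c\,s .
\]

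\textbf{Step 2: invariance.} Invariance should follow from Theorem~\ref{thm:rob_inv} or Theorem~\ref{thm:geometric_robustness_bound}, using the Euclidean norm on inputs. On $\Gamma_{\epsilon,\star}$, the PL inequality gives $s\geq\sqrt{2\upmu_\star\epsilon}$. Then $\dot V_\star\leq s(c-r_\star s)\leq s\,(c-r_\star\sqrt{2\upmu_\star\epsilon})\leq 0$ under the hypothesis $c\leq r_\star\sqrt{2\upmu_\star\epsilon}$. Hence $\dot h_{\epsilon,\star}\geq0$ on the boundary, and the barrier argument of Theorem~\ref{thm:rob_inv} yields that $\mathcal C_{\epsilon,\star}$ is $\bar u$-robust invariant. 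Equivalently, $\bar g\geq\bar g^{\perp}_{\epsilon,\star}$, so this is just Corollary~\ref{cor:pl_robustness_bound}.

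\textbf{Step 3: the tube estimate.} Since trajectories stay in $\mathcal C_{\epsilon,\star}$, Step 1 holds for all $t\geq0$. The right-hand side $-r_\star s^2+cs$ is negative whenever $s>c/r_\star$. Using PL, $V_\star>\frac{c^2}{2r_\star^2\upmu_\star}=:\bar V$ implies $s^2\geq 2\upmu_\star V_\star>c^2/r_\star^2$, hence $\dot V_\star<0$.

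The quantitative decrease is the main obstacle, because I need strict negativity bounded away from zero to show the trajectory actually enters the tube. On $\{V_\star\geq\bar V+\delta\}$ I have $s\geq s_\delta:=\sqrt{2\upmu_\star(\bar V+\delta)}>c/r_\star$. Because $s\mapsto -r_\star s^2+cs$ is decreasing for $s>c/(2r_\star)$, this gives $\dot V_\star\leq -r_\star s_\delta^2+c s_\delta=:-\gamma_\delta<0$. Therefore $V_\star$ enters $\{V_\star<\bar V+\delta\}$ in finite time. The sublevel set $\{V_\star\leq\bar V+\delta\}$ is forward invariant by the same boundary-sign argument, so the trajectory cannot leave once it enters.

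Hence $\limsup_{t\to\infty}V_\star(z(t))\leq\bar V+\delta$ for every $\delta>0$. This gives $\liminf_{t\to\infty} h_{\epsilon,\star}(z(t))\geq\epsilon-\bar V$, which is the claimed bound.

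A technical point I will handle with care is that $V_\star$ is only absolutely continuous along Carathéodory solutions. The inequalities therefore hold for almost every $t$, and I will integrate them rather than use pointwise comparison.
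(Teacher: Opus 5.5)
Your proposal is correct. Your invariance step matches the paper's in substance. The real difference is in how you obtain the tube estimate.

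The paper keeps the same starting bound $\dot V_\star\le -r_\star s^2+cs$, with $s=\|\nabla\mathcal H\|_2$. It then applies Young's inequality $cs\le\tfrac{r_\star}{2}s^2+\tfrac{c^2}{2r_\star}$, followed by PL, to get the affine dissipation inequality $\dot V_\star\le -r_\star\upmu_\star V_\star+\tfrac{c^2}{2r_\star}$. That single inequality does double duty:
\begin{itemize}
\item evaluated at $V_\star=\epsilon$, it gives the boundary sign condition and hence invariance;
\item via the comparison lemma, it gives the explicit envelope $V_\star(t)\le e^{-r_\star\upmu_\star t}V_\star(0)+\tfrac{c^2}{2r_\star^2\upmu_\star}\bigl(1-e^{-r_\star\upmu_\star t}\bigr)$, from which the $\limsup$ bound, and hence the $\liminf$ bound on $h_{\epsilon,\star}$, is read off.
\end{itemize}

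You instead keep the quadratic bound in $s$. For invariance you factor it as $s(c-r_\star s)$ with $s\ge\sqrt{2\upmu_\star\epsilon}$, which is slightly more direct than the paper's relaxed check. For the tube you run a drift-and-trapping argument: a uniform negative drift $-\gamma_\delta$ on $\{V_\star\ge\bar V+\delta\}$, finite-time entry, forward invariance of the smaller sublevel set, then $\delta\to0$.

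Both routes give the same threshold $\bar V=c^2/(2r_\star^2\upmu_\star)$. The zero $s=c/r_\star$ of your bound maps through PL exactly onto the equilibrium of the paper's linear comparison system, so the Young relaxation costs nothing asymptotically.

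The paper's route buys more information: an exponential rate $r_\star\upmu_\star$ and a transient bound valid at every finite time, which the appendix version of the proposition explicitly claims. Your route yields only the asymptotic bound plus finite entry times into each $\delta$-enlarged tube. In exchange, it avoids the comparison lemma and is more careful about the almost-everywhere nature of $\tfrac{d}{dt}V_\star(z(t))$ along Carathéodory solutions.
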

The condition $c\leq r_\star\sqrt{2\upmu_\star\epsilon}$ guarantees that the tube centered at $z_\star$ with radius $c/(2r_\star^2\upmu_\star)$ fits inside $\mathcal C_{\epsilon,\star}$, recovering for $c= r_\star\sqrt{2\upmu_\star\epsilon}$ the robust boundary margin from a dynamic energy estimate.

\begin{table}[!th]
\centering
\scriptsize
\caption{\textbf{Prediction accuracy and certified robustness across nonlinear dynamical systems.} \emph{Comparative performance of the pH-EBM across benchmarks addressing nonlinear system identification, nonconvex dynamics, and higher-dimensional controlled systems.}}
\label{tab:main_experimental_results}
\resizebox{.7\linewidth}{!}{%
\begin{tabular}{@{}lcccc@{}}
\toprule
Benchmark & Published & portHNN-u & pH-EBM\\
Metrics & RMSE $\downarrow\:$ $\uprho_{\epsilon,\star}$ $\uparrow$ & RMSE $\downarrow\:$ $\uprho_{\epsilon,\star}$ $\uparrow$ & RMSE $\downarrow\:$ $\uprho_{\epsilon,\star}$ $\uparrow$\\
\midrule
Silverbox & [$\mathbf{0.293}$,-] & [$47.821$, $2\text{e-}2$] & [$0.422$, $\mathbf{2}\textbf{e-}\mathbf{1}$]\\
CED & [$\mathbf{0.054}$, -] & [$0.217$, $3\text{e-}2$] & [$0.064$, $\mathbf{6}\textbf{e-}\mathbf{1}$]\\
Duffing double-well & [-, -] & [$0.254$, $5\text{e-4}$] & [$\mathbf{0.048}$, $\mathbf{4}\textbf{e-}\mathbf{1}$]\\
$3$-link pendulum & [$0.051$, -] & [$0.276$, $\text{5e-}3$] & [$\mathbf{0.014}$, $\mathbf{2}\textbf{e-}\mathbf{1}$]\\
NanoDrone $0.5\: \mathrm s$ & [$13.712$, -] & [-, -] & [$\mathbf{12.521}$,  $\mathbf{6}\textbf{e-}\mathbf{2}$]\\
NanoDrone $5\: \mathrm s$ & [$1363.701$, -] & [-, -] & [$\mathbf{369.025}$, $\mathbf{6}\textbf{e-}\mathbf{2}$]\\
\bottomrule
\end{tabular}}
\end{table}

\section{Experimental validation}
\label{sec:experiments}

\paragraph{Benchmark suite and evaluation protocol.}
The experimental suite spans established nonlinear system identification (Silverbox~\citep{WT-SJ:013} and CED~\citep{WT-SM:17})\footnote{Dataset specifications and benchmark protocols are available at \url{https://www.nonlinearbenchmark.org}.}, nonconvex controlled mechanics (Duffing~\citep{MFJ-LM-FS:12} and the $n$-link pendulum~\citep{OY-KR:25}), and high-dimensional long-horizon flight dynamics (NanoDrone~\citep{BR-CE-FM:26}); the tasks and main results are summarized in Table~\ref{tab:main_experimental_results}. We compare against benchmark-specific literature references, portHNN-u, and dissipative or black-box NODE baselines where available. The reported pH-EBM models are trained with a Huber observation loss and a mixed AdamW-to-SGD optimization schedule. Prediction is evaluated using each benchmark's native metric, while robustness is assessed using the common energy-shell and normalized-radius protocol detailed in Appendix~\ref{app:num_det}. The pH-EBM framework is available at~\url{https://github.com/sim1bet/energy-safe-dynamics}.

\paragraph{Accuracy and robustness across benchmarks.}
Across the suite, the pH-EBM combines competitive or improved predictive accuracy with substantially larger certified input margins (Table~\ref{tab:main_experimental_results}). On Silverbox and CED, it remains close to the strongest published errors ($0.422$ vs. $0.293$ and $0.064$ vs. $0.054$, respectively) while reducing the portHNN-u error and increasing the corresponding robustness radius by approximately $10\times$ and $20\times$. The advantage is strongest on the nonconvex mechanical systems: on Duffing, RMSE decreases from $0.254$ to $0.048$ while the radius increases from $5\times10^{-4}$ to $4\times10^{-1}$; on the $3$-link pendulum, RMSE decreases from $0.276$ to $0.014$ while the radius increases from $5\times10^{-3}$ to $2\times10^{-1}$. The latter also improves on the published RMSE of $0.051$, and retains lower error under the Brownian and step out-of-distribution forcing tests reported in Appendix~\ref{par:nlink}, while requiring approximately $1/20$ of the training time of the dissipative NODE reference. Finally, on NanoDrone the pH-EBM improves the reported RMSE at both $0.5\,\mathrm{s}$ ($12.521$ vs. $13.712$) and, more markedly, at $5\,\mathrm{s}$ ($369.025$ vs. $1363.701$), while retaining a normalized certificate of $6\times10^{-2}$. These results show that the structural constraints do not impose a systematic accuracy--robustness trade-off: across the tested systems, improved certification is compatible with competitive---and in the nonconvex and long-horizon regimes, substantially improved---prediction.

\begin{figure}[!t]
	\center
	\includegraphics[width=.9\linewidth]{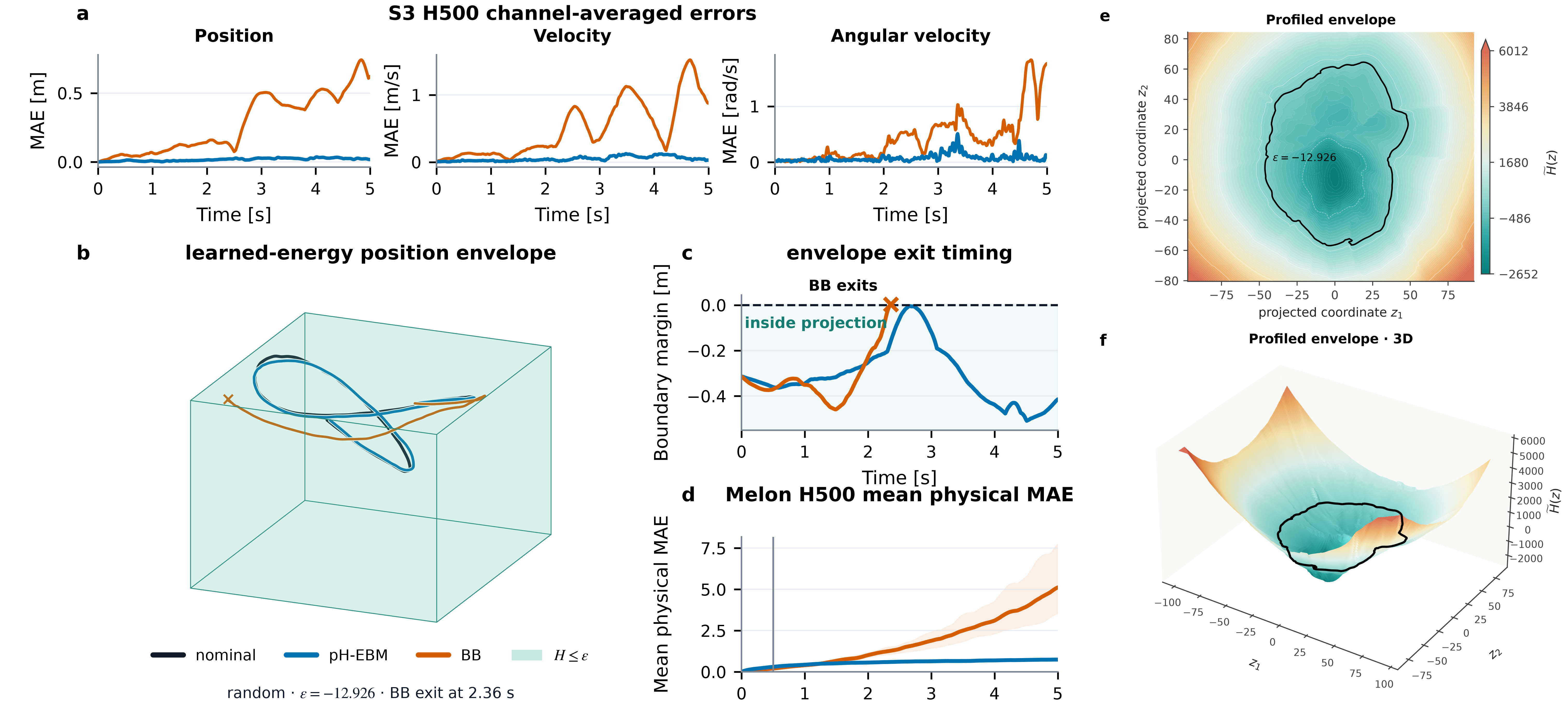}
	\caption{\textbf{\emph{Safe} long-horizon reconstruction of NanoDrone flight dynamics with pH-EBM.}
a) Mean absolute error (MAE) of pH-EBM and the black-box baseline~\citep{BR-CE-FM:26} over a $5,\mathrm{s}$ Random rollout, ten times the $0.5,\mathrm{s}$ training horizon. The pH-EBM maintains substantially lower error, while the BB error rapidly grows beyond the training window.
b-c) Random-trajectory reconstruction relative to the projected certified region $\mathcal C_{\epsilon,\star}$ at $\epsilon=-12.926$. The pH-EBM remains inside the certified set and turns inward near $\Gamma_{\epsilon,\star}$, whereas the BB trajectory exits the region after approximately $2,\mathrm{s}$.
d) Extrapolation to the unseen Melon excitation. Although BB is more accurate over the original $0.5,\mathrm{s}$ horizon, its long-rollout error quickly increases, while pH-EBM remains bounded, highlighting the distinction between short-horizon accuracy and safe recursive deployment.
e-f) Two- and three-dimensional projections of the learned Hamiltonian and shell $\Gamma_{-12.926,\star}$, revealing a strongly nonconvex yet radially unbounded energy landscape that supports certified long-horizon evolution.}
	\label{fig:nano}
\end{figure} 

\paragraph{Robust long time-horizon rollouts in high-dimensional Nanodrone application.} The NanoDrone benchmark considers identification of flight dynamics from collections of short, disjoint $0.5,\mathrm{s}$ trajectories acquired under markedly different excitation distributions (Random, Chirp, Square, and Melon). We evaluate the pH-EBM against the reference black-box neural model of~\citet{BR-CE-FM:26} under two increasingly demanding settings: \textbf{(i)} long-horizon prediction over the training-supported S3 distributions (Random, Chirp, and Square), and \textbf{(ii)} long-horizon prediction under the unseen Melon distribution, corresponding to the original extrapolation task. In both cases, models are rolled out for $5,\mathrm{s}$, i.e., ten times beyond the duration of the trajectories available during training. This setting therefore tests not only whether the learned dynamics fit short flight segments, but whether they remain predictive and operationally well behaved when recursively propagated far beyond the training horizon. On the S3 distributions, the pH-EBM achieves substantially lower prediction error than the reference black-box model, including state-of-the-art short-horizon validation accuracy together with markedly improved long-horizon reconstruction. Figure~\ref{fig:nano}.a shows that its mean absolute error (MAE) remains consistently smaller throughout the $5,\mathrm{s}$ rollout, demonstrating that the structured Hamiltonian dynamics retain the expressivity required for high-dimensional flight identification without sacrificing long-term stability. More importantly, this predictive advantage is accompanied by an explicit safety certificate. Figures~\ref{fig:nano}.(b-c) show the learned trajectory evolving inside the certified energy component $\mathcal C_{\epsilon,\star}$: as it approaches the boundary $\Gamma_{\epsilon,\star}$, the pH-EBM dynamics remain inward-pointing and the trajectory does not leave the certified operating region. In contrast, the black-box NODE accurately tracks the Random trajectory over approximately the $0.5,\mathrm{s}$ training horizon, but subsequently drifts away and eventually exits the same projected safe region. Figures~\ref{fig:nano}.(e-f) visualize the corresponding two- and three-dimensional projections of the learned Hamiltonian and its certified boundary shell, making the relation between the learned energy geometry and the long-horizon trajectory explicit. The unseen Melon distribution exposes a complementary and particularly relevant trade-off. Over the original short-horizon extrapolation task, the pH-EBM is less accurate than the reference black-box model. Yet this ordering reverses qualitatively as the rollout is extended: while the black-box prediction error grows rapidly and the trajectory develops an unstable mode, the pH-EBM remains confined to the certified region and its average MAE approaches a bounded long-horizon regime (Fig.~\ref{fig:nano}.d). The experiment therefore separates two notions that short-horizon benchmark scores alone cannot distinguish: \emph{local predictive fitness} and \emph{safe recursive deployment}. A model may provide the smaller error immediately outside the training distribution while still generating unstable trajectories when propagated for longer horizons. The pH-EBM instead sacrifices some short-horizon Melon accuracy while retaining an \emph{a priori} forward-invariance certificate and preventing the unstable long-term behavior observed in the unconstrained NODE. For safety-critical flight dynamics, where prediction is repeatedly propagated and instability can be more consequential than a moderate increase in instantaneous error, this distinction is central.

\begin{figure}[!h]
	\center
	\includegraphics[width=.9\linewidth]{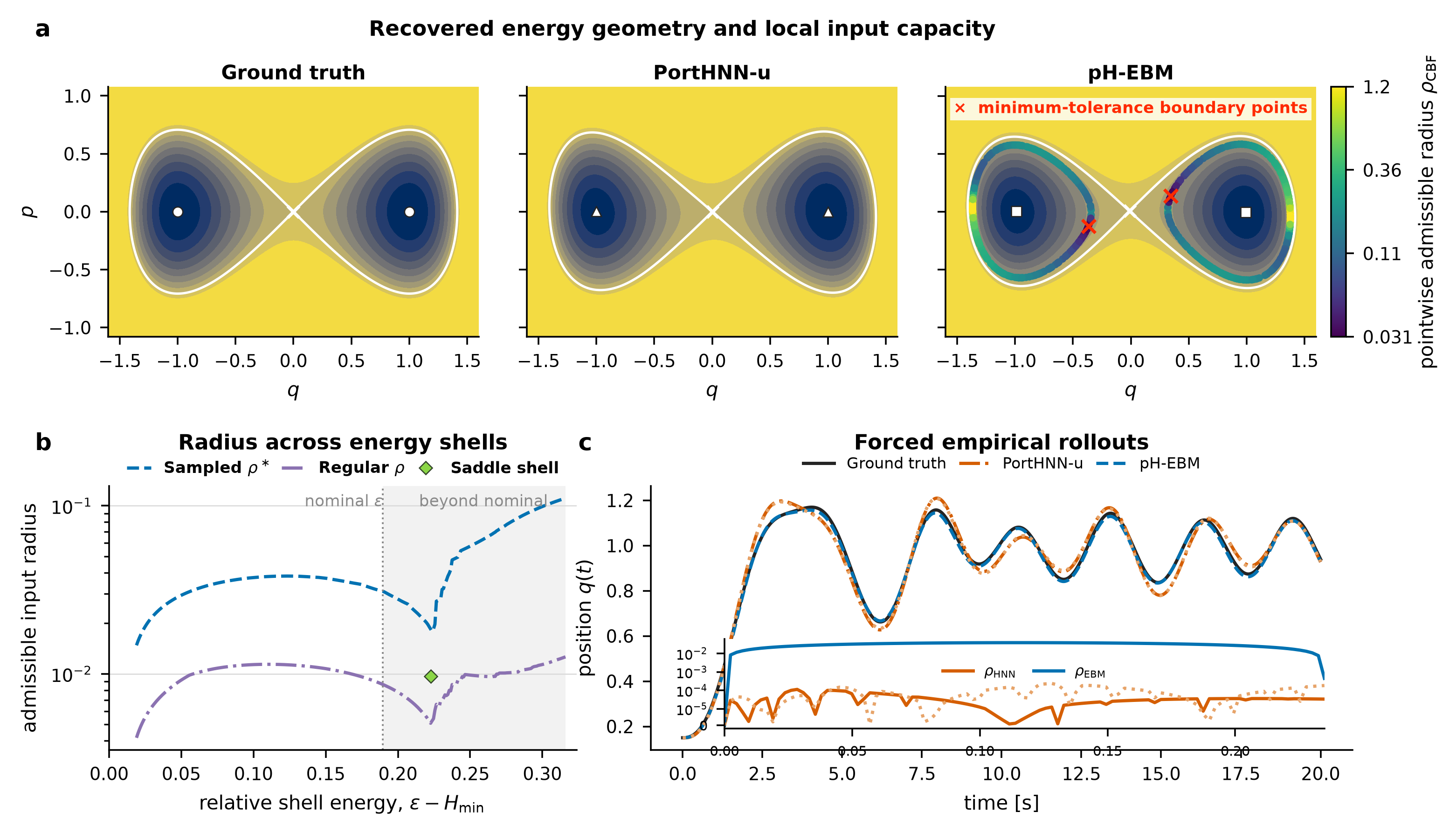}
	\caption{\textbf{Geometry reconstruction, accuracy, and certified robustness in the 2D Duffing oscillator.}
a) Ground-truth Duffing Hamiltonian (left) and energies reconstructed by portHNN-u (center) and pH-EBM (right). Both recover the characteristic nonconvex double-well geometry. The pH-EBM panel additionally reports the pointwise input tolerance, whose minimum occurs near the saddle separating the two wells.
b) Theoretical (purple) and empirical (blue) state-independent radii $\uprho_{\epsilon,\star}$ across an energy-level sweep. The certificate decreases as the shell approaches the saddle energy, then increases again after the certified component expands beyond it, revealing reduced input tolerance near transitions between operating regimes.
c) Reconstruction error and empirical robustness radius for portHNN-u and pH-EBM. While both models achieve accurate trajectory reconstruction, the pH-EBM yields certified radii approximately $2$-$3$ orders of magnitude larger than portHNN-u.}
	\label{fig:duffing}
\end{figure}

\paragraph{\textbf{Nonconvex geometry and attractor-dependent certification.}}
The Duffing oscillator~\citep{MFJ-LM-FS:12} provides a canonical controlled system with a nonconvex double-well potential, where external forcing drives transitions between distinct operating regimes. We compare the pH-EBM against the port-Hamiltonian portHNN-u neural architecture of~\citet{DSA-MM-SD:21}, evaluating both predictive reconstruction and the geometry of the resulting $\uprho$-safety certificate. In the non-chaotic regime, both models recover the characteristic double-well energy landscape from input-output trajectories (Fig.~\ref{fig:duffing}.a), showing that accurate identification can preserve the underlying nonconvex structure. The pH-EBM further reveals how certification depends on the learnt geometry and NODE specifications. The pointwise radius $\uprho_{\mathrm{BF}}(z)$ decreases near the saddle separating the two wells, and the state-independent radius $\uprho_{\epsilon,\star}$ correspondingly weakens as the certified shell approaches the critical energy before increasing again beyond it (Fig.~\ref{fig:duffing}.b). Saddle-proximity weakenanig of the certificate directly illustrates the theoretical link between critical energy levels and reduced input tolerance. Importantly, similar predictive reconstructions do not imply comparable safety margins: under the adopted configuration, the pH-EBM attains a state-independent certificate approximately two orders of magnitude larger than portHNN-u (Fig.~\ref{fig:duffing}.c). Robustness is therefore governed not by prediction error alone, but by the energy geometry and dissipation structure induced by the learned architecture.

\paragraph{Out-of-distribution robustness and mechanical scaling.}
The $n$-link pendulum provides a challenging nonlinear mechanical benchmark whose complexity increases rapidly with the number of coupled joints. For $n=2$ and $n=3$, we compare the pH-EBM against the naive and dissipative neural ODEs studied in~\citet{OY-KR:25} and the portHNN-u model. Across these systems, the pH-EBM improves on the reference models in three key respects: predictive accuracy, robustness to adversarial inputs, and training efficiency (Appendix~\ref{par:nlink}). In particular, under both Brownian and step out-of-distribution forcing, the pH-EBM combines lower prediction error with a certified $\uprho$-robust invariant region. Moreover, training requires approximately $1/20$ of the time for the dissipative reference, illustrating that stability and safety need not come at the computational cost of externally imposed dissipativity mechanisms.

\section{Conclusions}

\paragraph{Limitations of the pH-EBM framework.}
The proposed port-Hamiltonian energy-based model combines expressive nonlinear dynamics with explicit bounded-rollout and safety certificates, but introduces two practical costs: training complexity and hyperparameter sensitivity. In our experiments, pH-EBMs generally require longer training and more targeted model selection than unconstrained NODEs, although their computational cost remains comparable to portHNN-u and can be substantially lower than that of alternative stable architectures such as dissipative NODEs. A more fundamental limitation concerns the scope of the safety guarantee: the certificates derived here apply directly to the \emph{learned model dynamics}, not automatically to the unknown physical plant. Extending invariance guarantees to the true system requires a validated enclosure of model mismatch and external disturbances, which must then be incorporated explicitly into the robust barrier condition. Reducing this model-to-plant gap, together with the training and model-selection burden, is therefore central to future deployment in safety-critical settings.

\paragraph{Future directions.}
The pH-EBM provides a general substrate for learning nonlinear dynamics while retaining stability and quantitative safety guarantees. A natural next step is to extend the framework to physical interactions not explicitly modeled here, including contact and hybrid dynamics, and to integrate the learned model with feedback controllers rather than treating external inputs as prescribed signals. Developing scalable certificate-aware training and closed-loop safety guarantees would move the framework from certified system identification toward end-to-end deployment in safety-critical applications such as autonomous flight, robotic manipulation, and human--robot interaction.

\paragraph{AI usage disclosure.}
Large language models, including ChatGPT and Claude, were used to assist with language polishing, codebase organization and optimization, and the implementation of dataset interfaces for the experimental framework. They were not used for research ideation, formulation of the scientific contributions, development of the theoretical results, or derivation and verification of mathematical proofs. All scientific content, experimental design, implementation choices, and reported results were reviewed and validated by the authors.

\paragraph{Reproducibility statement.}
We provide extensive material to facilitate reproduction of both the theoretical and experimental results. The Appendix states the assumptions underlying the theoretical guarantees and contains complete proofs of the main results. It also documents the datasets, preprocessing, model architectures, training protocols, hyperparameters, evaluation procedures, and certificate computations used throughout the experiments. The accompanying supplementary material includes the implementation of the pH-EBM framework together with the experiment configurations and scripts required to reproduce the reported benchmarks and robustness analyses.

\paragraph{Acknowledgements.}
This work was supported by the IT4LIA AI Factory projects EHPC-AIF-2026PG01-537, EHPC-AIF-2026PG01-1086, and EHPC-AIF-2026PG02-648.

\newpage
\bibliography{SB-main}
\bibliographystyle{unsrtnat}

\newpage
\appendix

\section{Preliminaries}
\label{sec:prel}
\noindent \textbf{Notation.} 
$\vectorones[d]$ denotes the $d$-dimensional vector with all ones, $\vectorzeros[d]$ the $d$-dimensional vector with all zeros, while $\mathcal{I}_d$ denotes the identity matrix in $\real^{d\times d}$. We denote compact subsets of $\real^d$ as sets $\mathcal{B}\ssubset\real^d$, and their boundary as $\partial\mathcal{B}$. For two real vectors $x,y$ of the same dimension, $x^\top y$ denotes the standard inner product. Let $f:\real^d\to\real$; for $f\in C^k(\real^d)$, the function is $k$-times continuously differentiable. For $f\in C^2(\real^d)$, the gradient of $f$ is denoted as $\nabla f$, and the Hessian as $D(\nabla f)=D^2f$. The partial derivative of $f$ with respect to the variable $x_{i}$ is denoted as  $\partial f/\partial{x_{i}}$. $\Theta$ bounds the growth of $f\sim\Theta(g)$ through the existence of constants $a,b>0$ such and a function $g$ such that $a\ g(x)< f(x) < b\ g(x)$. Given functions $g:\real^n\to\real^d$ and $f:\real^d\to\real$, we denote the composition of the two functions as $f\circ g(y)$, for $y\in\real^n$. The abbreviation a.e. stands for almost everywhere, that is for all $\mathcal{B}\subset \real^d$ except zero measure sets. Given a matrix $A\in\real^{d\times d}$, we denote with $A^\top$ its transpose. In case $A$ is symmetric, $\uplambda_{\min}(A),\ \uplambda_{\max}(A)\in\real$ denote its minimum and its maximum eigenvalue. We denote with $B_r(x)\subset \real^d$ the ball of radius $r>0$ and center $x\in\real^d$.

\subsection{Lie and Clarke's derivatives}\label{subsec:gen_der}
\begin{definition}[Lie derivative]\label{def: lie}
    Let $X:\real^{d}\to\real^{d}$ be a locally Lipschitz continuous vector field and let $\Phi_{X}:\real_{\geq 0}\to\real^{d}$ denote the associated local flow. For a smooth function $g:\real^{d}\to\real$, the Lie derivative of $g$ along $X$ is defined as
    \begin{equation}\label{eq: lie-der}
        \mathcal{L}_{X}g(x) = \frac{d}{dt}|_{t=0} (g\ \circ \Phi_{X}^{t} )(x),
    \end{equation}
    and coincides with the directional derivative
    \begin{equation}\label{eq: dir-der}
        \mathcal{L}_{X}g(x)=\nabla_{X}g(x)=\nabla_{x}g(x)^\top X(x).
    \end{equation}
\end{definition}
Lie derivatives~\citep[Chapters 3,~9]{LJM:12} will be central for the characterization of the negative definiteness of the \emph{energy} function along the trajectories generated by the EBM dynamics. We extend the notion of derivatives to nonsmooth functions by recalling Clarke's generalized gradient and directional derivative~\citep{CFH:75}, which apply to any locally Lipschitz function.
\begin{definition}[Generalized gradient]\label{def: gen-grad}
Let $f:\real^d\to\real$ be a locally Lipschitz function. The generalized gradient of $f$ at $x\in\real^d$ is denoted as $\partial f(x)$ and is the convex hull of the set of limits
\begin{equation}
	\lim_{k\to +\infty} \nabla f(x+h_k),
\end{equation}
with $h_k\xrightarrow[]{k\to +\infty} \vectorzeros[d]$ and $f$ differentiable at $x+h_k\in\real^d$ for all $k\in\mathbb{d}$.
\end{definition}
\begin{definition}[Generalized directional derivative]\label{def: gen-dir}
Let $f:\real^d\to\real$ be a locally Lipschitz function and take $v\in\real^d$. Then the Clarke's generalized directional derivative at $x\in\real^d$ is defined as
\begin{align}
	f^{\circ}(x;v)&=\lim_{h\to\vectorzeros[d]}\sup_{\updelta\to 0} \frac{f(x+h+\updelta v)-f(x+h)}{\updelta}\\
	&=\max_{\upxi\in\partial f(x)} \upxi^\top v\label{eq: gen-dir}.
\end{align}
\end{definition}
In particular, combining~\eqref{eq: dir-der} with~\eqref{eq: gen-dir}, the Clarke Lie derivative 
of a locally Lipschitz function $f$ along a vector $v$ is given by
\begin{equation}\label{eq: gen-Lie}
    \mathcal{L}_v f(x)
        := f^{\circ}(x;v)
        = \max_{\xi\in\partial f(x)} \xi^\top v,
\end{equation}
extending the classical smooth definition to the nonsmooth setting.

\section{Technical derivations}
\label{sec:appendix}

\subsection{Port-Hamiltonian energy balance}
\label{app:ph}
A port-Hamiltonian model~\citep{VDSA:07} is a dynamical system characterized by dynamics
\begin{align}\label{eq:gen_port}
	\dot x &= [J(x)-R(x)]\nabla\mathcal{H}+G(x)u+w\\
	y_p &= G(x)^\top\nabla\mathcal{H} 
\end{align}
where $x\in\real^d$ is the state of the model, $J(x)=-J(x)^\top\in\real^{d\times d}$ is the skew-symmetric interconnection matrix, $0\preceq R(x)=R(x)^{\top}\in\real^{d\times d}$ is the symmetric positive-semidefinite dissipation matrix, $G(x)\in\real^{d\times k}$ is the input port, $u\in\real^{k}$ the external time varying input, $w\in\real^d$ is an unknown but bounded disturbance, and $y_p\in\real^{k}$ the power-conjugate output port. Along~\eqref{eq:ph_dynamics},
\begin{align}
    \mathcal{L}_f{\mathcal H}
    &=\nabla\mathcal{H}^\top(J-R)\nabla\mathcal{H}+\nabla\mathcal{H}^\top (Gu+w)\\
    &=\nabla\mathcal{H}^\top J\nabla\mathcal{H}-\nabla\mathcal{H}^\top R\nabla\mathcal{H}+y_{\mathrm p}^\top u+\nabla\mathcal{H}^\top w.
\end{align}
For every real vector $q$, skew-symmetry gives
\begin{equation}
    q^\top J q
    =(q^\top J q)^\top
    =q^\top J^\top q
    =-q^\top J q,
\end{equation}
so $q^\top J_{\uptheta_J}q=0$. Therefore, dissipativity reduces to
\begin{equation}
	\mathcal{L}_f{\mathcal{H}}=-\nabla\mathcal{H}^\top R\nabla\mathcal{H}+y_p^\top u + \nabla\mathcal{H}^\top w.
\end{equation} 
No convexity property of $\mathcal H$ is required for dissipativity. Coercivity is used instead to make energy sublevel sets compact and thereby preclude escape to infinity along bounded-energy trajectories. Convergence to a particular minimum requires additional information on the invariant set where the dissipation vanishes and is not implied by coercivity alone.

\subsection{Derivation and coercivity of the hybrid Hamiltonian}
\label{app:hybrid}
The original energy function for a $L$-layered modern Hopfield network has been introduced in~\citep{KD-HJJ:20,HB-CDH-SH:22}. We identify the visible state with $g_{1}=z\in\real^d$ and denote the hidden variables by $g_{h}\in\real^{N_h}$, $h=2,\ldots,L$. For each layer let $\mathcal F_h:\real^{N_h}\to\real$ be a proper convex differentiable potential and define the activation $\Psi_h(g_{h})=\nabla\mathcal F_h(g_{h})$. Adjacent layers are coupled by matrices $W_{h(h+1)}\in\real^{N_h\times N_{h+1}}$ with $W_{(h+1)h}=W_{h(h+1)}^\top$ and each layer has bias term $b_h\in\real^{N_h}$. The modern Hopfield energy is then
\begin{align}
    \mathcal E(g_{1},\ldots,g_{L})
    ={}&-\sum_{h=1}^{L-1}
    \Psi_h(g_{h})^\top W_{h(h+1)}\Psi_{h+1}(g_{h+1})\nonumber\\
    &+\sum_{h=1}^{L}
    \left[g_{h}^\top\left(\Psi_h(g_{h})-b_h\right)-\mathcal F_h(g_{h})\right]\\
    =& \:\: g_{1}^\top \Psi_1(g_{1})-\mathcal{F}_1(g_{1})-g_{1}^\top b_1\nonumber\\
    &+\sum_{h=2}^{L}\left[-\Psi_{h-1}(g_{h-1})^\top W_{(h-1)h}\Psi_{h}(g_{h})+g_{h}^\top\left(\Psi_h(g_{h})-b_h\right)-\mathcal F_h(g_{h})\right].
    \label{eq:full_layered_energy_app}
\end{align}
The recurrence of the dynamics, with all hidden layers taking contributions from both the layers above and below, makes training the full dynamical architecture a computational and temporal challenge. For this reason, in~\cite{BS-LL:26} a hybrid modern Hopfield energy has been introduced, where only the visible layer maintains its dynamical characterization and all the hidden layers are expressed as static maps $\upphi_h$ of the layer below. Specifically, by expressing $x_{(h)}=\upphi_h(x_{(h-1)})$ for $h=2,\dots,L$, both the energy and the autonomous energy-gradient contribution $\dot x\propto-\nabla \mathcal{E}(x)$ can be expressed in terms of the visible layer state. 

With $g_{1}=z$, the visible layer internal contribution to the Hamiltonian is
\begin{align}
    z^\top\Psi_1(z)-\mathcal F_1(z)&=z^\top z - \frac{1}{2}\|z\|_2^2\\
    &=\frac12\|z\|_2^2.
\end{align}
For every hidden layer $h\geq2$, the feedforward constraint~\eqref{eq:hidden_features} yields
\begin{align}
    g_{h}^\top\Psi_h(g_{h})
    &=\Psi_{h-1}(g_{h-1})^\top
      W_{h(h-1)}^\top\Psi_h(g_{h})\\
    &=\Psi_{h-1}(g_{h-1})^\top
      W_{(h-1)h}\Psi_h(g_{h}),
\end{align}
which cancels exactly the coupling term between layers $h-1$ and $h$ in~\eqref{eq:full_layered_energy_app}. Applying this cancellation for $h=2,\ldots,L$ leaves precisely
\begin{align}
	\mathcal{E}(z,g_{2},\dots,g_{L})&\equiv \mathcal{H}_{\uptheta_H}(z)\nonumber\\
	&=\frac{1}{2}\|z\|_2^2-z^\top b_1-\sum_{h=2}^L\left(\mathcal{F}_h(g_{h}(z))+g_{h}(z)^\top b_h\right)
\end{align}
and the associated neural network action has autonomous contribution
\begin{equation}
	\dot z \propto -\nabla \mathcal{H}_{\uptheta_H}(z)
\end{equation}

We next prove~\eqref{eq:coercive_lower_bound}. Suppose the first hidden activation has bounded image: there exists $M_2<+\infty$ such that $\|\Psi_2(x)\|_2\leq M_2$ for all $x\in\real^{N_2}$. Since $\Psi_2=\nabla\mathcal F_2$, the mean-value inequality gives
\begin{equation}
    |\mathcal F_2(z)-\mathcal F_2(0)|\leq M_2\|z\|_2,
\end{equation}
so
\begin{equation}
    \mathcal F_2(W_{21}z)
    \leq |\mathcal F_2(0)|+M_2\|W_{21}\|_2\|z\|_2.
    \label{eq:F2_linear_growth}
\end{equation}
Moreover, $g_{3}=W_{32}\Psi_2(g_{2})$ belongs to a compact set independent of $z$. By continuity of $\Psi_3$, its image over this compact set is bounded; recursively, every $g_{h}(z)$ for $h\geq3$ lies in a compact set. Continuity of $\mathcal F_h$ and compactness of the layer variables $g_{h}$ then yields constants $C_h<+\infty$ such that
\begin{equation}
    \mathcal F_h(g_{h}(z))+g_{h}(z)^\top b_h\leq C_h,
    \qquad h=3,\ldots,L.
\end{equation}
Substituting these estimates into~\eqref{eq:hybrid_hamiltonian} yields
\begin{equation}
    \mathcal H_{\uptheta_H}(z)
    \geq \frac12\|z\|_2^2-c_1\|z\|_2-c_0
    \label{eq:coercive_lower_bound}
\end{equation}
with
\begin{equation}
    c_1=\|W_{21}\|_2(M_2+b_2)+b_1,
    \qquad
    c_0=|\mathcal F_2(0)|+\sum_{h=3}^{L}C_h.
\end{equation}
The quadratic term dominates the affine lower bound as $\|z\|_2\to+\infty$, proving coercivity.

\subsection{Barrier functions and invariance}
\label{app:barrier_primer}
Consider a controlled system $\dot z=f(z,u,w)$ and a continuously differentiable function $h:\mathbb R^d\to\mathbb R$. The superlevel set
\begin{equation}
    \mathcal C:=\{z:h(z)\geq0\}
\end{equation}
is \emph{invariant} if every solution initialized in $\mathcal C$ remains in $\mathcal C$ for all future times for which the solution exists. A standard way to certify this property is through a zeroing barrier function. Let $\alpha$ be an extended class-$\mathcal K$ function, i.e., a continuous strictly increasing function satisfying $\alpha(0)=0$. For a fixed admissible input/disturbance pair, the inequality
\begin{equation}
    \mathcal L_f h(z,u,w)+\alpha(h(z))\geq0
    \label{eq:barrier_primer}
\end{equation}
prevents $h$ from decreasing through zero. Under the usual solution-existence assumptions, if~\eqref{eq:barrier_primer} holds on a neighborhood of $\mathcal C$, then $\mathcal C$ is invariant. If it holds for every $u$ in a prescribed input set and every $w$ in a prescribed disturbance set, the same set is robustly invariant for those exogenous signals~\citep{AAD-XX-GJW:17,AAD-CS-EM:19}.

On a regular boundary point $z\in\partial\mathcal C$, where $h(z)=0$ and $\nabla h(z)\neq0$, the condition reduces to the geometric inward-pointing requirement
\begin{equation}
    \mathcal L_f h(z,u,w)\geq0.
\end{equation}
Equivalently, the vector field must lie in the tangent cone of $\mathcal C$; this is the local form of Nagumo's invariance condition. 

\subsection{BF invariance, robust input sets, and boundary geometry}
\label{app:input_bounds}
We first distinguish three levels of certification that are used in the main text: a pointwise zeroing-BF condition, a boundary-only invariance condition, and a state-independent set of exogenous inputs that satisfies the boundary condition everywhere. In the following treatment, we will refer to a specific condition being \emph{robustly} satisfied when it holds over the entire support $\mathcal{W}$ of the unknown disturbances.

Let $h_\epsilon(z)=\epsilon-\mathcal H(z)$ and consider the perturbed port-Hamiltonian dynamics
\begin{equation}
    \dot z=[J(z)-R(z)]\nabla\mathcal H(z)+G(z)u+w,
    \qquad w\in\mathcal W(z),
    \label{eq:perturbed_ph_app}
\end{equation}
where $\mathcal W(z)\subset\real^d$ is nonempty and compact. Write $d_H:=\nabla\mathcal{H}^\top R\nabla\mathcal{H}$, and $a_H:=G^\top \nabla\mathcal{H}$. Then
\begin{equation}
    \mathcal{L}_f h_\epsilon
    =d_H-a_H^\top u-\nabla\mathcal{H}^\top w.
    \label{eq:robust_hdot_app}
\end{equation}
Therefore the zeroing-BF inequality $\mathcal{L}_f h_\epsilon+\upalpha(h_\epsilon)\geq0$ for every $w\in\mathcal W(z)$ is equivalent to
\begin{equation}
    a_H(z)^\top u+\upsigma_{\mathcal W(z)}(\nabla\mathcal{H}(z))
    \leq d_H(z)+\upalpha(h_\epsilon(z)).
    \label{eq:robust_cbf_app}
\end{equation}
Setting $\mathcal W(z)=\{0\}$ recovers the result without bounded perturbayions. If $a_H(z)=0$, no division by $\|a_H(z)\|_*$ is performed: the energy effect of the input is identically zero to first order and feasibility depends only on dissipation, the barrier slack, and the disturbance term.

For any norm $\|\cdot\|$ on input space with dual norm $\|\cdot\|_*$, H\"older's inequality gives
\begin{equation}
    a_H^\top u\leq\|a_H\|_*\|u\|.
\end{equation}
Thus~\eqref{eq:pointwise_norm_bound} is sufficient in the nominal case. It is the largest origin-centered norm ball contained in the unconstrained half-space because
\begin{equation}
    \sup_{\|u\|\leq\uprho}a_H^\top u=\uprho\|a_H\|_*.
\end{equation}

\subsection{Exact robust boundary certificate}
\label{app:robust_boundary}
Let $z_\star$ be an isolated local minimum and define
\begin{equation}
    \mathcal C_{\epsilon,\star}
    :=\operatorname{Comp}_{z_\star}\{z:h_{\epsilon,\star}(z)\geq 0\},
    \qquad
    h_{\epsilon,\star}(z):=\epsilon-V_\star(z),
    \qquad
    \Gamma_{\epsilon,\star}:=\partial\mathcal C_{\epsilon,\star}.
    \label{eq:component_safe_set_app}
\end{equation} 
The energy component is both connected (by definition) and compact by coercivity of $\mathcal H$, and the solutions of~\eqref{eq:perturbed_ph_app} exist uniquely for the admissible measurable inputs under consideration.

On $\Gamma_{\epsilon,\star}$ one has by definition that $h_{\epsilon,\star}=0$, therefore the robust inward-pointing condition is
\begin{equation}
    a_H(z)^\top u+\upsigma_{\mathcal W(z)}(\nabla\mathcal{H}(z))\leq d_H(z).
    \label{eq:robust_boundary_halfspace_app}
\end{equation}
Before proceeding with the first theorem, we provide a general notion from differential geometry, that is, that of regular hypersurface, which will be used in reference to the boundaries of our candidate safe sets $\mathcal{C}_{\epsilon,\star}$.
\begin{proposition}[Regular $(d-1)$-dim hypersurface]\label{prop:reg_bound}
Let $f:\real^d\to\real$ be a $C^1$ function, and for $s\in\real$ define the level set
\begin{equation}
	\Gamma_{s}:=\{z\in\real^d:\ f(z)=s\}
\end{equation}
Then we say that $\Gamma_s$ is a regular $(d-1)$-dim $C^1$-hypersurface if 
\begin{equation}
	\nabla f(z)\neq 0\qquad \forall z\in\Gamma_s.
\end{equation}
\end{proposition}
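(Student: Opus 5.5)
The claim is that $\Gamma_s$ is a $C^1$ embedded submanifold of $\real^d$ of dimension $d-1$ whenever $\nabla f$ does not vanish on it. My plan is to reduce this to the implicit function theorem applied locally at each point of $\Gamma_s$.

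\textbf{Step 1 (choose coordinates).} Fix $z_0\in\Gamma_s$. Since $\nabla f(z_0)\neq0$, some partial derivative is nonzero, say $\partial f/\partial z_d(z_0)\neq0$; this is possible after permuting coordinates. Write $z=(z',z_d)$ with $z'\in\real^{d-1}$.

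\textbf{Step 2 (local graph).} Apply the implicit function theorem to $F(z',z_d):=f(z',z_d)-s$. This gives neighbourhoods $U'\ni z_0'$ and $I\ni (z_0)_d$ and a $C^1$ map $\varphi:U'\to I$ such that
\[
\Gamma_s\cap(U'\times I)=\{(z',\varphi(z')):z'\in U'\}.
\]
Hence $\Gamma_s$ is locally the graph of a $C^1$ function of $d-1$ variables.

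\textbf{Step 3 (chart).} The map $z'\mapsto(z',\varphi(z'))$ is a $C^1$ homeomorphism onto its image, with inverse given by the coordinate projection. Its differential $(\mathcal I_{d-1},D\varphi)^\top$ has rank $d-1$.

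\textbf{Step 4 (conclusion).} These local parametrizations cover $\Gamma_s$. Transition maps between overlapping charts are compositions of projections and $C^1$ graph maps, so they are $C^1$. Therefore $\Gamma_s$ is a regular $C^1$ hypersurface. The tangent space at $z_0$ is $\nabla f(z_0)^\perp$, which follows by differentiating $f(z',\varphi(z'))=s$. This is what justifies the outward normal $\upeta=\nabla\mathcal H/\|\nabla\mathcal H\|_2$ used in the main text.

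The only point needing care is confirming the regularity hypotheses. The implicit function theorem requires $f\in C^1$, which holds for $\mathcal H_{\uptheta_H}$ because the $\mathcal F_h$ are differentiable with continuous gradients $\Psi_h$. With that in place, no step is a real obstacle; the argument is a direct application of the implicit function theorem.
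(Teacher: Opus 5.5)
Your argument is correct and is essentially the paper's: the statement is phrased as a definition, and where the paper uses its content (in the proof of Proposition~\ref{prop:regular_boundary}) it simply asserts that $h_{\epsilon,\star}$ is a submersion on $\Gamma_{\epsilon,\star}$, i.e.\ it invokes the regular value theorem, which your implicit-function-theorem graph charts prove directly. The only loose point is your closing remark: for $L\geq3$ the features $g_h$, $h\geq3$, pass through $\Psi_2,\dots,\Psi_{L-1}$, so $C^1$ regularity of $\mathcal H_{\uptheta_H}$ needs those activations to be differentiable rather than merely continuous, which is why the paper takes $\mathcal H_{\uptheta_H}\in C^1$ as a standing assumption instead of deriving it.
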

When the Hamiltonian $\mathcal{H}$ has isolated critical points, the shells $\Gamma_{\epsilon,\star}$ are regular almost everywhere $\epsilon>0$, as there will be finitely many values of $\epsilon$ such that a saddle point or local maxima lie in $\Gamma_{\epsilon,\star}$. The existence of entire connected open subsets $\mathcal V\subset \real^d$ where the regularity assumption is not satisfied, and hence such that $\mathcal H(z)=c$ for all $z\in\mathcal V$, is possible but unlikely. The existence of such regions would require the following two conditions to hold simultaneously
\begin{equation}\label{eq:irr_ls}
\nabla \mathcal{H}(z)=0,\quad \operatorname{det} \nabla^2 \mathcal{H}(z)=0\qquad \forall z\in\mathcal V.
\end{equation}
More specifically, if the Hamiltonian $\mathcal{H}$ is constituted only by primitives $\{\mathcal{F}_h\}_{h=1}^L$ of real analytic activation functions - such as the $\operatorname{softmax}$, $\operatorname{tanh}$, $\operatorname{logistic}$, and $\operatorname{softplus}$ - then the existence of open subsets $\mathcal{V}$ with constant energy is impossible due to analytic continuation. If such $\mathcal{V}$ existed, it would imply that $\mathcal{H}\equiv c$ on all $\real^d$, which contraddicts the coercivity of $\mathcal{H}$. Therefore, the condition in~\eqref{eq:irr_ls} can only be satisfied by neural networks characterized by less regular activation functions with broad plateaus - such as $\operatorname{ReLU}$ - and even in such cases remains unlikely.
In the following theorem, we specialize the definition of safe sets as defined by the energy BF to our port-Hamiltonian framework.

\begin{theorem}[Maximal uniform robust input set]
\label{thm:uniform_boundary_set}
Let $\Gamma_{\epsilon,\star}$ be the regular boundary of the zero level set $h_{\epsilon,\star}=0$ in the sense of Proposition~\ref{prop:reg_bound}, and define
\begin{equation}
\label{eq:uniform_robust_set}
	\overline{\mathcal U}_{\epsilon,\star}^{\mathrm{rob}}
    :=\mathcal U\cap
    \bigcap_{z\in\Gamma_{\epsilon,\star}}
    \left\{u:
    a_H(z)^\top u+\upsigma_{\mathcal W(z)}(\nabla\mathcal H(z))\leq d_H(z)\right\}.
\end{equation}
Then
\begin{itemize}
	\item[(i)] for any locally essentially bounded input satisfying
$u(t)\in\overline{\mathcal U}_{\epsilon,\star}^{\mathrm{rob}}$ almost everywhere, $\mathcal C_{\epsilon,\star}$ is robustly invariant for every measurable disturbance $w(t)\in\mathcal W(z(t))$;
	\item[(ii)] $\overline{\mathcal U}_{\epsilon,\star}^{\mathrm{rob}}$ is maximal with respect to set inclusion among state-independent subsets of $\mathcal U$ whose every element satisfies~\eqref{eq:robust_boundary_halfspace_app} at every boundary state.
\end{itemize}
\end{theorem}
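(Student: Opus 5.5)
Part (i) is a robust Nagumo-type argument applied to the barrier $h_{\epsilon,\star}$; part (ii) is a short maximality argument from the definition of $\overline{\mathcal U}_{\epsilon,\star}^{\mathrm{rob}}$ as an intersection.

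\textbf{Step 1: boundary inequality.} Fix $u(t)\in\overline{\mathcal U}_{\epsilon,\star}^{\mathrm{rob}}$ a.e. and a measurable disturbance $w(t)\in\mathcal W(z(t))$. By~\eqref{eq:robust_hdot_app}, for every $z\in\Gamma_{\epsilon,\star}$,
\[
\mathcal L_f h_{\epsilon,\star}(z,u,w)=d_H(z)-a_H(z)^\top u-\nabla\mathcal H(z)^\top w\geq d_H(z)-a_H(z)^\top u-\upsigma_{\mathcal W(z)}(\nabla\mathcal H(z))\geq0 .
\]
The last inequality holds because $u$ lies in every half-space in~\eqref{eq:uniform_robust_set}. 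So~\eqref{eq:robust_boundary_halfspace_app} holds at every boundary state, uniformly over admissible $(u,w)$.

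\textbf{Step 2: from the boundary inequality to invariance.} Since $\Gamma_{\epsilon,\star}$ is regular in the sense of Proposition~\ref{prop:reg_bound}, $\nabla h_{\epsilon,\star}=-\nabla\mathcal H\neq0$ on $\Gamma_{\epsilon,\star}$. Hence the tangent cone of $\mathcal C_{\epsilon,\star}$ at a boundary point is the half-space $\{v:\nabla h_{\epsilon,\star}(z)^\top v\geq0\}$, and Step 1 is exactly Nagumo's subtangency condition for the set-valued right-hand side $F(z)=\{f(z,u,w):u\in\overline{\mathcal U}^{\mathrm{rob}}_{\epsilon,\star},\,w\in\mathcal W(z)\}$.

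This is the main obstacle. The appendix's barrier primer is stated for a fixed input/disturbance pair with a class-$\mathcal K$ slack $\alpha$. Here we only have a non-strict inequality on the boundary, and the inputs are merely measurable, so one has to rule out trajectories that graze the boundary and exit.

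I would try first to avoid a differential-inclusion viability theorem and argue directly, by contradiction, using compactness of $\Gamma_{\epsilon,\star}$, continuity of $d_H$, $a_H$ and $\upsigma_{\mathcal W(\cdot)}$, and a comparison argument on $h_{\epsilon,\star}(z(t))$.
\begin{itemize}
\item Enlarge $\epsilon$ slightly to $\epsilon'$. Regularity is an open condition, so by compactness the shell $\Gamma_{\epsilon',\star}$ stays regular and $\mathcal C_{\epsilon',\star}$ varies continuously with $\epsilon'$.
\item Obtain a strict margin $\dot h\geq-\alpha(|h|)$, with Lipschitz $\alpha$, in a thin collar around $\Gamma_{\epsilon,\star}$. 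This uses local Lipschitzness of $f$ together with uniform boundedness of $u$ and $w$.
\item Compare $h_{\epsilon,\star}(z(t))$ with the solution of $\dot y=-\alpha(|y|)$, $y(0)=0$. By uniqueness, $y\equiv0$, so $h_{\epsilon,\star}(z(t))$ cannot become negative.
\end{itemize}
If this direct route fails, the fallback is to invoke the viability theorem for Carathéodory inclusions with compact convex-valued, upper semicontinuous right-hand side. This requires the convexity of $\overline{\mathcal U}_{\epsilon,\star}^{\mathrm{rob}}$ noted in the earlier remark and compactness of $\mathcal W(z)$.

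Finally, I must check that the trajectory stays in the connected component $\mathcal C_{\epsilon,\star}$ rather than jumping to another component of the sublevel set. This follows from continuity of the trajectory: $\mathcal C_{\epsilon,\star}$ is separated from the other components, and leaving it would require crossing $\Gamma_{\epsilon,\star}$. Compactness of $\mathcal C_{\epsilon,\star}$ also rules out finite escape time, so solutions exist for all $t\geq0$.

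\textbf{Step 3: maximality (ii).} Let $\mathcal S\subseteq\mathcal U$ be state-independent and suppose every $u\in\mathcal S$ satisfies~\eqref{eq:robust_boundary_halfspace_app} at every $z\in\Gamma_{\epsilon,\star}$. Then each such $u$ lies in every half-space of~\eqref{eq:uniform_robust_set}, so $\mathcal S\subseteq\overline{\mathcal U}_{\epsilon,\star}^{\mathrm{rob}}$. Moreover, $\overline{\mathcal U}_{\epsilon,\star}^{\mathrm{rob}}$ itself belongs to this family by construction. It is therefore the largest such set with respect to inclusion, which is the claim.
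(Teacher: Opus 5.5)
Your skeleton is the paper's: the pointwise equivalence between the half-space condition and $\mathcal L_f h_{\epsilon,\star}\geq 0$ on $\Gamma_{\epsilon,\star}$ for every $w\in\mathcal W(z)$, then Nagumo, then maximality from the intersection. Your part (ii) is correct. It proves the literal claim more directly than the paper, which argues contrapositively that any set with an element outside $\overline{\mathcal U}_{\epsilon,\star}^{\mathrm{rob}}$ destroys invariance, a necessity assertion the statement does not need. For (i), the paper simply cites Nagumo's tangent condition. You are right that a non-strict boundary inequality with measurable inputs is where the real work lies, and your collar-plus-comparison argument is the right way to supply it. You should also record that $\operatorname{dist}(z,\Gamma_{\epsilon,\star})\leq C\,|h_{\epsilon,\star}(z)|$ near a compact regular shell; this is what turns a Lipschitz-in-$z$ estimate into $\dot h_{\epsilon,\star}\geq -L\,|h_{\epsilon,\star}|$.

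The step that does not go through as written is your claim that the collar estimate follows from local Lipschitzness of $f$ and boundedness of $u$ and $w$. Just outside $\mathcal C_{\epsilon,\star}$ the disturbance ranges over $\mathcal W(z)$, not over $\mathcal W(p)$ for the nearby boundary point $p$. So you need $\mathcal W(\cdot)$ to be locally Lipschitz in the Hausdorff metric near $\Gamma_{\epsilon,\star}$, or at least $\upsigma_{\mathcal W(z)}(\nabla\mathcal H(z))\leq\upsigma_{\mathcal W(p)}(\nabla\mathcal H(p))+L\|z-p\|$.

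Without such a condition the statement is false. Take $d=1$, $\mathcal H(z)=\tfrac12 z^2$, $J=0$, $R=r>0$, $G=0$ and $\epsilon=\tfrac12$, so $\mathcal C_{\epsilon,\star}=[-1,1]$. Let $\mathcal W(z)=[0,\,r+\sqrt{|z-1|}]$, which is continuous with compact convex values. The boundary condition holds at $z=\pm1$ (with equality at $z=1$), so $\overline{\mathcal U}_{\epsilon,\star}^{\mathrm{rob}}=\mathcal U$. Yet with $s(t)=(1-e^{-rt/2})/r$, the curve $z(t)=1+s(t)^2$ and the disturbance $w(t)=r+s(t)\in\mathcal W(z(t))$ solve $\dot z=-rz+w$, and $z(t)>1$ for all $t>0$. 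The paper's one-line appeal to Nagumo silently needs the same hypothesis.

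Your fallback does not repair this. Viability theorems for upper semicontinuous convex-valued inclusions give weak invariance: some solution stays in the set. Robust invariance is strong invariance: every solution stays. Strong invariance needs a Lipschitz-type condition on the right-hand side, and the example above is exactly a continuous, tangent inclusion where it fails. If you add the assumption that $\mathcal W(\cdot)$ is locally Lipschitz (automatic when $\mathcal W$ is constant or $\{0\}$), your direct route is complete.
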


\begin{proof}
\begin{itemize}
	\item[(i)] let $z\in\Gamma_{\epsilon,\star}$, and observe that $a_H(z)^\top u+\upsigma_{\mathcal W(z)}(\nabla\mathcal{H}(z))\leq d_H(z)\iff \mathcal{L}_f h_{\epsilon,\star}(z,u,w)\geq0$ for every $w\in\mathcal W(z)$. Intersecting these pointwise admissible half-spaces over the full boundary yields exactly $\overline{\mathcal U}_{\epsilon,\star}^{\mathrm{rob}}$. $\mathcal{L}_f h\geq 0$ on $\Gamma_{\epsilon,\star}$ implies that the neural network action is in the tangent cone of the sublevel component, so Nagumo's tangent condition yields robust invariance.
	\item[(ii)] Let $\mathcal{U}_{\epsilon,\star}^-$ be such that $\mathcal U_{\epsilon,\star}^-/\overline{\mathcal U}_{\epsilon,\star}^{\mathrm{rob}}\neq \emptyset$. Then there exists $u\in\mathcal{U}_{\epsilon,\star}^-$ such that
	\begin{equation}
		a_H(z)^\top u+\upsigma_{\mathcal W(z)}(\nabla\mathcal{H}(z))> d_H(z)
	\end{equation}
	and $\mathcal{U}_{\epsilon,\star}^-$ does not make $\mathcal{C}_{\epsilon,\star}$ robustly invariant.
\end{itemize}
\end{proof}
It is now easy to generalize the previous definition of safety ensured input set to arbitrary geometries instead of normed balls. 

\begin{corollary}[Certification of arbitrary compact input sets]
\label{cor:support_input_set}
Let $\mathcal U_0\subseteq\mathcal U$ be compact. Then
$\mathcal U_0\subseteq\overline{\mathcal U}_{\epsilon,\star}^{\mathrm{rob}}$
if and only if
\begin{equation}
	\upsigma_{\mathcal{U}_0}(a_{H}(z))+\upsigma_{\mathcal{W}(z)}(\nabla\mathcal{H}(z))\leq d_{H}(z)\qquad \forall z\in\Gamma_{\epsilon,\star}
\end{equation}
\end{corollary}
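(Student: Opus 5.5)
The plan is to reduce the set inclusion to a family of pointwise scalar inequalities, one for each boundary state, using only the definition~\eqref{eq:uniform_robust_set} and the definition of the support function. Since $\mathcal U_0\subseteq\mathcal U$ by hypothesis, intersecting with $\mathcal U$ in~\eqref{eq:uniform_robust_set} imposes nothing further. Hence $\mathcal U_0\subseteq\overline{\mathcal U}_{\epsilon,\star}^{\mathrm{rob}}$ holds if and only if
\[
a_H(z)^\top u+\upsigma_{\mathcal W(z)}(\nabla\mathcal H(z))\leq d_H(z)\qquad\text{for all } u\in\mathcal U_0 \text{ and all } z\in\Gamma_{\epsilon,\star}.
\]

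For the forward direction, fix $z\in\Gamma_{\epsilon,\star}$. The inequality above holds for every $u\in\mathcal U_0$. The terms $\upsigma_{\mathcal W(z)}(\nabla\mathcal H(z))$ and $d_H(z)$ do not depend on $u$, so taking the supremum over $u\in\mathcal U_0$ gives
\[
\sup_{u\in\mathcal U_0}a_H(z)^\top u+\upsigma_{\mathcal W(z)}(\nabla\mathcal H(z))\leq d_H(z).
\]
By definition, the supremum is $\upsigma_{\mathcal U_0}(a_H(z))$. Compactness of $\mathcal U_0$, together with continuity of $u\mapsto a_H(z)^\top u$, ensures this value is finite and attained. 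Likewise, compactness of $\mathcal W(z)$ makes $\upsigma_{\mathcal W(z)}$ finite. The stated support-function inequality therefore holds at every boundary point.

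For the reverse direction, fix $u\in\mathcal U_0$ and $z\in\Gamma_{\epsilon,\star}$. The support function dominates each element of the set, so $a_H(z)^\top u\leq\upsigma_{\mathcal U_0}(a_H(z))$. Adding $\upsigma_{\mathcal W(z)}(\nabla\mathcal H(z))$ and invoking the hypothesis gives the half-space condition at $z$. Since $z$ was arbitrary, $u$ lies in the intersection over $\Gamma_{\epsilon,\star}$ and in $\mathcal U$, that is, $u\in\overline{\mathcal U}_{\epsilon,\star}^{\mathrm{rob}}$.

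No step is a genuine obstacle. The only point that needs care is that the quantities being compared are finite real numbers, so that the exchange of supremum and inequality is legitimate. This is exactly where compactness of $\mathcal U_0$ and of $\mathcal W(z)$ enters. If $\mathcal U_0$ were unbounded, $\upsigma_{\mathcal U_0}(a_H(z))$ could equal $+\infty$, and the equivalence would still hold with extended-real conventions. As a consequence, Theorem~\ref{thm:uniform_boundary_set}(i) yields robust invariance of $\mathcal C_{\epsilon,\star}$ for every input signal taking values in $\mathcal U_0$.
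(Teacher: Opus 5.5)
Your proof is correct and rests on the same key step as the paper's: for each fixed $z\in\Gamma_{\epsilon,\star}$, requiring the half-space inequality for every $u\in\mathcal U_0$ is equivalent to requiring it with $a_H(z)^\top u$ replaced by $\max_{u\in\mathcal U_0}a_H(z)^\top u=\upsigma_{\mathcal U_0}(a_H(z))$. The paper instead packages this as a global min--max of $\upzeta(z,u)=a_H(z)^\top u+\upsigma_{\mathcal W(z)}(\nabla\mathcal H(z))-d_H(z)$ over $\Gamma_{\epsilon,\star}\times\mathcal U_0$ via Weierstrass, which tacitly needs continuity of $z\mapsto\upsigma_{\mathcal W(z)}(\nabla\mathcal H(z))$, and it writes the condition as $0<\min_{z}\max_{u}\upzeta$; your pointwise argument needs no such continuity, checks both implications explicitly, and states the condition in the correct direction, $\upzeta(z,u)\leq0$ for all $(z,u)$.
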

\begin{proof}
Let $\upzeta(z,u)=a_{H}^\top u+\upsigma_{\mathcal{W}(z)}(\nabla\mathcal{H}(z))-d_{H}(z)$. The function $\upzeta$ is continuous in both of its arguments. Since both $\Gamma_{\epsilon,\star}$ and $\mathcal{U}_0$ are compact, then by Weierstrass theorem $\upzeta$ constrained to this sets admits maximum and minimum with respect to both of its arguments.

The certification of arbitrary compact input sets then requires
\begin{align}
	0&<\min_{z\in\Gamma_{\epsilon,\star}}\max_{u\in \mathcal{U}_0} \upzeta(z,u)\nonumber\\
	&=\min_{z\in\Gamma_{\epsilon,\star}} \left[\max_{u\in \mathcal{U}_0} a_{H}^\top u\right]+\upsigma_{\mathcal{W}(z)}(\nabla\mathcal{H}(z))-d_{H}(z)\nonumber\\
	&=\min_{z\in\Gamma_{\epsilon,\star}}\upsigma_{\mathcal{U}_0}(a_{H}(z))+\upsigma_{\mathcal{W}(z)}(\nabla\mathcal{H}(z))-d_{H}(z)
\end{align}
\end{proof}

Define the residual boundary dissipation after worst-case disturbance injection,
\begin{equation}
    \updelta(z)
    :=d_H(z)-\upsigma_{\mathcal W(z)}(\nabla\mathcal{H}(z)).
    \label{eq:residual_boundary_margin}
\end{equation}
Assume $\updelta(z)\geq0$ on the boundary, so that zero input is itself robustly admissible. The exact radius of the largest origin-centered norm ball contained in~\eqref{eq:uniform_robust_set} is
\begin{equation}
    \uprho_{\epsilon,\star}
    :=\inf_{\substack{z\in\Gamma_{\epsilon,\star}\\ a_H(z)\neq0}}
       \frac{\updelta(z)}{\|a_H(z)\|_*},
    \label{eq:exact_robust_radius}
\end{equation}
Boundary points with $a_H(z)=0$ impose no input-radius restriction as long as $\updelta(z)\geq0$. Consequently, every $\uprho\leq\uprho_{\epsilon,\star}$ satisfies
$\{u:\|u\|\leq\uprho\}\cap\mathcal U\subseteq\overline{\mathcal U}_{\epsilon,\star}^{\mathrm{rob}}$.

A frequently useful but more conservative separable bound follows from
\begin{equation}
    \frac{\min_{\Gamma_{\epsilon,\star}}\updelta(z)}
         {\max_{\Gamma_{\epsilon,\star}}\|a_H(z)\|_*}
    \leq
    \inf_{\substack{z\in\Gamma_{\epsilon,\star}\\a_H(z)\neq0}}
    \frac{\updelta(z)}{\|a_H(z)\|_*}.
    \label{eq:separable_radius_lower_bound}
\end{equation}
This inequality corrects the tempting but generally false replacement of the infimum of a ratio by the ratio of independent extrema.

\subsection{Regular energy shells and normal robustness decomposition}
\label{app:regular_boundary}
The regularity condition for the shell has deeper geometric implications, since it allows to clearly outline the vectors resulting in inward and outward motion with respect to the set $\mathcal{C}_{\epsilon,\star}$.

\begin{proposition}[Regular energy boundary]
\label{prop:regular_boundary}
Assume that
\begin{equation}
    \nabla\mathcal H(z)\neq0,
    \qquad z\in\Gamma_{\epsilon,\star}.
    \label{eq:regular_boundary_condition}
\end{equation}
Then every point of $\Gamma_{\epsilon,\star}$ belongs locally to the regular level set
$\{z:h_{\epsilon,\star}=0\}$, which is a $C^1$ embedded hypersurface of dimension $(d-1)$, and we can characterize the tangent space and the tangent cone at $\Gamma_{\epsilon,\star}$ as
\begin{align}
    T_z\Gamma_{\epsilon,\star}
    &=\{v:\nabla\mathcal H(z)^\top v=0\},\\
    T_{\mathcal C_{\epsilon,\star}}(z)
    &=\{v:\nabla\mathcal H(z)^\top v\leq0\}.
    \label{eq:tangent_cone_energy}
\end{align}
In addition, if the boundary is compact, then
$\kappa_{\epsilon,\star}:=\inf_{\Gamma_{\epsilon,\star}}\|\nabla\mathcal H\|_2>0$.
\end{proposition}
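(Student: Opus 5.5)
The plan is to treat the three claims in order: the hypersurface structure, the tangent space and tangent cone, and the positive lower bound on the gradient norm. Each rests on the regularity assumption~\eqref{eq:regular_boundary_condition} together with the $C^1$ regularity of $\mathcal H$.

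\emph{Hypersurface structure.} First we check that $\Gamma_{\epsilon,\star}\subseteq\{z:h_{\epsilon,\star}(z)=0\}$. Since $h_{\epsilon,\star}$ is continuous, $\{h_{\epsilon,\star}\geq0\}$ is closed. Points with $h_{\epsilon,\star}>0$ form an open set, so they are interior to the connected component $\mathcal C_{\epsilon,\star}$; a small ball around such a point is connected and lies in the superlevel set, hence in the same component. Therefore every boundary point satisfies $h_{\epsilon,\star}=0$. Because $\nabla h_{\epsilon,\star}=-\nabla\mathcal H\neq0$ there, the implicit function theorem (regular value theorem, in the sense of Proposition~\ref{prop:reg_bound}) shows that near each $z\in\Gamma_{\epsilon,\star}$ the level set is a $C^1$ embedded $(d-1)$-dimensional hypersurface. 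Locally it is a graph over $\nabla\mathcal H(z)^\perp$, with $\{h_{\epsilon,\star}>0\}$ on one side and $\{h_{\epsilon,\star}<0\}$ on the other.

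\emph{Tangent space and tangent cone.} The tangent space of a regular level set is the kernel of the differential, which gives $T_z\Gamma_{\epsilon,\star}=\{v:\nabla\mathcal H(z)^\top v=0\}$. For the Bouligand tangent cone, the main subtlety is to restrict to the component. Near $z$, the local graph picture shows that $\mathcal C_{\epsilon,\star}$ coincides with $\{h_{\epsilon,\star}\geq0\}$ intersected with a small ball, since that local piece is connected and contains points of the component. The cone is then computed in two steps.
\begin{itemize}
\item If $v\in T_{\mathcal C}(z)$, there are $t_k\downarrow0$ and $v_k\to v$ with $h_{\epsilon,\star}(z+t_kv_k)\geq0$. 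A first-order expansion gives $-\nabla\mathcal H(z)^\top v\geq0$.
\item Conversely, if $\nabla\mathcal H(z)^\top v<0$, then $z+tv$ lies in the set for small $t$. Vectors with $\nabla\mathcal H(z)^\top v=0$ are limits of such strict directions, and the tangent cone is closed, so they belong to it as well.
\end{itemize}
This is the linearized-constraint (Abadie/LICQ) argument, valid because the gradient is nonzero.

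\emph{Positive lower bound.} $\Gamma_{\epsilon,\star}$ is closed and bounded, since $\mathcal C_{\epsilon,\star}$ is compact by Lemma~\ref{lemma:coercivity}, so it is compact. The map $z\mapsto\|\nabla\mathcal H(z)\|_2$ is continuous and strictly positive on it. By Weierstrass the infimum is attained, hence $\kappa_{\epsilon,\star}>0$.

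The main obstacle is the local identification of $\mathcal C_{\epsilon,\star}$ with the full superlevel set near boundary points. This is where the connected-component definition could in principle interfere, for instance if two components touched. Regularity excludes touching, since the level set is a manifold locally separating the two sides, and we would make this precise via the local graph coordinates.
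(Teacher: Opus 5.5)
Your proof is correct and follows the paper's route: the regular value (submersion) argument for the $C^1$ hypersurface, the first-order characterization of the tangent cone via $\nabla h_{\epsilon,\star}=-\nabla\mathcal H\neq0$, and Weierstrass on the compact boundary to get $\kappa_{\epsilon,\star}>0$. Your extra steps fill in points the paper's proof passes over when it simply identifies $\mathcal C_{\epsilon,\star}$ with the superlevel set. These are showing $\Gamma_{\epsilon,\star}\subseteq\{h_{\epsilon,\star}=0\}$, using the implicit-function graph to identify the connected component locally with the full superlevel set, and the closure argument for directions with $\nabla\mathcal H(z)^\top v=0$.
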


\begin{proof}
Consider the BF $h_{\epsilon,\star}(z)$, and its directional derivative
$Dh_{\epsilon,\star}(z)[v]=-\nabla\mathcal H(z)^\top v$. Since $h_{\epsilon,\star}$ maps $\real^d$ to $\real$, the derivative is surjective at all points $z\in\Gamma_{\epsilon,\star}$ such that $\nabla\mathcal{H}(z)\neq0$. Since by assumption $\nabla\mathcal{H}(z)\neq 0$, we have that $h_{\epsilon,\star}=0$ is a submersion and $\Gamma_{\epsilon,\star}$ is a $C^1$ $(d-1)$-hypersurface.

Because $\mathcal C_{\epsilon,\star}$ is the superlevel set $h_{\epsilon,\star}\geq0$, the feasible first-order directions are defined by $Dh_{\epsilon,\star}(z)[v]\geq0$,or equivalently $\nabla\mathcal{H}(z)^\top v\leq 0$, giving~\eqref{eq:tangent_cone_energy}. Finally, by Weierstrass theorem the continuous function $\|\nabla\mathcal H\|_2$ attains a positive maximum and minimum on the compact boundary, which must be strictly positive given the assumption $\nabla\mathcal{H}(z)\neq 0$ on $\Gamma_{\epsilon,\star}$.
\end{proof}

The characterization of tangent space and tangent cone of $\Gamma_{\epsilon,\star}$, and in particular the nonzero-gradient condition is essential for a first-order barrier test. If $\nabla\mathcal{H}=0$ at a boundary point, $\mathcal{L}_f h_{\epsilon,\star}$ vanishes and $Dh_{\epsilon,\star}(z)[v]\equiv 0$ for any vector $v\in\real^d$: therefore, the BF condition fails to distinguish inward from outward motion. Thereby, regularity makes the energy gradient a valid boundary normal for the BF argument.

For $z\in\Gamma_{\epsilon,\star}$ define the outward energy normal
\begin{equation}
    \upeta(z):=\frac{\nabla\mathcal{H}(z)}{\|\nabla\mathcal{H}(z)\|_2}.
\end{equation}
Positive homogeneity of support functions and the identities
\begin{align}
    d_H(z)&=\|\nabla\mathcal{H}(z)\|_2^2 \upeta(z)^\top R(z)\upeta(z),\\
    \|a_H(z)\|_*&=\|\nabla\mathcal{H}(z)\|_2\|G(z)^\top \upeta(z)\|_*,\\
    \upsigma_{\mathcal W(z)}(\nabla\mathcal{H}(z))
    &=\|\nabla\mathcal{H}(z)\|_2\upsigma_{\mathcal W(z)}(\upeta(z))
\end{align}
show that, whenever $G(z)^\top \upeta(z)\neq0$,
\begin{equation}\label{eq:norm_inp}
    \frac{\updelta(z)}{\|a_H(z)\|_*}
    =\frac{\|\nabla\mathcal{H}(z)\|_2 \upeta(z)^\top R(z)\upeta(z)
    -\upsigma_{\mathcal W(z)}(\upeta(z))}
    {\|G(z)^\top \upeta(z)\|_*}.
\end{equation}
Interestingly,~\eqref{eq:norm_inp} reveals that, with respect to the input port, only the directions that are normal to the level set consumes the energy margin, while tangent directions leave it unaffected. In particular, if $\|G(z)^\top \upeta(z)\|_*=0$ for all $z\in\Gamma_{\epsilon,\star}$, the input is tangent to the Hamiltonian shells over the entire boundary and the energy certificate imposes no magnitude restriction on $u$; only the disturbance margin must remain nonnegative.

Let
\begin{align}
    r_{\epsilon,\star}&:=\inf_{\Gamma_{\epsilon,\star}}\upeta^\top R\upeta,\\
    \bar g_{\epsilon,\star}^{\perp}&:=\sup_{\Gamma_{\epsilon,\star}}\|G^\top \upeta\|_*,\\
    \bar w_{\epsilon,\star}^{\perp}&:=\sup_{\Gamma_{\epsilon,\star}}\upsigma_{\mathcal W(z)}(\upeta).
\end{align}
If $\bar g_{\epsilon,\star}^{\perp}>0$, and
$r_{\epsilon,\star}\kappa_{\epsilon,\star}\geq\bar w_{\epsilon,\star}^{\perp}$, then~\eqref{eq:norm_inp} yields
\begin{equation}
    \uprho_{\epsilon,\star}
    \geq
    \frac{r_{\epsilon,\star}\kappa_{\epsilon,\star}
    -\bar w_{\epsilon,\star}^{\perp}}
    {\bar g_{\epsilon,\star}^{\perp}},
\end{equation}
which proves~\eqref{eq:norm_inp}. 

For each attractor component one may define the maximal regular-shell energy
\begin{equation}
    \epsilon_\star^{\mathrm{reg}}
    :=\sup\left\{\bar\epsilon>0:\nabla\mathcal H(z)\neq0\quad 
    \forall z\in\Gamma_{e,\star},\: 
    \forall \epsilon\in(0,\bar\epsilon)\right\}.
    \label{eq:regular_shell_threshold}
\end{equation}
Every compact shell below this threshold is eligible for the regular-boundary certificate. When a shell approaches a critical energy containing a saddle or another stationary point, $\kappa_{\epsilon,\star}$ can collapse toward zero, quantitatively revealing the loss of first-order robustness. After the saddle point, the $\uprho$-safety certificate exit resumes, but the new certified region will contain the sets on "both" sides of the saddle.

\subsection{Bounds under local PL and persistent forcing}
\label{app:pl_bounds}
Assume the component $\mathcal C_{\epsilon,\star}$ lies inside a neighborhood $\mathcal N_\star$ of a local minimum $z_\star\in\real^d$ on which the local Polyak-\L{}ojasiewicz (PL) condition~\citep{KH-NJ-SM:16} holds for the Hamiltonian $\mathcal H$. 
\begin{equation}
    \frac12\|\nabla\mathcal{H}(z)\|_2^2\geq\upmu_\star\left(\mathcal{H}(z)-\mathcal{H}(z_\star)\right)=\upmu_\star V_\star(z),
    \qquad
    d_H(z)\geq r_\star\|\nabla\mathcal{H}(z)\|_2^2,
    \label{eq:pl_app_component}
\end{equation}
for constants $\upmu_\star,r_\star>0$ where $r_\star\leq\min_{z\in\mathcal{C}_{\epsilon,\star}}\uplambda_{\min}(R(z))$. The PL condition is associated to a region of convex growth around a minima, and since we are requiring it to be satisfied for all points in $\mathcal{N}_\star$, then necessarily we will have that $\mathcal{N}_\star\subset \mathcal{C}_{\epsilon_\star^{\text{reg}},\star}$, where $\epsilon_\star^{\text{reg}}$ has been defined in~\eqref{eq:regular_shell_threshold}. Indeed, it holds that
\begin{equation}
	\lim_{\epsilon\to\epsilon_\star^{\text{reg}}} \kappa_{\epsilon,\star}=0
\end{equation}
and $\mathcal{N}_\star\supset \mathcal{C}_{\epsilon_\star^{\text{reg}},\star}$ would lead, for a point $z\in\Gamma_{\epsilon_\star^{\text{reg}},\star}$ to the contraddiction
\begin{equation}
	0\geq \underbrace{\upmu_\star V_\star(z)}_{>0}
\end{equation} 

Let $R(x)\succ 0$ over the entire component $\mathcal{C}_{\epsilon,\star}$. Then the PL condition implies
\begin{equation}
    d_H(z)\geq2r_\star\upmu_\star V_\star(z).
    \label{eq:pl_dissipation_chain}
\end{equation}

For the zeroing-BF condition inside the component,~\eqref{eq:pl_dissipation_chain} show that
\begin{equation}
    a_H(z)^\top u
    \leq2r_\star\upmu_\star V_\star(z)
    +\upalpha\bigl(\epsilon-V_\star(z)\bigr)
    \label{eq:pl_inner_set}
\end{equation}
is a sufficient conservative approximation of the exact admissible half-space. If $\upalpha(s)=\upgamma s$, the right-hand side is affine in $q=V_\star(z)\in[0,\epsilon]$. Considering the first derivative of $l(q)=2r_\star\upmu_\star q+\upgamma(\epsilon-q)$, we have $\dot l(q)=2r_\star\upmu_\star-\upgamma$. Consequently, if $2r_\star\upmu_\star>\upgamma$, then $l$ is strictly increasing over $[0,\epsilon]$ and its minimum on the interval is $\epsilon\upgamma$. Conversely, if $\upgamma>2r_\star\upmu_\star$, the $l$ is strictly decreasing over $[0,\epsilon]$ and its minimum is $2r_\star\upmu_\star\epsilon$. Therefore, the minimum of the r.h.s. can be expressed as
\begin{equation}
    \upbeta_\epsilon=\epsilon\min\{2r_\star\upmu_\star,\upgamma\}.
\end{equation}
Consequently, with
$A_\epsilon:=\sup_{z\in\mathcal C_{\epsilon,\star}}\|a_H(z)\|_*<\infty$, the bound
\begin{equation}
    \|u\|\leq\frac{\upbeta_\epsilon}{A_\epsilon}
    \label{eq:uniform_pl_norm_bound}
\end{equation}
ensures the chosen zeroing-BF inequality throughout the component whenever $A_\epsilon>0$. This is stronger than boundary-only invariance and is correspondingly more conservative.

On $\Gamma_{\epsilon,\star}$, PL yields
\begin{equation}
    \|\nabla\mathcal{H}(z)\|_2\geq\sqrt{2\upmu_\star\epsilon}>0.
    \label{eq:boundary_gradient_lower}
\end{equation}
Hence the boundary is regular by Proposition~\ref{prop:regular_boundary} and
$\kappa_{\epsilon,\star}\geq\sqrt{2\upmu_\star\epsilon}$. Substitution in the normal robustness bound gives the robust PL corollary
\begin{equation}
    \uprho_{\epsilon,\star}
    \geq
    \frac{r_{\epsilon,\star}\sqrt{2\upmu_\star\epsilon}
    -\bar w_{\epsilon,\star}^{\perp}}
    {\bar g_{\epsilon,\star}^{\perp}},
    \label{eq:pl_robust_boundary_radius}
\end{equation}
whenever the numerator is nonnegative. This is an invariance-only certificate; it need not satisfy the selected zeroing-BF inequality at every interior point.

\subsection{Input-to-energy robustness under persistent forcing}
\label{app:input_energy}
In this subsection we are going to present an additional bound on the confinement of the port-Hamiltonian trajectories $z(t)$ whenever they are in a component $\mathcal{C}_{\epsilon,\star}$ and we have uniform bounds on the input $u$, the input-port $G$ and the disturbances $w$.

\begin{proposition}[Input-to-energy tube]
\label{thm:input_energy}
Let the uniform bounds 
\begin{equation}\label{eq: unif_bound}
    \|G(z)\|_2\leq\bar g,\qquad
    \|u(t)\|_2\leq\bar u,\qquad
    \|w(t)\|_2\leq\bar w.
\end{equation}
hold in the component $\mathcal{C}_{\epsilon,\star}$  and set $c:=\bar g\bar u+\bar w$. If
\begin{equation}
    c\leq r_\star\sqrt{2\upmu_\star\epsilon},
    \label{eq:input_energy_invariance_condition}
\end{equation}
then $\mathcal C_{\epsilon,\star}$ is robustly invariant and every trajectory initialized in the component satisfies~\eqref{eq:input_energy_estimate}. In particular,
\begin{equation}
    \limsup_{t\to\infty}V_\star(z(t))
    \leq\frac{c^2}{2r_\star^2\upmu_\star}.
    \label{eq:ultimate_energy_tube}
\end{equation}
\end{proposition}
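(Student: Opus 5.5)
We work with the shifted energy $V_\star(t):=V_\star(z(t))$ along a solution of the perturbed dynamics~\eqref{eq:perturbed_ph_app} started in $\mathcal C_{\epsilon,\star}$. The plan is to derive a scalar differential inequality for $V_\star$ from the energy balance, and then use a comparison argument for both claims.

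\textbf{Step 1: the energy inequality.} By the Lie-derivative computation of Appendix~\ref{app:ph},
\[
\dot V_\star=-d_H(z)+\nabla\mathcal H(z)^\top\bigl(G(z)u+w\bigr).
\]
By~\eqref{eq:pl_app_component} we have $d_H\geq r_\star\|\nabla\mathcal H\|_2^2$. Cauchy--Schwarz together with~\eqref{eq: unif_bound} gives $|\nabla\mathcal H^\top(Gu+w)|\leq c\|\nabla\mathcal H\|_2$. Writing $s:=\|\nabla\mathcal H(z)\|_2$, we get
\[
\dot V_\star\leq -r_\star s^2+cs=-s\,(r_\star s-c),
\]
valid as long as $z(t)\in\mathcal C_{\epsilon,\star}\subseteq\mathcal N_\star$. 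The PL inequality gives $s\geq\sqrt{2\upmu_\star V_\star}$. Hence $\dot V_\star\leq0$ whenever $V_\star\geq c^2/(2r_\star^2\upmu_\star)=:V_c$, because then $r_\star s\geq r_\star\sqrt{2\upmu_\star V_\star}\geq c$. A sharper form is available: if $V_\star>V_c$, then $\dot V_\star\leq -\sqrt{2\upmu_\star V_\star}\,\bigl(r_\star\sqrt{2\upmu_\star V_\star}-c\bigr)<0$.

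\textbf{Step 2: invariance.} Hypothesis~\eqref{eq:input_energy_invariance_condition} is exactly $V_c\leq\epsilon$. On $\Gamma_{\epsilon,\star}$ we have $V_\star=\epsilon\geq V_c$, so $\mathcal L_f h_{\epsilon,\star}=-\dot V_\star\geq0$ for every admissible $u$ and $w$. By Proposition~\ref{prop:regular_boundary} (regularity follows from~\eqref{eq:boundary_gradient_lower}) and Nagumo's condition as in Appendix~\ref{app:barrier_primer} and Theorem~\ref{thm:uniform_boundary_set}(i), $\mathcal C_{\epsilon,\star}$ is robustly invariant. One point needs care: the solution stays in the connected component $\mathcal C_{\epsilon,\star}$, and not merely in the full sublevel set. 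This holds by continuity of trajectories, since the boundary of the component is part of the level set $V_\star=\epsilon$. Forward completeness then follows from compactness.

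\textbf{Step 3: ultimate bound.} Invariance keeps the trajectory in $\mathcal N_\star$ for all time, so Step 1 holds globally in time. Fix $\eta>0$ and consider the compact set $K_\eta:=\{z\in\mathcal C_{\epsilon,\star}:V_\star(z)\geq V_c+\eta\}$. On $K_\eta$ the sharper bound gives $\dot V_\star\leq-\beta_\eta<0$ with $\beta_\eta$ uniform. Therefore the trajectory enters $\{V_\star\leq V_c+\eta\}$ in finite time. It then stays there, because $\dot V_\star<0$ on the level set $V_\star=V_c+\eta$ (the same tangent-cone argument). Letting $\eta\downarrow0$ yields~\eqref{eq:ultimate_energy_tube}, which is the estimate~\eqref{eq:input_energy_estimate}. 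Equivalently, $\liminf h_{\epsilon,\star}\geq\epsilon-V_c$, as stated in the main text.

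\textbf{Expected obstacles.} The main obstacle is the borderline case $c=r_\star\sqrt{2\upmu_\star\epsilon}$, where $\dot V_\star\leq0$ holds only non-strictly on the boundary. Nagumo's condition still applies, since it requires only membership in the closed tangent cone. Uniqueness of Carathéodory solutions is needed here, and it holds by the standing Lipschitz assumptions. A second subtlety is that $d_H\geq r_\star\|\nabla\mathcal H\|_2^2$ is assumed only on $\mathcal N_\star$. This is why invariance must be established before the global-in-time comparison argument.
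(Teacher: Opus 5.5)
Your argument is correct for both invariance and the ultimate bound, but it takes a different route from the paper. The paper applies Young's inequality $cs\le\frac{r_\star}{2}s^2+\frac{c^2}{2r_\star}$ to the same estimate $\dot V_\star\le -r_\star s^2+cs$, and then PL, to obtain the \emph{linear} inequality $\dot V_\star\le -r_\star\upmu_\star V_\star+\frac{c^2}{2r_\star}$. It reads off invariance by evaluating this at $V_\star=\epsilon$. It then uses the comparison lemma to integrate it to
\[
V_\star(t)\le e^{-r_\star\upmu_\star t}V_\star(0)+\frac{c^2}{2r_\star^2\upmu_\star}\bigl(1-e^{-r_\star\upmu_\star t}\bigr),
\]
and the $\limsup$ follows by letting $t\to\infty$.

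You instead keep the quadratic $-s(r_\star s-c)$ and argue by sign, replacing Gr\"onwall with the uniform-decrease and finite-entry argument on $K_\eta$. The two routes give the same threshold $V_c$. The paper's right-hand side exceeds yours by exactly $\bigl(r_\star\sqrt{2\upmu_\star V_\star}-c\bigr)^2/(2r_\star)$, which vanishes at $V_\star=V_c$. So your pointwise decay bound is sharper, and you make explicit why invariance must come before the global-in-time estimate. The paper's linearization is cruder but integrates in one line. Your ``sharper form'' also needs a one-line justification: $s\mapsto -s(r_\star s-c)$ is decreasing for $s\ge c/(2r_\star)$, and there $s\ge\sqrt{2\upmu_\star V_\star}>c/r_\star$.

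The one thing your proof does not deliver is a time-explicit estimate. The statement says every trajectory satisfies an estimate and, ``in particular'', the $\limsup$ bound. So the referenced estimate is almost certainly the exponential transient bound displayed above, not the $\limsup$ itself as you assert. To cover that clause, insert Young's step into your Step 1 and apply the comparison lemma. This is valid for all $t\ge0$ precisely because your Step 2 keeps the trajectory in $\mathcal N_\star$.
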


\begin{proof}
 	Along the trajectories of the latent port-Hamiltonian EBM model, defined by dynamics~\eqref{eq:perturbed_ph_app} with vector field $f(z)=[J(z)-R(z)]\nabla\mathcal{H}(z)+G(z)u+w$, we have the following Lie derivative on the difference function $V_\star$.
\begin{align}\label{eq:Lyap_dis}
	\mathcal{L}_f V_\star(z,u)&=\mathcal{L}_f \mathcal H\: (z,u)\nonumber\\
	&=-\nabla\mathcal{H}(z)R(z)\nabla\mathcal{H}(z)+\left(G(z)u+w\right)^\top\nabla\mathcal{H}(z)\nonumber\\
	&\leq -r_\star\|\nabla\mathcal{H}(z)\|_2^2+c\|\nabla\mathcal{H}(z)\|_2\nonumber\\
	&\leq -\frac{r_\star}{2}\|\nabla\mathcal{H}(z)\|_2^2+\frac{c^2}{2r_\star}\nonumber\\
	&\leq -r_\star \upmu_\star V_\star(z)+\frac{c^2}{2r_\star}
\end{align}
where in the fourth passage we have used Young's inequality $ab\leq \varepsilon a^2/2+b^2/(\varepsilon 2)$ with $a=\|\nabla\mathcal H\|$, $b=c$, and $\varepsilon=r_\star$ and in the last passage the PL condition. 

Under Assumption~\ref{eq:input_energy_invariance_condition} we actually have on the level set $h_{\epsilon,\star}=0$
\begin{align}
	-\mathcal{L}_f h_{\epsilon,\star}\: (z,u)&=\mathcal{L}_f V_\star\: (z,u)\nonumber\\
	&\leq -r_\star \upmu_\star\epsilon + \frac{(r_\star\sqrt{2\upmu_\star\epsilon})^2}{2r_\star}=0
\end{align}
and consequently we also have that $\mathcal{C}_{\epsilon,\star}$ is robustly invariant for all $(u,w)$ satisfying Assumption~\ref{eq: unif_bound}. Finally, applying the comparison lemma on~\eqref{eq:Lyap_dis} we obtain
\begin{equation}
    V_\star(t)\leq e^{-r_\star\upmu_\star t}V_\star(0)
    +\frac{c^2}{2r_\star^2\upmu_\star}\left(1-e^{-r_\star\upmu_\star t}\right).
\end{equation}
Taking the limit $t\to +\infty$ we conclude.
\end{proof}
Notice finally that when $c<r_\star\sqrt{2\upmu_\star\epsilon}$, we are actually stating that asymptotically the trajectories of the system will be bounded to level sets $h_{\epsilon,\star}=s$ with $s\geq\epsilon-c^2/(2r_\star\upmu_\star)$.

\subsection{Activation-dependent guidance for local PL}
\label{app:activation_pl}
We begin the current subsection by proving the general result of Proposition~\ref{prop:hessian_pl}, and the proceed to specialize the result to an example architecture and different classes of activation functions.

\begin{proof}
Since $\mathcal{H}\in C^{1,1}_{\text{loc}}(\real^d)$, then the Hessian $\nabla^2\mathcal{H}(z)$ is defined a.e. $z\in\real^d$, and at the points of non-differentiability we relay on Clarke's generalized Jacobian $z\mapsto \partial(\nabla\mathcal{H})(z)$ as defined in Definition~\ref{def: gen-grad}. The set $\partial(\nabla\mathcal{H})(z)$ is nonempty, compact, convex, and upper semi-continuous, and consequently
\begin{equation}
	Q(z)=Q(z)^\top\qquad \forall Q(z)\in \partial(\nabla\mathcal{H})(z).
\end{equation}
Let $m_\star=\min_{Q(z)\in\partial(\nabla\mathcal{H})(z)} \uplambda_{\min} (Q(z))$, and since $z_{\star}$ is a local minimum, by compactness we have that $Q(z)\succ 0$ for all $Q(z)\in\partial(\nabla\mathcal{H})(z)$ and consequently $m_\star>0$.

By upper semicontinuity of the generalized Jacobian there exists $\updelta>0$ and a neighbourhood $\mathcal{N}_{\updelta}$ of the local minimum $z_\star$ such that
\begin{equation}
	\|Q(z)-Q(z_\star)\|_2\leq \updelta \qquad \forall z\in\mathcal{N}_{\updelta}
\end{equation}
Fix $\upmu_\star\in(0,m_\star)$ such that $\updelta<m_\star-\upmu_\star$. Then is holds that 
\begin{equation}\label{eq:matr_bound}
	Q(z)\succ \upmu_\star I_d\qquad \forall Q(z)\in\partial(\nabla\mathcal{H})(z),\: \forall z\in \mathcal{N}_\updelta  
\end{equation}
By Taylor expansion of $\mathcal{H}$ at $z_\star$ in $\mathcal{N}_\updelta$ we obtain
\begin{align}
	\mathcal{H}(z_\star)&\geq \mathcal{H}(z)+\nabla\mathcal{H}(z)^\top(z_\star-z)+\tfrac{1}{2}(z_\star-z)^\top Q(z)(z_\star-z)\quad Q(z)\in\partial(\nabla\mathcal{H})(z)\nonumber\\
	&\geq \mathcal{H}(z)+\nabla\mathcal{H}(z)^\top(z_\star-z)+\tfrac{\upmu_\star}{2}\|z_\star-z\|_2^2
\end{align}
where in the last passage we have applied~\eqref{eq:matr_bound}. Re-ordering the terms, it holds that
\begin{align}
	\mathcal{H}(z)-\mathcal{H}(z_\star)&\leq -\nabla\mathcal{H}(z)^\top(z_\star-z)-\tfrac{\upmu_\star}{2}\|z_\star-z\|_2^2\nonumber\\
	&\leq \nabla\mathcal{H}(z)^\top(z-z_\star)-\tfrac{\upmu_\star}{2}\|z-z_\star\|_2^2
\end{align}
The r.h.s. is a function $ar-\upmu_\star r^2/2$ that reaches a maximum in $r>0$ for $r=a/\upmu_\star$. Taking now $a=\nabla\mathcal H$ and $r=z-z_\star$ we obtain
\begin{equation}
	\mathcal{H}(z)-\mathcal{H}(z_\star)\leq\tfrac{1}{2\upmu_\star}\|\nabla\mathcal{H}(z)\|_2^2.
\end{equation}
\end{proof}

For a one-hidden-layer hybrid Hamiltonian,
\begin{equation}
    \mathcal H(z)=\frac12\|z\|_2^2-\mathcal F_2(W_{21}z),
    \qquad
    \nabla\mathcal H(z)=z-W_{21}^\top\Psi_2(W_{21}z).
    \label{eq:one_layer_gradient}
\end{equation}
If $\Psi_2$ is locally Lipschitz, the Clarke sum and chain rules give the sound inclusion
\begin{equation}
    \partial(\nabla\mathcal{H})(z_\star)
    \subseteq
    \left\{I-W_{21}^\top QW_{21}:
    Q\in\partial(\Psi_2)(W_{21}z_\star)\right\}.
    \label{eq:one_layer_clarke_hessian}
\end{equation}
Thus the tractable sufficient condition
\begin{equation}
    \inf_{Q\in\partial(\Psi_2)(W_{21}z_\star)}
    \uplambda_{\min}\!\left(I-W_{21}^\top QW_{21}\right)>0
    \label{eq:activation_clarke_margin}
\end{equation}
implies Proposition~\ref{prop:hessian_pl}. The inclusion may be strict for nonsmooth compositions, but positivity over the displayed outer set remains a sound certificate.

For the one-hidden-layer Hamiltonian~\eqref{eq:one_layer_gradient}, the gradient is the affine term $z$ minus the composition of the locally Lipschitz activation $\Psi_2$ with two linear maps. The Clarke generalized-Jacobian sum and chain rules yield~\eqref{eq:one_layer_clarke_hessian}. Therefore, requiring every matrix $I-W_{21}^\top QW_{21}$ with $Q\in\partial(\Psi_2)(W_{21}z_\star)$ to be uniformly positive definite is a directly checkable sufficient condition for Proposition~\ref{prop:hessian_pl}. The following specializations make this condition explicit.

\paragraph{Softmax.}
Let
\begin{equation}
    \mathcal F_2(s)=\frac1\beta\log\left(\sum_{i=1}^{N_2}e^{\beta s_i}\right),
    \qquad
    p(s)=\Psi_2(s)=\operatorname{softmax}(\beta s),
    \qquad \beta>0.
\end{equation}
Then $\partial(\Psi_2)(s)=\{D\Psi_2(s)\}$ with
\begin{equation}
    D\Psi_2(s)
    =\beta\left[\operatorname{diag}(p(s))-p(s)p(s)^\top\right],
\end{equation}
and a sufficient local PL condition at $z_\star$ is
\begin{equation}
    I-\beta W_{21}^\top
    \left[\operatorname{diag}(p_\star)-p_\star p_\star^\top\right]
    W_{21}\succ0,
    \qquad p_\star=p(W_{21}z_\star).
    \label{eq:softmax_pl_condition}
\end{equation}
Softmax is smooth and bounded, so it is also compatible with the bounded-first-hidden-layer condition used to establish coercivity.

\paragraph{Hyperbolic tangent.}
For the componentwise activation $\Psi_{2,i}(s_i)=\tanh(\beta s_i)$, one may take
\begin{equation}
    \mathcal F_2(s)=\sum_i\frac1\beta\log\cosh(\beta s_i),
\end{equation}
with
\begin{equation}
    \partial(\Psi_2)(s)
    =\left\{\beta\operatorname{diag}\!\left(\operatorname{sech}^2(\beta s_i)\right)\right\}.
\end{equation}
The generalized-Hessian certificate reduces to
\begin{equation}
    I-\beta W_{21}^\top
    \operatorname{diag}\!\left(\operatorname{sech}^2(\beta (W_{21}z_\star)_i)\right)
    W_{21}\succ0.
    \label{eq:tanh_pl_condition}
\end{equation}
The activation is smooth and bounded and therefore satisfies both the local regularity requirement and the first-hidden-layer boundedness requirement.

\paragraph{Logistic sigmoid.}
For $\Psi_{2,i}(s_i)=\upsigma(\beta s_i)$, with $\upsigma(r)=(1+e^{-r})^{-1}$,
\begin{equation}
    \partial(\Psi_2)(s)
    =\left\{\beta\operatorname{diag}\!\left(\upsigma(\beta s_i)(1-\upsigma(\beta s_i))\right)\right\},
\end{equation}
and local PL follows from
\begin{equation}
    I-\beta W_{21}^\top
    \operatorname{diag}\!\left(\upsigma_i^\star(1-\upsigma_i^\star)\right)
    W_{21}\succ0,
    \qquad
    \upsigma_i^\star=\upsigma(\beta (W_{21}z_\star)_i).
    \label{eq:sigmoid_pl_condition}
\end{equation}
Sigmoid is again smooth and bounded.

\paragraph{ReLU and activation kinks.}
For the componentwise ReLU activation $\Psi_{2,i}(s_i)=[s_i]_+$, a convex $C^{1,1}$ primitive is $\mathcal F_{2,i}(s_i)=\frac12[s_i]_+^2$. Its Clarke generalized Jacobian is
\begin{equation}
    \partial(\Psi_2)(s)
    =\left\{\operatorname{diag}(q):
    \begin{array}{ll}
        q_i=1,&s_i>0,\\
        q_i=0,&s_i<0,\\
        q_i\in[0,1],&s_i=0
    \end{array}
    \right\}.
    \label{eq:relu_clarke_jacobian}
\end{equation} Consequently, local PL is certified even at a kink whenever
\begin{equation}
    I-W_{21}^\top DW_{21}\succ0
    \qquad
    \text{for every }D\in\partial(\Psi_2)(W_{21}z_\star).
    \label{eq:relu_clarke_pl}
\end{equation}
Let $D_{\max}$ set the slope to one for every active or zero preactivation and to zero for every strictly inactive preactivation. Since $0\preceq D\preceq D_{\max}$ for every admissible $D$, condition~\eqref{eq:relu_clarke_pl} is equivalent to the single worst-case check
\begin{equation}
    I-W_{21}^\top D_{\max}W_{21}\succ0.
    \label{eq:relu_worst_case_pl}
\end{equation}
This removes the need to assume a fixed activation pattern around $z_\star$. ReLU remains unbounded, however, so it is not compatible with the bounded-first-hidden-layer hypothesis used above to establish global coercivity when it is chosen as $\Psi_2$; the local Clarke certificate is still applicable when ReLU appears only in deeper layers or when coercivity is guaranteed by another architectural mechanism.

\paragraph{Bounded piecewise-linear activations.}
The generalized certificate also covers nonsmooth activations that are compatible with the hybrid coercivity argument. For example, let $\Psi_{2,i}(s_i)=\operatorname{clip}(s_i,-1,1)$ (hard-$\tanh$). A convex $C^{1,1}$ primitive is
\begin{equation}
    \mathcal F_{2,i}(s_i)
    =\begin{cases}
        \frac12s_i^2,&|s_i|\leq1,\\
        |s_i|-\frac12,&|s_i|>1.
    \end{cases}
\end{equation}
The generalized slopes are one for $|s_i|<1$, zero for $|s_i|>1$, and any value in $[0,1]$ at $s_i=\pm1$. Thus the same worst-case diagonal test as~\eqref{eq:relu_worst_case_pl}, with $D_{\max}$ defined from the unsaturated and kink coordinates, certifies local PL. Unlike ReLU, hard-$\tanh$ is bounded and therefore can simultaneously satisfy the first-hidden-layer boundedness condition used for coercivity.

\paragraph{Deeper hybrid networks.}
For deeper architectures, the activation-independent statement remains Proposition~\ref{prop:hessian_pl}: it is sufficient that every element of the full projected Clarke generalized Hessian at the learned equilibrium be positive definite. Computing that set exactly may be conservative or expensive because generalized-Jacobian chain rules through the feedforward composition can yield set-valued inclusions. Any tractable outer approximation $\mathcal Q_\star$ satisfying
\begin{equation}
    \partial(\nabla\mathcal H)(z_\star)\subseteq\mathcal Q_\star
\end{equation}
therefore provides a sound sufficient certificate whenever every $M\in\mathcal Q_\star$ has a common positive spectral margin. The one-hidden-layer formulas above are the simplest exact or directly computable instances of this principle.

\section{Experimental details and reproducibility}
\label{app:experimental_details}

This section records the experimental choices used in the released code and complements the compact discussion in Section~\ref{sec:experiments}. The benchmark-specific architectures are intentionally allowed to vary: the common object across tasks is the pH-EBM construction, i.e., a learned Hopfield storage function coupled to free port-Hamiltonian interconnection, dissipation, and input maps. Unless stated otherwise, continuous-time rollouts are integrated with fourth-order Runge--Kutta (RK4), and all reported certificates are evaluated after training without modifying the learned parameters.

\begin{table}[!th]
\centering
\scriptsize
\caption{\textbf{Prediction accuracy and certified robustness across nonlinear dynamical systems.} \emph{Comparative performance of the pH-EBM across benchmarks addressing nonlinear system identification, nonconvex dynamics, and higher-dimensional controlled systems. Results reported within square brackets, $[..]$, correspond to evaluations on distinct test sets of the same benchmark, while entries separated by semicolons refer to variants of the reference architecture.}}
\label{tab:SI_experimental_results}
\resizebox{\linewidth}{!}{%
\begin{tabular}{@{}lccccc@{}}
\toprule
Benchmark & Metric $\downarrow$ & Published & portHNN-u & pH-EBM\\
\midrule
Silverbox & RMSE & $[0.289, 0.334, 0.257]$ & $[51.343, 50.981, 41.138]$ & $[0.395,0.545,0.327]$ \\
CED & RMSE & $[0.062, 0.047]$ & $[0.241, 0.366]$ & $[0.073,0.055]$ &\\
Duffing double-well & RMSE & - & $0.254$ & $0.048$ \\
$3$-link pendulum & RMSE & $0.162;0.162$ & $0.051;0.051$ & $0.014$\\
NanoDrone & RMSE & - & $[1.532,5.260,2.150,15.541]$ & $[1.368,3.550,1.343,11.648]$\\
NanoDrone (Melon) & RMSE & $[4.051,12.581,6.059,35.5735]$ & $[3.939,12.445,5.574,30.802]$ & $[11.860,28.014,7.169,36.597]$ \\
\bottomrule
\end{tabular}}
\end{table}

\subsection{Datasets, splits, and preprocessing}

\paragraph{Silverbox.}\label{par:Silverbox}
We use the canonical SISO Silverbox benchmark through the \texttt{nonlinear\_benchmarks} interface. Input and output are standardized using the mean and standard deviation of the training record only; the same affine transformation is then applied unchanged to every test record. The final model uses a four-dimensional latent state. Its initial state is inferred from a burn-in window before the free rollout begins. The released evaluation scores all three official test records---multisine, full arrow, and arrow without extrapolation - rather than selecting a single favorable trajectory. The RMSE score reported in the main is the average RMSE over the three test datasets: check Table~\ref{tab:SI_experimental_results} for the individual scores. The champion is trained on windows of 192 prediction steps with stride 16 and batch size 32. The benchmark-native output RMSE is reported in mV, together with the dimensionless NRMSE used internally for diagnostics.

\begin{figure}[!h]
\centering
\subfloat[Projected safe-set geometry and input-radius sweep.]{\includegraphics[width=.85\linewidth]{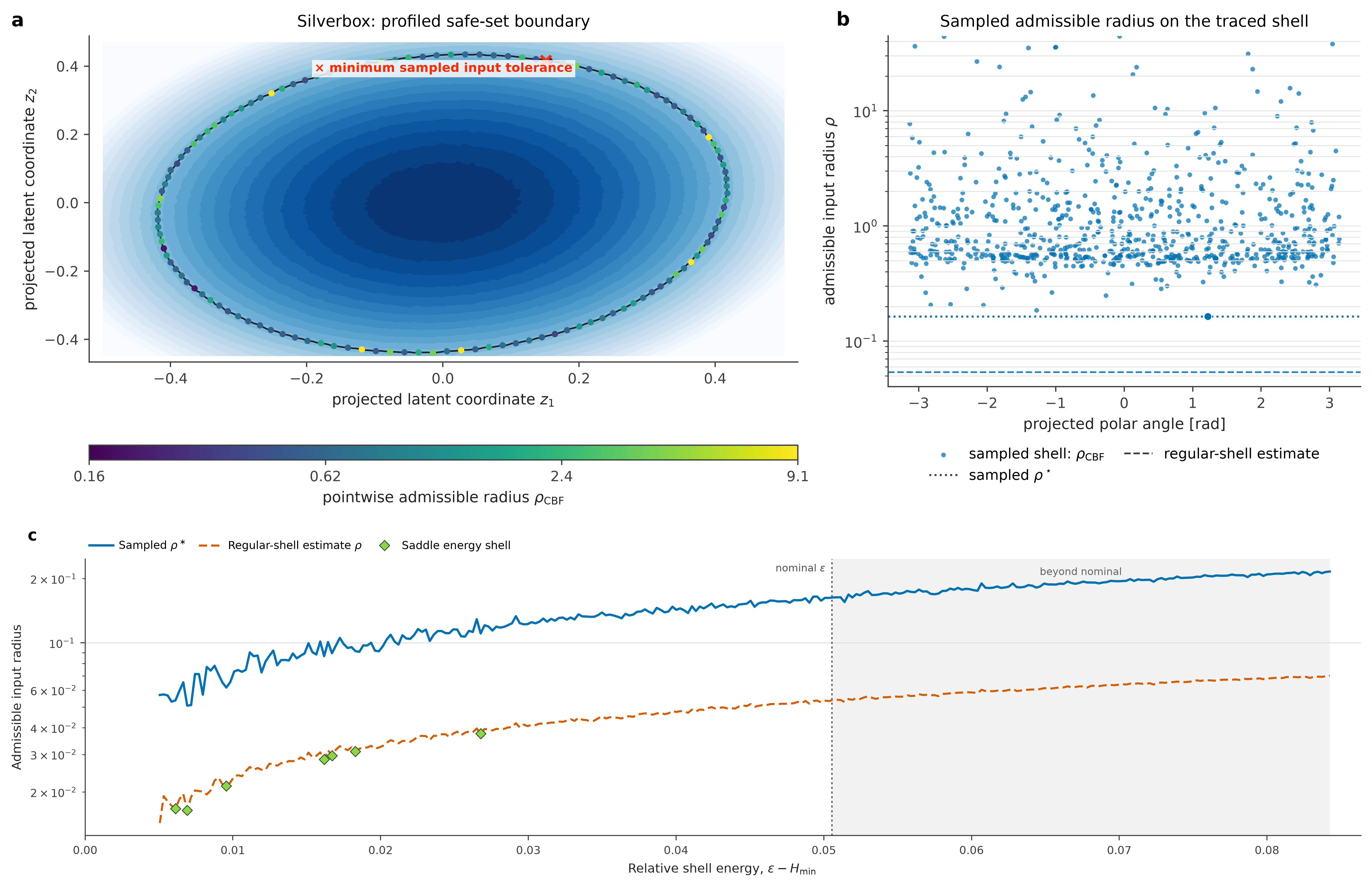}}

\subfloat[Affine Hamiltonian slice and profiled energy envelope.]{\includegraphics[width=.85\linewidth]{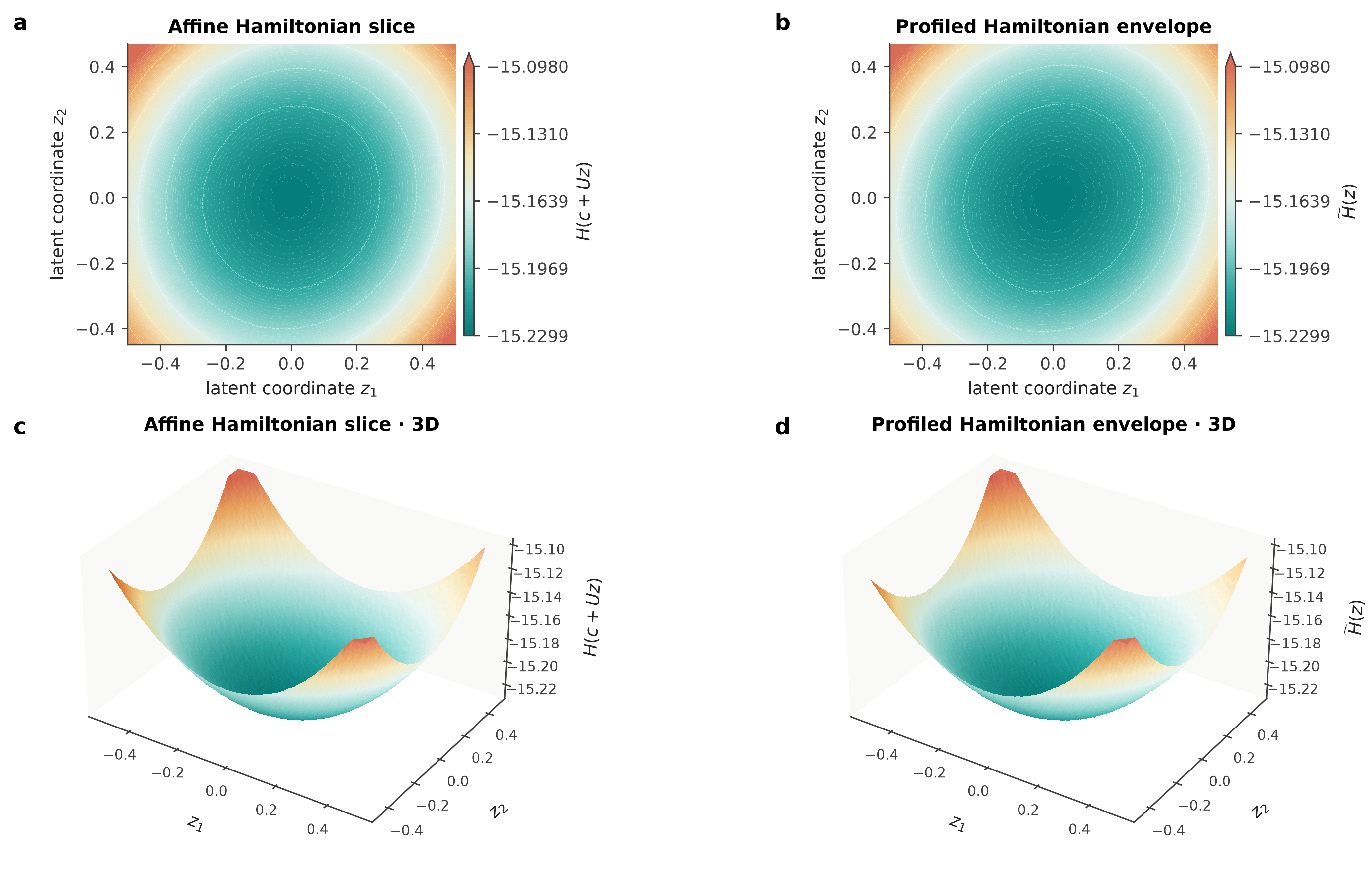}}
\caption{\textbf{Silverbox: learned energy geometry and certificate diagnostics.} The upper composite shows the profiled two-dimensional energy view at the nominal shell: traced boundary states are colored by their pointwise admissible-input radius, the red cross marks the least tolerant sampled state, and the adjacent panel resolves the same radii along the shell. The lower sweep compares the sampled shell minimum with the regular-shell estimate as the relative energy budget increases; green diamonds mark detected saddle-energy shells. The second composite contrasts an affine Hamiltonian slice with the profiled envelope in both 2D and 3D. Their close agreement indicates that the selected plane captures the dominant single-well geometry over the explored region. These plots visualize the \emph{learned} Hamiltonian and should not be interpreted as recovery of a unique physical Silverbox potential.}
\label{fig:si_silverbox}
\end{figure}

\paragraph{Coupled Electric Drive (CED).}\label{par:CED}
CED contains two SISO realizations corresponding to different actuation amplitudes. Both training realizations are concatenated and standardized with statistics computed jointly from the combined training data, while the two test records remain separate and are rolled out independently, with no state carried between them. The benchmark-provided state-initialization window has length 10. The selected configuration uses a four-dimensional latent state, 64-step training windows, unit stride, and batch size 16. We report each test RMSE in ticks/s and retain the mean NRMSE only as an aggregate diagnostic. This separation is important because the two records probe distinct operating amplitudes rather than repeated measurements of the same trajectory.

\begin{figure}[!th]
\centering
\subfloat[Projected safe-set boundary and epsilon sweep.]{\includegraphics[width=.85\linewidth]{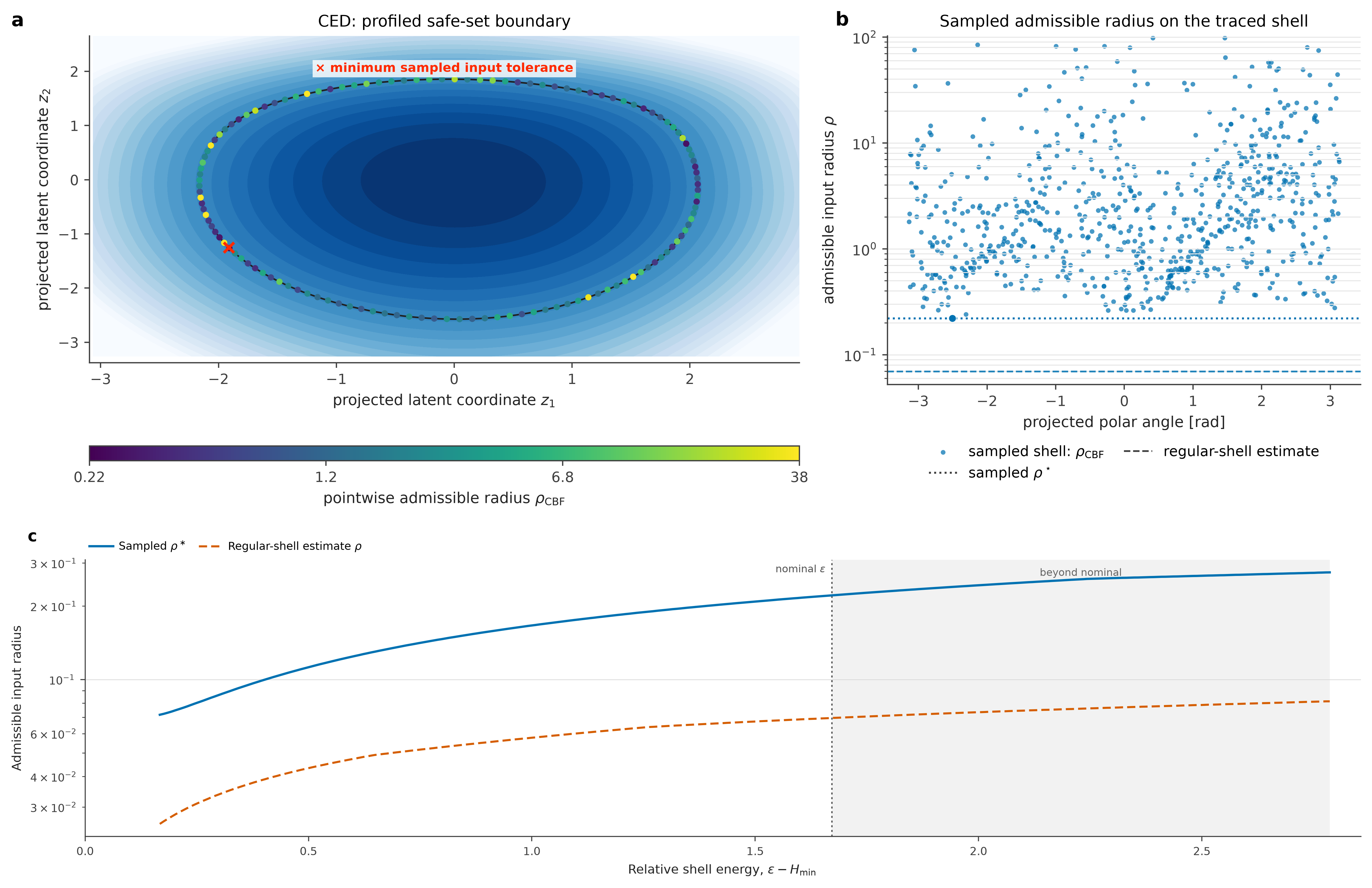}}

\subfloat[Affine slice and profiled Hamiltonian view.]{\includegraphics[width=.85\linewidth]{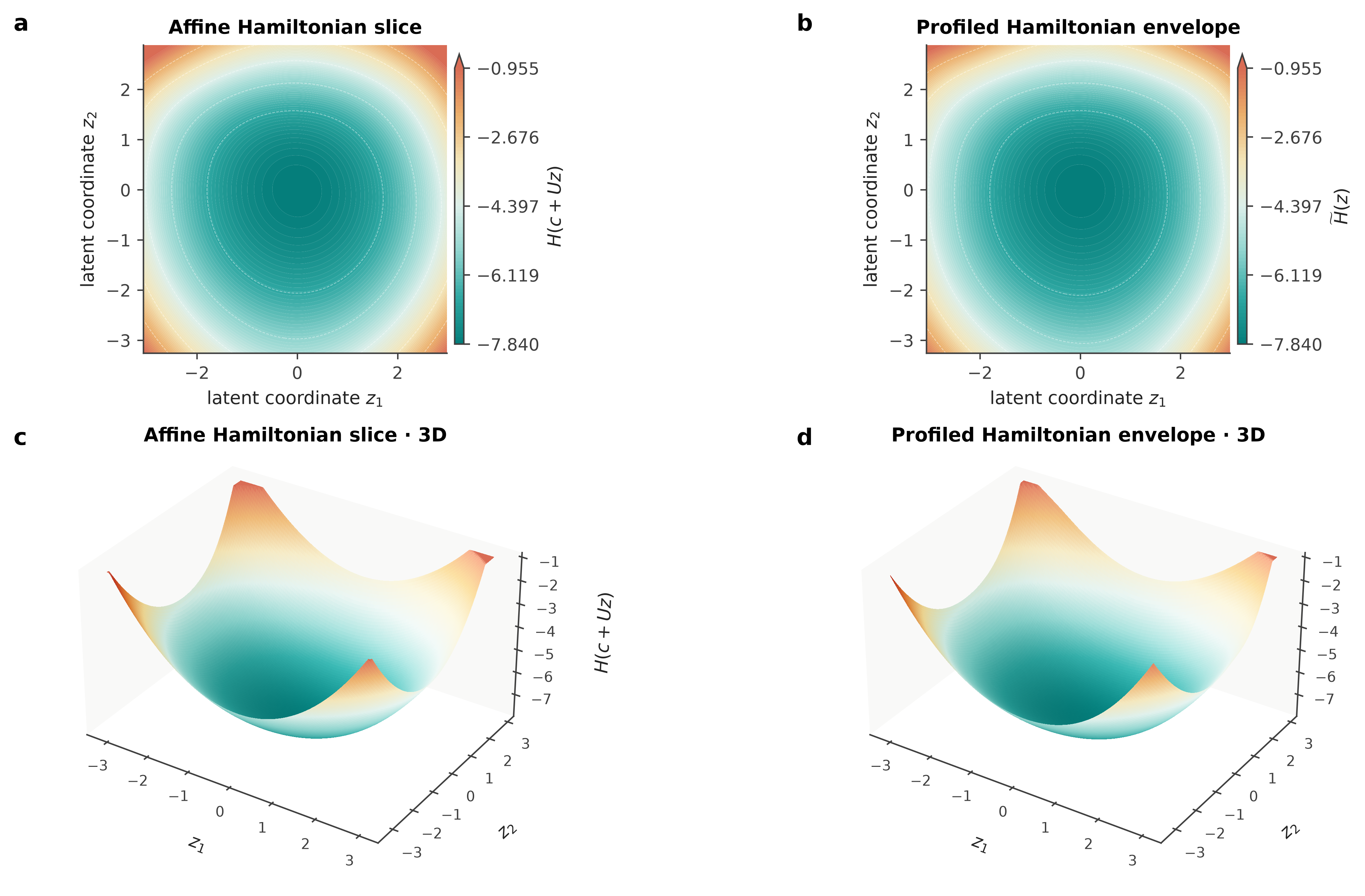}}
\caption{\textbf{CED: learned energy geometry and certificate diagnostics.} The upper composite shows the profiled safe-set boundary, colored by pointwise admissible-input radius, together with the shell-wise radius distribution and the least tolerant sampled state. The energy sweep compares the sampled minimum radius with the regular-shell estimate from the nominal shell into higher-energy regions. The second composite compares the affine slice and profiled envelope in 2D and 3D. The smooth, nearly monotone radius trend and close slice/profile agreement are consistent with an effectively single-well learned geometry over the explored region; they do not establish convexity of an underlying physical energy.}
\label{fig:si_ced}
\end{figure}

\paragraph{Duffing double well.}\label{par:duffing}
The Duffing experiment is state observed and therefore differs deliberately from the output-only identification benchmarks. Data are generated from the physical dynamics
$\dot q=p$, $\dot p=q-q^3-rp+u$ with $r=0.4$, sampling interval $\Delta t=0.02$ s, and 20 s trajectories. The frozen dataset contains 200 training, 40 validation, and 100 nominal-test trajectories. Initial conditions alternate between the two wells, with 20\% of the trajectories drawn from a higher-energy regime. Training inputs combine piecewise-constant, pseudo-random binary, and multisine forcing. Four held-out forcing families - step, chirp, unseen-frequency sinusoid, and long constant forcing - are generated only for OOD evaluation. Each OOD family contains 25 trajectories in the supplied artifact. No state or input normalization is used: the learned state is directly $z=(q,p)$, the physical initial condition is provided to the model, and the state itself is the prediction target. This makes the learned Hamiltonian directly comparable with the analytic mechanical energy.

\begin{figure}[!th]
\centering
\includegraphics[width=\linewidth]{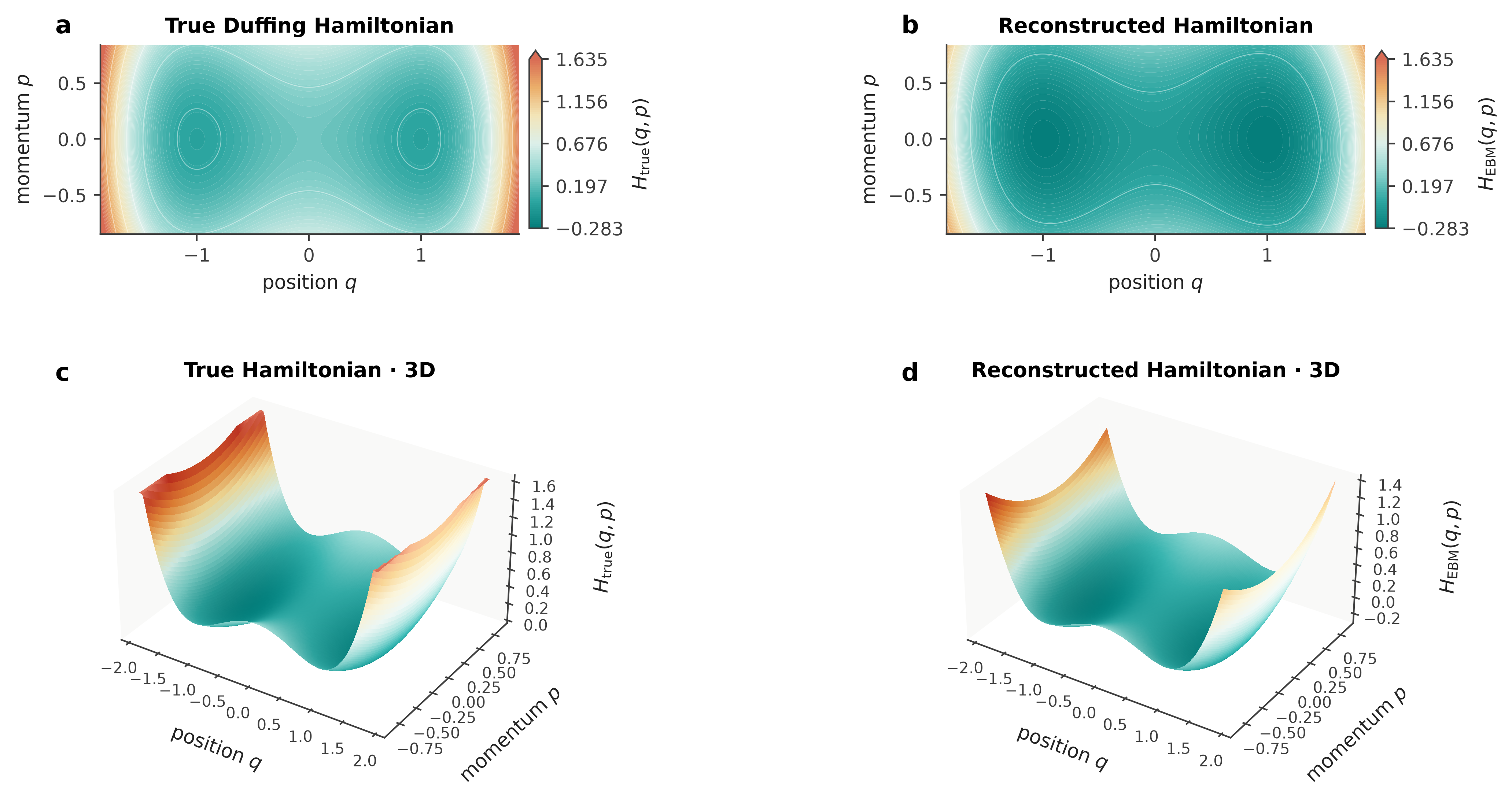}
\caption{\textbf{Duffing Hamiltonian reconstruction in physical coordinates.} Analytic (left) and learned pH-EBM (right) energies are shown directly in $(q,p)$, so no latent projection is involved. Both the contour and 3D views recover the characteristic double-well topology and the central saddle separating the two basins. The comparison supports qualitative recovery of the physical energy geometry; it does not imply unique or exact identification of the underlying Hamiltonian.}
\label{fig:si_duffing_hamiltonian}
\end{figure}

\paragraph{$n$-link pendulum.}\label{par:nlink}
The pendulum experiments import the arrays produced by the reference Deep Dissipative Dynamics data generator; the pH-EBM code does not regenerate an ``equivalent'' dataset. Inputs, partial observations, and full states are loaded from the corresponding \texttt{.npy} files. The primary mode is output-only: the observation consists of the first-joint angle/velocity pair, whereas the complete physical state is retained only for diagnostics. All supplied trajectories use $\Delta t=0.05$ s and duration 10 s. The pH-EBM uses an eight-dimensional latent state, an input-aware 10-sample encoder, and the same output-only convention for both the two- and three-link experiments. The final released configurations use 50-step windows, stride 50, batch size 8, and 500 epochs. The supplementary figures report both nominal reconstruction/OOD diagnostics and the learned projected energy geometry. The supplied release contains complete champion artifacts for $n=2$ and $n=3$.

\begin{figure}[!th]
\centering
\subfloat[$n=2$: prediction, OOD tests, and certificate diagnostics.]{\includegraphics[width=\linewidth]{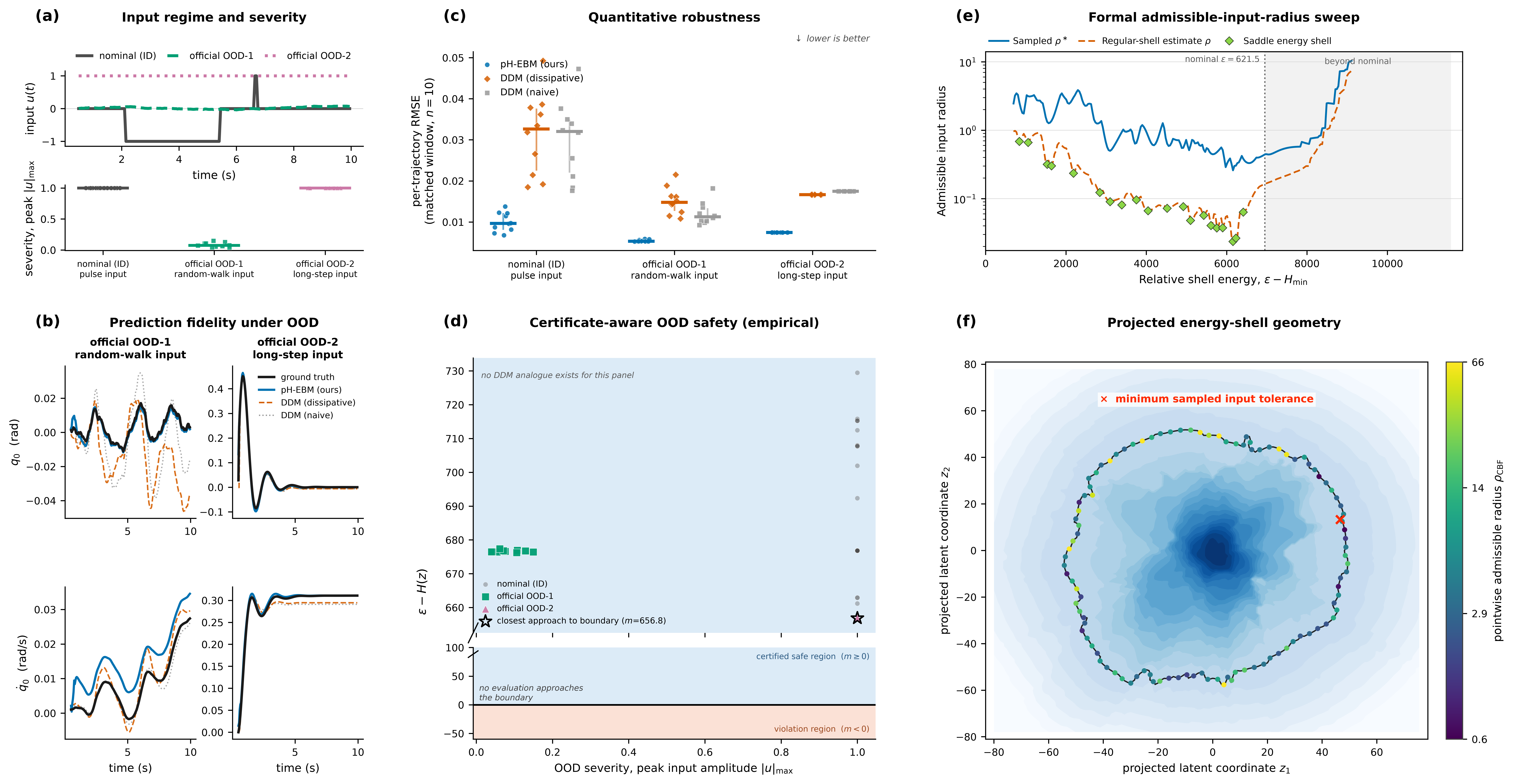}}

\subfloat[$n=2$: affine slice versus profiled envelope.]{\includegraphics[width=.95\linewidth]{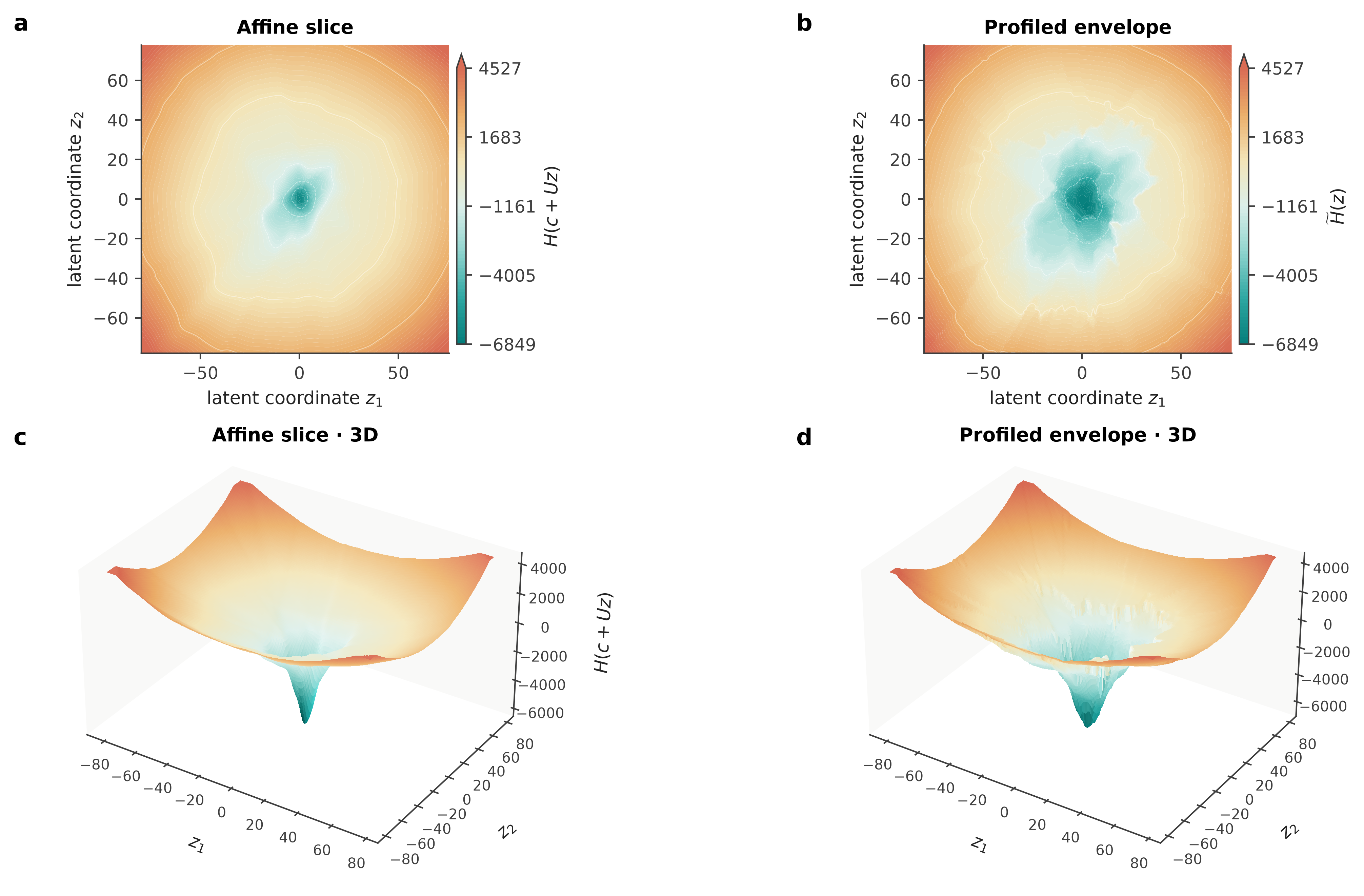}}
\caption{\textbf{$2$-link pendulum: prediction, OOD forcing, and certificate geometry.} The first composite reports the nominal pulse input and the two official OOD inputs, representative trajectory reconstructions, per-trajectory RMSE across pH-EBM and dissipative/naive DDM baselines, the empirical safety margin of the evaluated OOD trajectories, the admissible-input-radius sweep across energy shells, and the projected shell colored by pointwise radius. The radius decreases near several detected saddle-energy shells before increasing beyond the nominal operating energy. The second composite compares an affine Hamiltonian slice with the profiled envelope, showing the nonconvex structure that is hidden by any single trajectory projection.}
\label{fig:si_nlink_n2}
\end{figure}

\begin{figure}[!th]
\centering
\subfloat[$n=3$: prediction, OOD tests, and certificate diagnostics.]{\includegraphics[width=\linewidth]{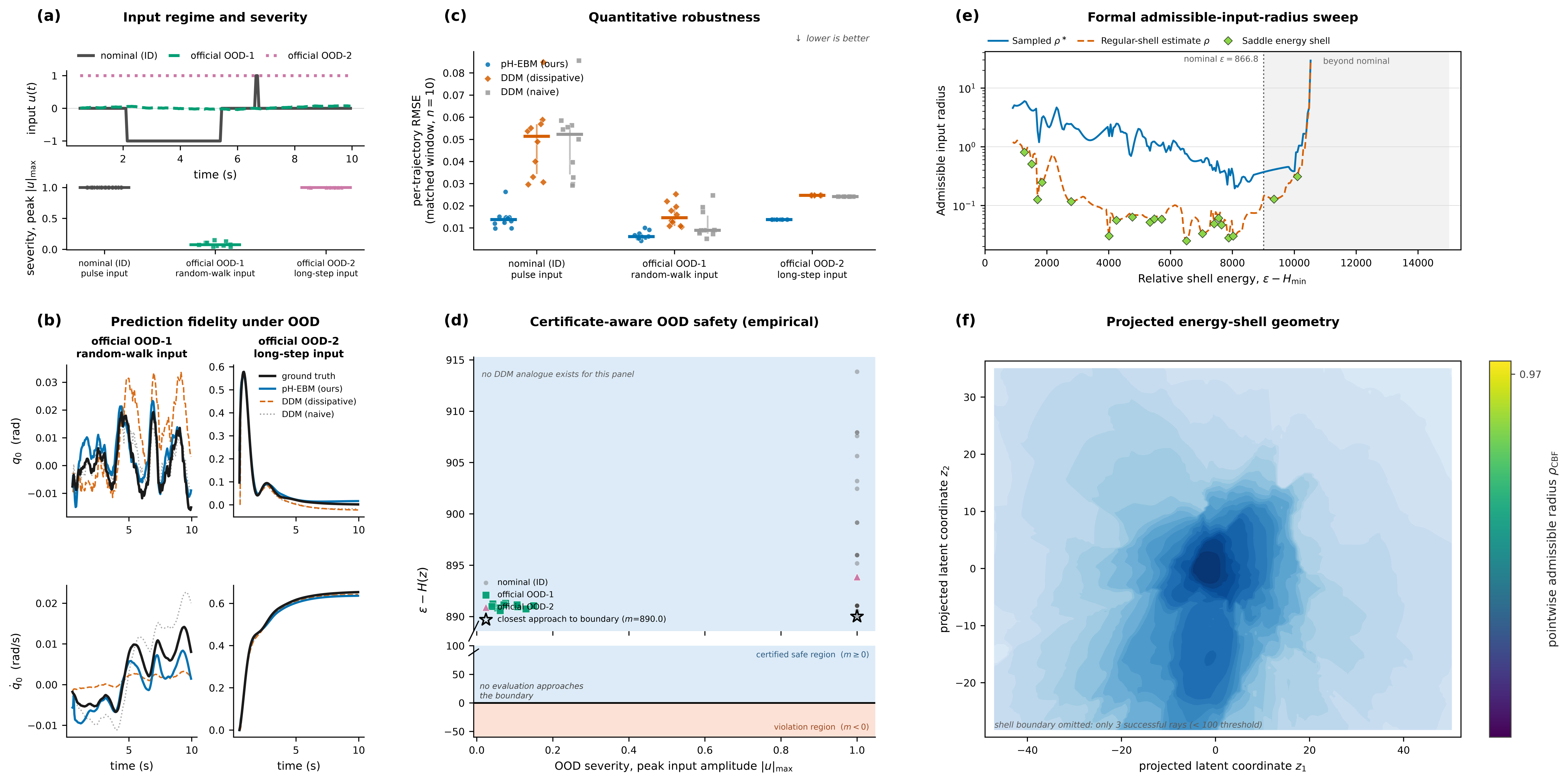}}\\[-1mm]

\subfloat[$n=3$: affine slice versus profiled envelope.]{\includegraphics[width=.95\linewidth]{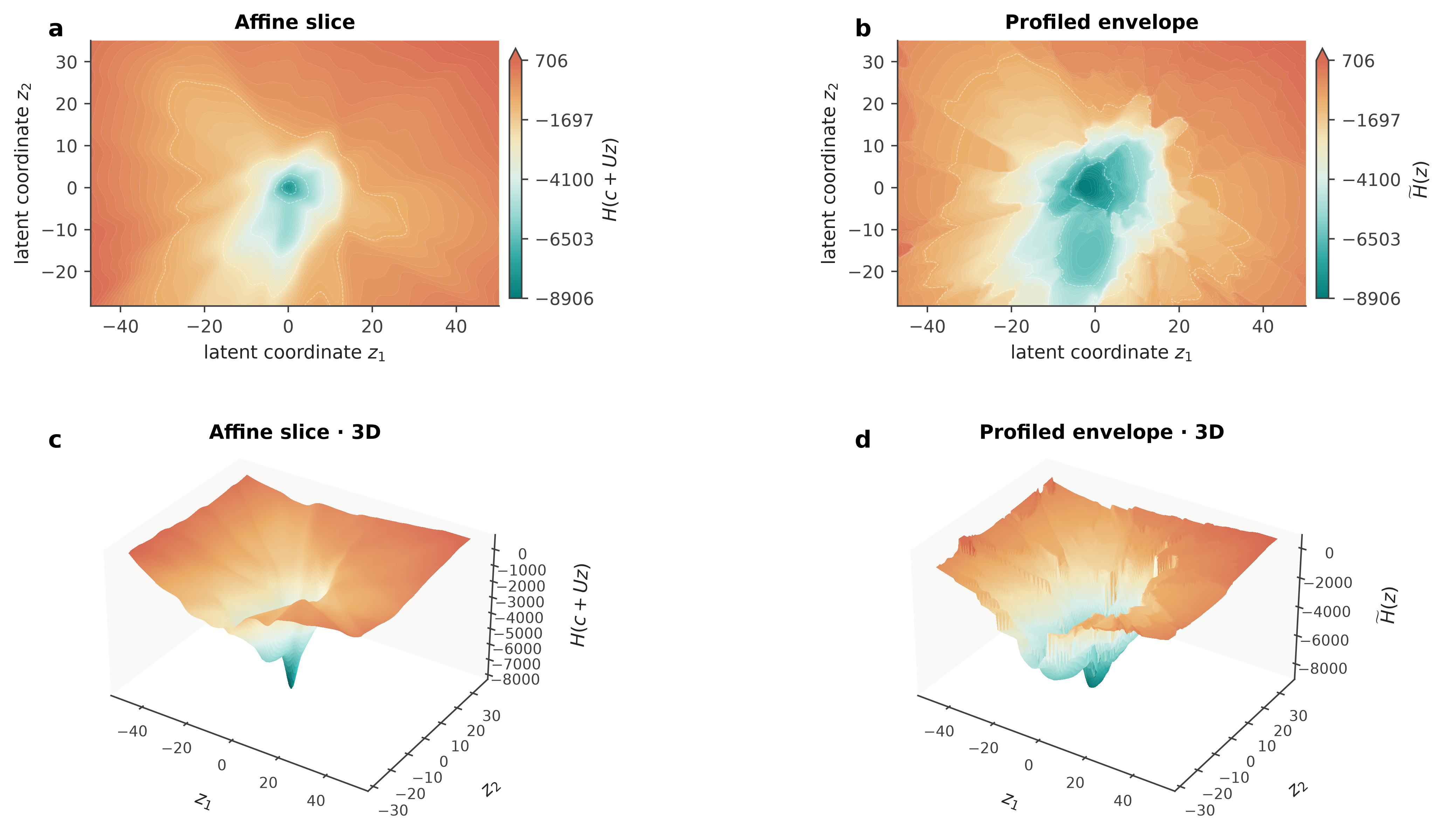}}
\caption{\textbf{$3$-link pendulum: prediction, OOD forcing, and certificate geometry.} The first composite mirrors the $2$-link protocol: nominal and official OOD inputs, representative reconstructions, per-trajectory RMSE, empirical safety margins, and an energy-shell sweep of the sampled and regular-shell admissible radii. Detected saddle-energy shells coincide with pronounced changes in the radius curve. The projected energy field is shown for geometric context, but the shell boundary is intentionally omitted because too few rays met the reconstruction threshold; it should therefore not be read as a complete projected boundary certificate. The second composite compares affine-slice and profiled-envelope views of the learned nonconvex Hamiltonian.}
\label{fig:si_nlink_n3}
\end{figure}

\paragraph{NanoDrone.}
The NanoDrone loader follows a frozen external dataset revision and validates all file names, timestamps, and quaternion norms before training. The training families are Square, Random, and Chirp, with four runs per family; runs 1-3 form the development training split and run 4 is used for development validation. The three Melon trajectories form the held-out extrapolation family and are explicitly excluded from scaler fitting and from the S3 model-selection protocol. Samples are spaced by $\Delta t=0.01$ s. The physical state is represented by 12 coordinates: position, linear velocity, the $SO(3)$ logarithm of attitude, and angular velocity. The four inputs are rotor angular speeds. State and input scalers are fitted only on the training trajectories, and the primary pH-EBM is trained in state-observed mode with an identity readout. The final S3 confirmation campaign evaluates horizons of 50, 100, 250, 500, and 1000 steps; the primary long-horizon comparison is 500 steps (5 s), ten times the 50-step (0.5 s) training/evaluation horizon emphasized by the benchmark. Four frozen seeds (11, 23, 47, 89) are used in the paired pH/black-box confirmation protocol, with model selection performed on the training-supported families only.

\begin{figure}[!th]
\centering
\subfloat[Chirp long-horizon prediction and learned safe envelope.]{\includegraphics[width=.85\linewidth]{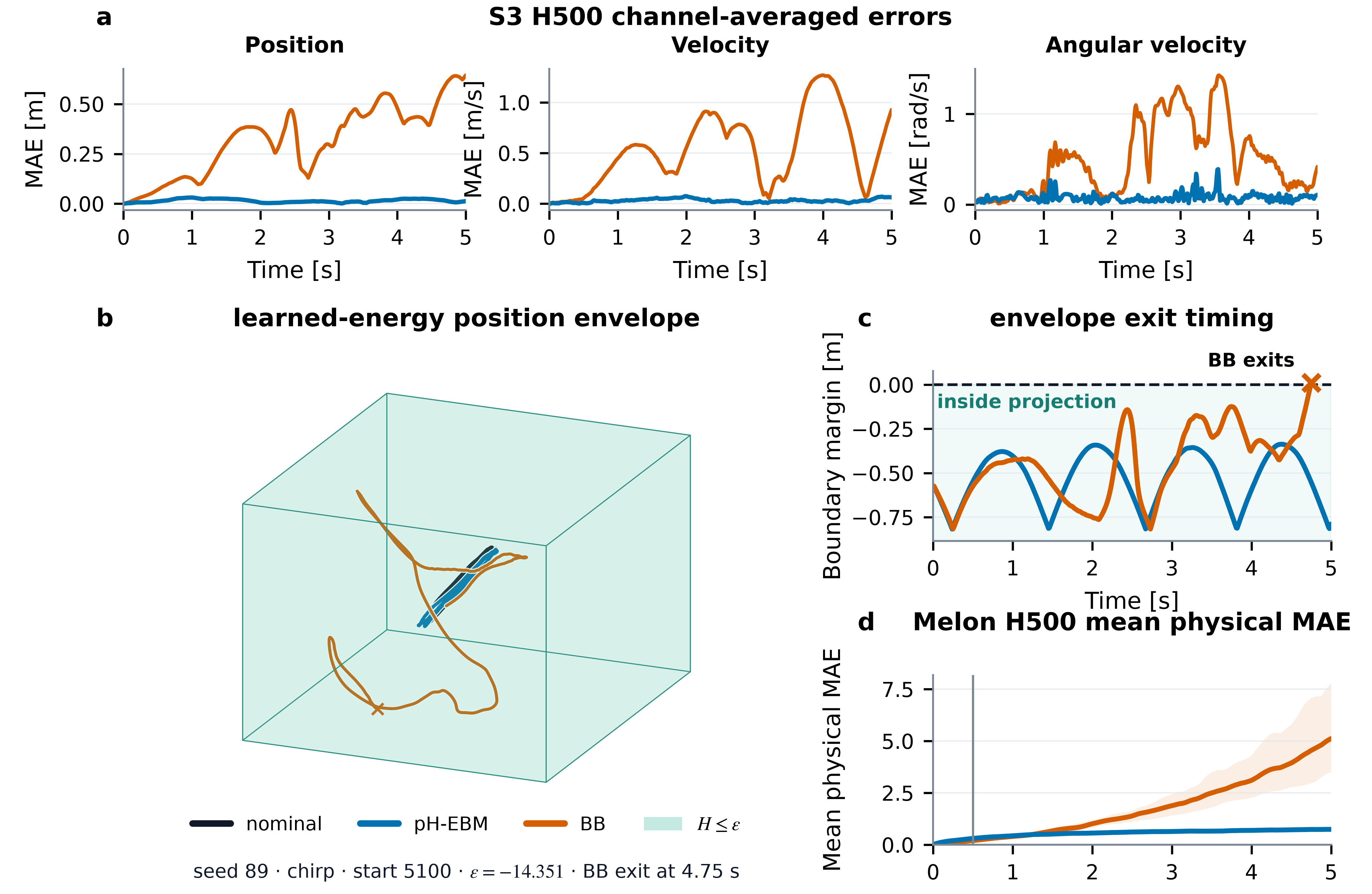}}

\subfloat[Square long-horizon prediction and learned safe envelope.]{\includegraphics[width=.85\linewidth]{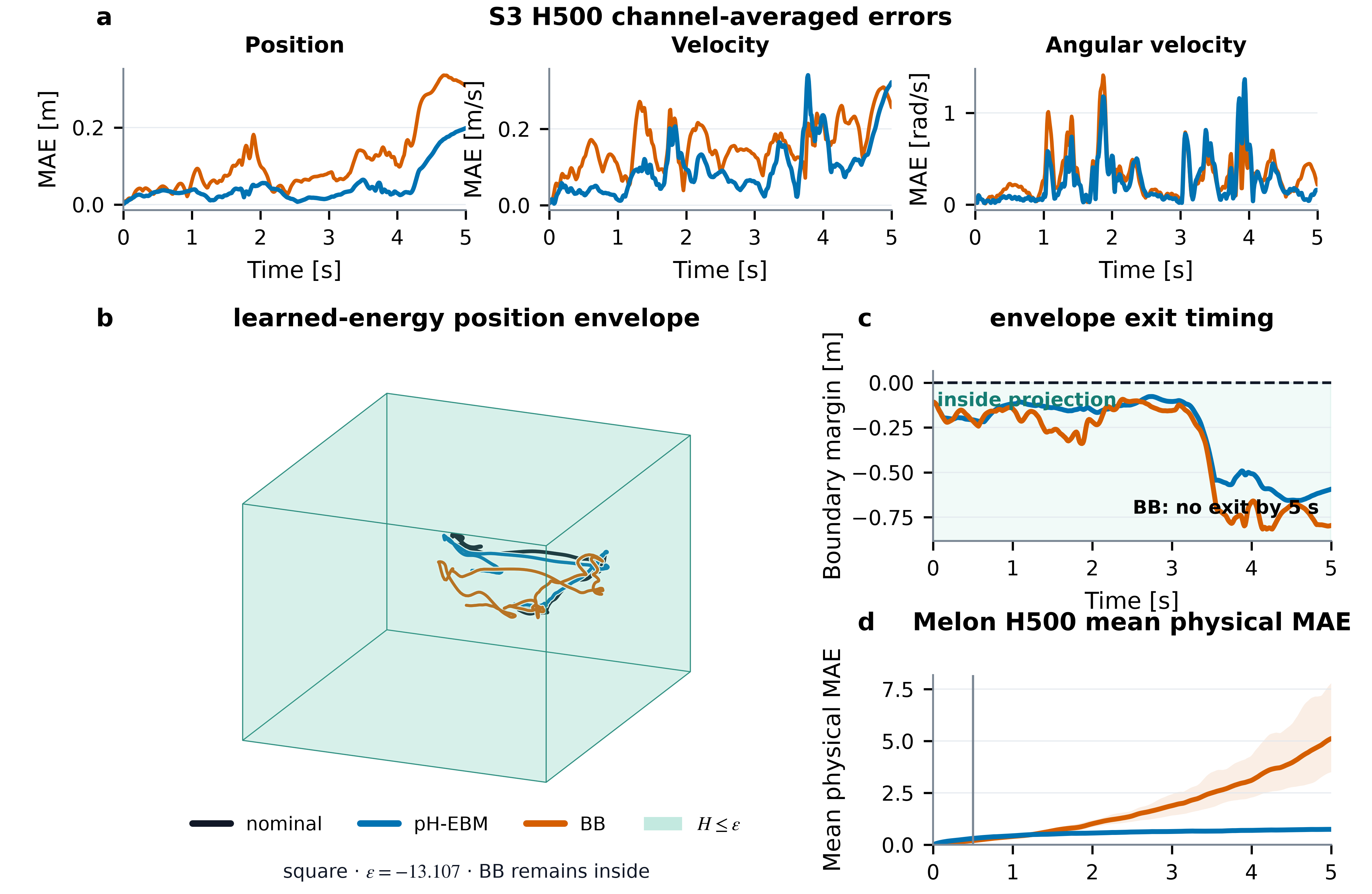}}\\[-1mm]
\caption{\textbf{NanoDrone: training-supported long-horizon Chirp and Square rollouts.} For each excitation, the top panels report channel-averaged position, velocity, and angular-velocity errors; the 3D panel compares the nominal trajectory with pH-EBM and black-box rollouts inside the projected learned-energy envelope; the exit-timing panel tracks the most critical projected coordinate; and the final panel reports physical MAE over the extended horizon. The pH-EBM remains inside the displayed envelope for the shown $5\,\mathrm{s}$ rollouts, whereas the black-box trajectory exits at the indicated time. The envelope is a visualization of the high-dimensional certificate, not a substitute for its full-state evaluation.}
\label{fig:si_nanodrone_rollouts}
\end{figure}

\begin{figure}[!th]
\centering

\subfloat[Held-out Melon long-horizon prediction.]{\includegraphics[width=.85\linewidth]{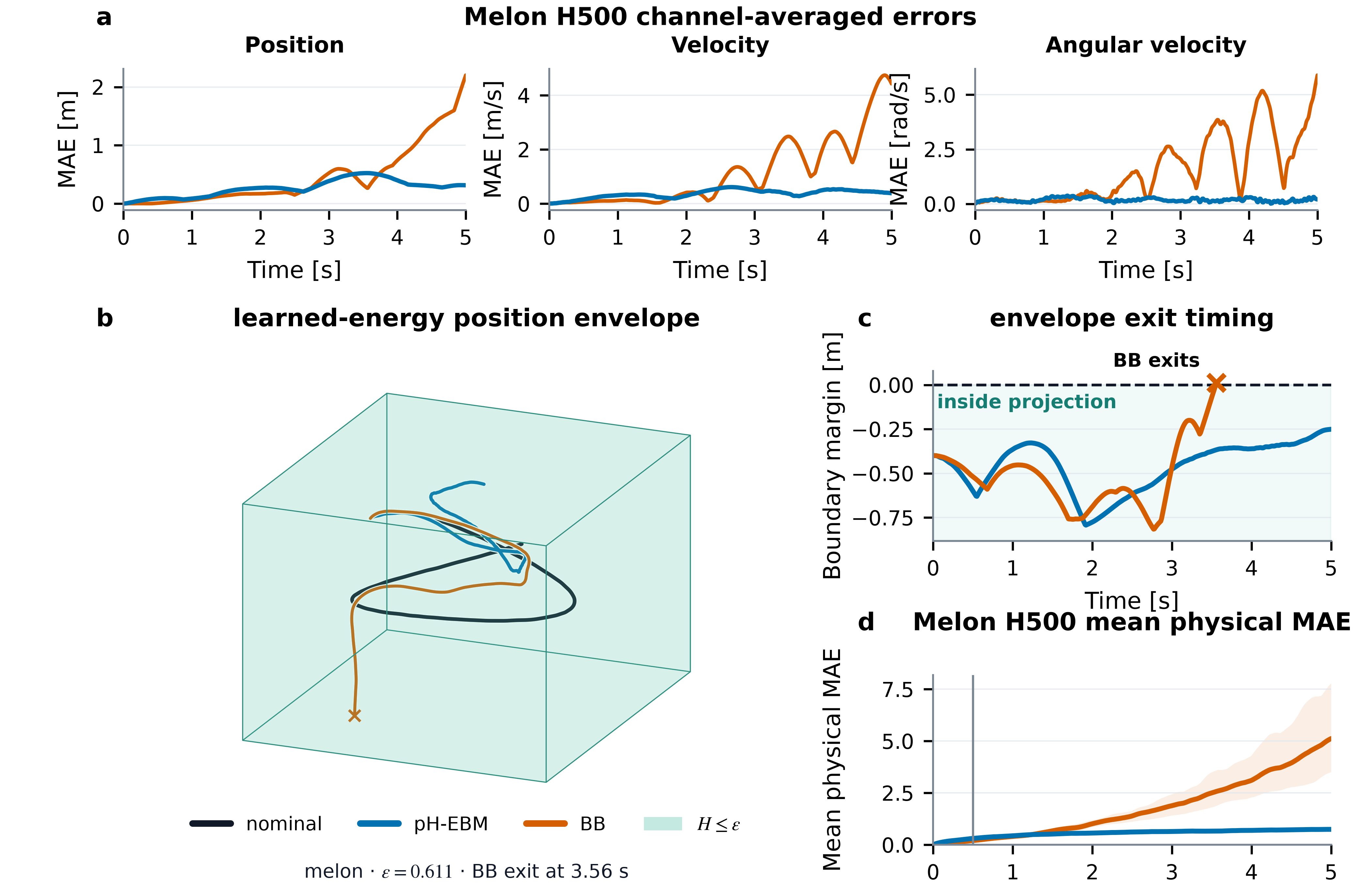}}

\subfloat[Projected Hamiltonian and profiled envelope.]{\includegraphics[width=.85\linewidth]{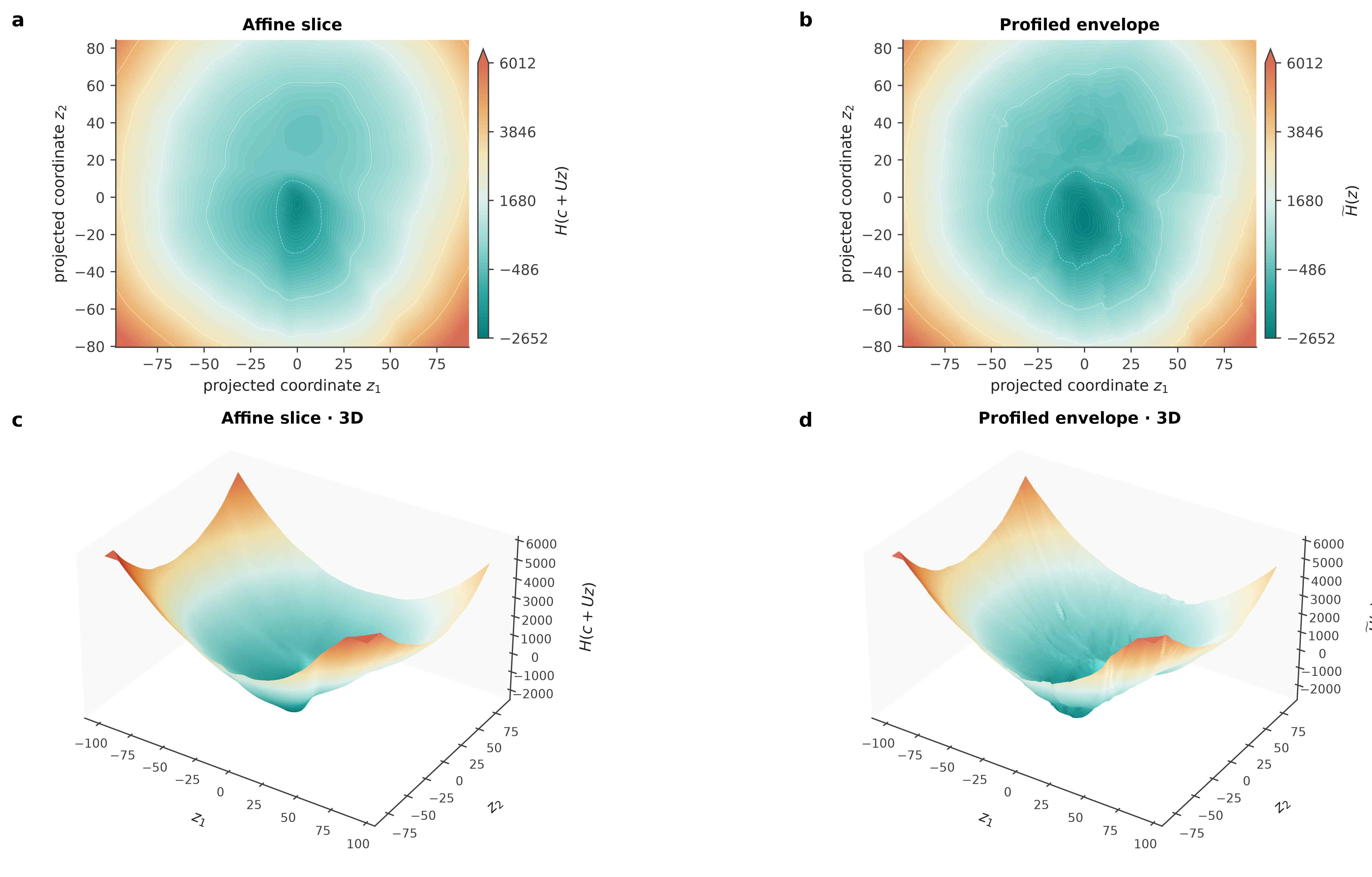}}
\caption{\textbf{NanoDrone: held-out Melon extrapolation and learned Hamiltonian geometry.} The first composite evaluates the unseen Melon family over a $5\,\mathrm{s}$ rollout, reporting channel-wise error, projected trajectory evolution relative to the learned-energy envelope, exit timing, and physical MAE. The black-box prediction is initially competitive but develops rapidly growing long-horizon error and leaves the displayed envelope, whereas the pH-EBM remains bounded in the shown rollout. The second composite compares an affine two-dimensional Hamiltonian slice with the profiled envelope obtained by optimizing over unshown coordinates, in both contour and 3D views, exposing the nonconvex geometry of the learned high-dimensional energy.}
\label{fig:si_nanodrone_melon_energy}
\end{figure}

\begin{figure}[!t]
\centering
\includegraphics[width=\linewidth]{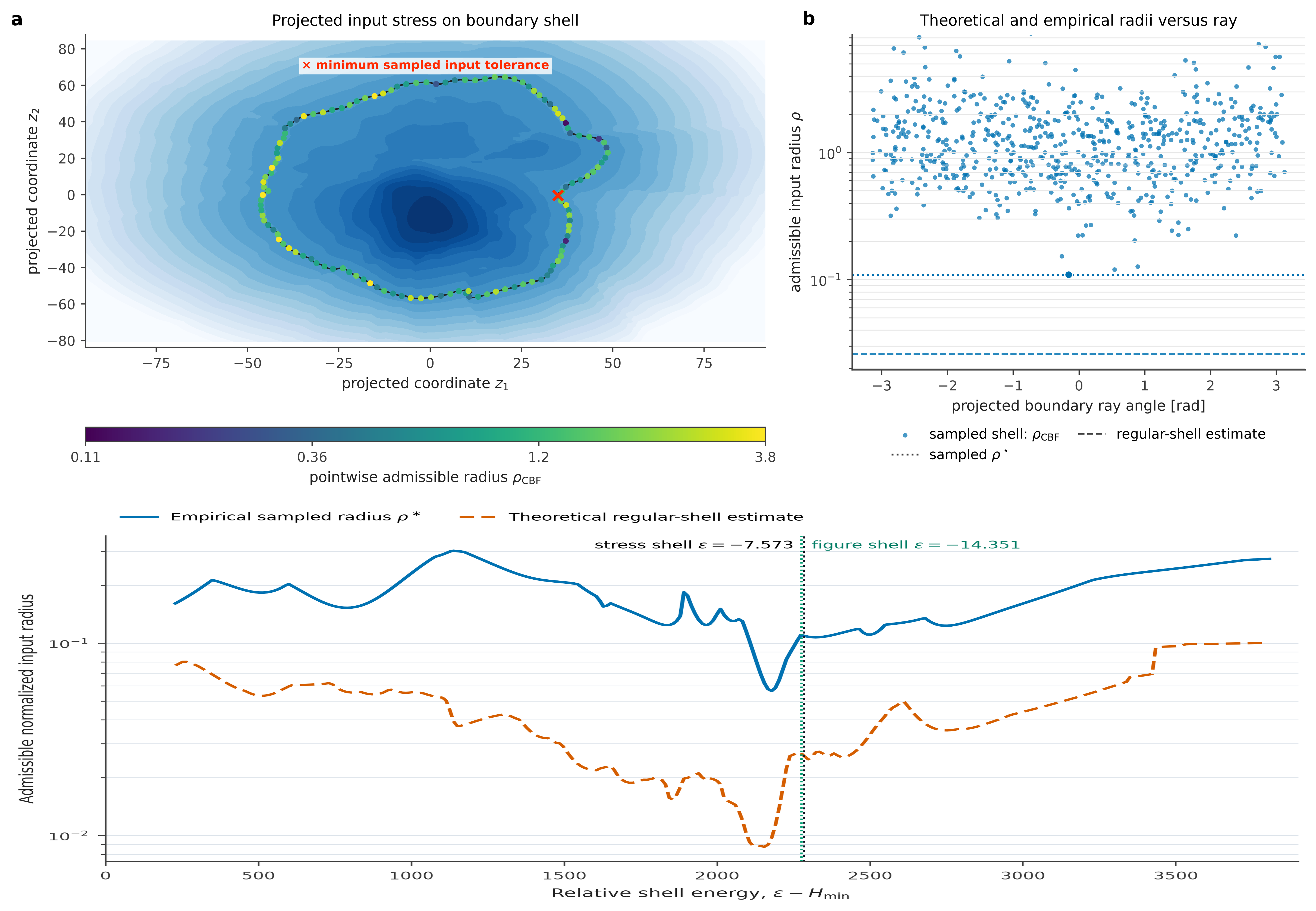}
\caption{\textbf{NanoDrone projected certificate.} Left: projection of the traced full-state energy shell, with boundary samples colored by their pointwise admissible-input radius and the least tolerant sampled state marked explicitly. Right: the same pointwise radii ordered along the projected shell, together with the sampled shell minimum and regular-shell estimate. The projection is diagnostic only: certification is evaluated in the full model state space, and the displayed contour is not asserted to contain every high-dimensional boundary extremum.}
\label{fig:si_nanodrone_certificate}
\end{figure}

\subsection{Per-benchmark model configurations}

The selected models share the same pH-EBM construction but not a forced common width. This is intentional: state dimension, observation structure, and input dimension differ substantially across the tasks. In all code-supported configurations below, $J$ is free and skew-symmetric by construction, $R$ is learned through a Cholesky factor and is positive semidefinite (strictly positive definite when the learned factor has nonzero diagonal), and $G$ is learned rather than tied to the scalar Hamiltonian parameterization. ``Active parameters'' count only terms used by the selected computational graph.

\begin{table}[!t]
\centering
\scriptsize
\caption{\textbf{Per-benchmark pH-EBM configurations.} Architectures and active parameter counts reconstructed from the supplied champion configurations and frozen provenance artifacts. Entries marked ``not in release'' correspond to rows already present in the draft for which no experiment code or champion configuration was supplied.}
\label{tab:architectures}
\resizebox{\linewidth}{!}{%
\begin{tabular}{@{}lccccccc@{}}
\toprule
Benchmark & $d_z$ & Hamiltonian architecture & Activation & $J$ & $R$ & $G$ & Parameters \\
\midrule
Silverbox & 4 & $128\!\rightarrow\!64$ & softmax($2$)$\rightarrow$poly($4$) & free skew & $LL^\top$ & free $G\in\mathbb R^{4\times1}$ & $\approx9.1$k \\
CED & 4 & $96\!\rightarrow\!48$ & softmax($2$)$\rightarrow$poly($4$) & free skew & $LL^\top$ & free $G\in\mathbb R^{4\times1}$ & $\approx5.25$k \\
Duffing double-well & 2 & $64\!\rightarrow\!32$ & $\tanh(2)\rightarrow$poly($4$) & free skew & $LL^\top$ & free $G\in\mathbb R^{2\times1}$ & $\approx2.28$k \\
$n$-link pendulum & 8 & $64\!\rightarrow\!32$ & $\tanh(2)\rightarrow$poly($4$) & free skew & $LL^\top$ & free $G\in\mathbb R^{8\times1}$ & $\approx2.99$k \\
NanoDrone$^{\dagger}$ & 12 & $128\!\rightarrow\!64$ & $\tanh\rightarrow$poly($4$) & free pH field & learned PSD & unrestricted 4-port & 43,738 \\
\bottomrule
\end{tabular}}
\end{table}

\subsection{Training, validation, and model selection}

Across the standard pH-EBM interfaces, optimization is performed on finite rollout windows with a Huber observation loss, Adam/AdamW updates, global gradient clipping, and RK4 integration. The first training epochs use a stop-gradient rollout warm-up to limit long-horizon gradient pathologies; subsequent epochs differentiate through the complete rollout. This warm-up changes the optimization path only and does not alter the model evaluated at test time.

\paragraph{Benchmark-specific schedules.}
The Silverbox champion uses 800 epochs, learning rate $4.59\times10^{-4}$, 192-step windows, stride 16, and batch size 32. CED uses a longer 8000-epoch schedule, learning rate $10^{-4}$, 64-step windows, stride 1, and batch size 16. Duffing uses 1000 epochs, learning rate $1.5\times10^{-3}$, 100-step windows, stride 25, and batch size 64 in the frozen champion. The two supplied $n$-link champions use 500 epochs, learning rate $3\times10^{-4}$, 50-step windows, stride 50, and batch size 8. For NanoDrone, the final S3 confirmation protocol fine-tunes a frozen parent model for 120 epochs with fresh Adam, batch size 64, a five-epoch warm-up, and a learning-rate schedule peaking at $3\times10^{-5}$ and ending at $10^{-7}$. Validation is evaluated at a fixed list of epochs, and the S3 campaign is repeated for four seeds. The Melon family is inaccessible to this selection stage by protocol.

\paragraph{Initial conditions and readouts.}
Silverbox, CED, and the primary $n$-link experiments are output-only systems. Their latent initial condition is inferred from a short observation window; CED and $n$-link use an input-aware encoder so the burn-in actuation is available when estimating $z(0)$. Their measured output is distinct from the power-conjugate port output. Duffing and NanoDrone instead use observed-state training: the physical state initializes the rollout directly and no learned state decoder is required. This distinction is preserved deliberately because the corresponding datasets expose different information to the learner.

\subsection{Baselines and comparison protocol}

The baseline is chosen to answer the scientific question posed by each experiment rather than forcing one architecture across unrelated datasets. Silverbox and CED are compared with benchmark-native literature references under the same published test records and physical error units. Duffing uses the released portHNN-u implementation, including a version with free full-matrix $J,R,G$, so that the comparison isolates the effect of the Hamiltonian parameterization rather than fixing the pH operators to a scalar ansatz. The $n$-link comparison imports the exact upstream Deep Dissipative Dynamics trajectories and uses its reference model on those same arrays. NanoDrone uses the accompanying black-box neural model and a paired confirmation protocol: pH and black-box models see the same S3 families, horizons, validation starts, and seed set. The frozen protocol records 43,738 active parameters for the pH model and 18,508 for the reference black-box model; the comparison is therefore a benchmark-reference comparison rather than a parameter-matched capacity ablation. Missing certification for an unconstrained baseline is denoted as unavailable, not as a zero robustness radius.

\subsection{Prediction metrics and statistical reporting}

For scalar-output identification benchmarks we report the benchmark-native root-mean-square error
\begin{equation}
    \mathrm{RMSE}=\sqrt{\frac1N\sum_{i=1}^{N}(\hat y_i-y_i)^2},
\end{equation}
and, for internal cross-dataset diagnostics, $\mathrm{NRMSE}=\mathrm{RMSE}/\operatorname{std}(y)$. Silverbox RMSE is converted back to mV and CED is reported in ticks/s. Duffing and the state-observed diagnostics use the same definitions directly in physical state coordinates. NanoDrone is evaluated in physical units after undoing the training scalers. Its primary paired S3 estimand is the difference between pH-EBM and black-box physical trajectory MAE at 500 steps, with additional reporting at 50, 100, 250, and 1000 steps and channel-family diagnostics for position, velocity, orientation, and angular velocity. Quaternion observations are converted to $SO(3)$ logarithmic coordinates for the 12-state model; orientation error is additionally tracked geometrically in the evaluation code. For consistency with official validation script, Nanodrone errors are also computed as per-output channel RMSE.

\subsection{Operational energy-shell selection}

The certificate calculations are performed after fitting. We first search the learned Hamiltonian for low-energy wells by multi-start gradient descent. Half of the starts are drawn from states visited by the fitted model when such states are available, and the remaining starts are randomized around their empirical center and scale. Descents are restricted to the operational state ball used by the numerical model, and terminal points are clustered into candidate wells. Candidates with a large energy-gradient norm are rejected before they are used as stationary anchors. This avoids treating a low-energy but nonstationary optimizer endpoint as a physical attractor.

For a chosen threshold $\epsilon$, the full-state shell $\mathcal H(z)=\epsilon$ is traced from every discovered well whose energy lies below the threshold. Random unit directions are launched from the well; along each ray the algorithm expands an outer bracket until the first crossing of the target energy is found and then refines the crossing by bisection. A short damped correction in the local gradient-normal direction finally reduces the residual $|\mathcal H(z)-\epsilon|$. Rays that do not reach the shell within the configured search radius are recorded as failures rather than silently projected to an unrelated point. Consequently, all extrema reported from this procedure are sampled shell estimates; a finite ray set by itself is not presented as an exhaustive proof of a continuous high-dimensional boundary.

\subsection{Numerical computation and verification of certificates}\label{app:num_det}

\paragraph{Hamiltonian projection.}
A two-dimensional picture of a high-dimensional Hamiltonian is necessarily a reduction, so we use two complementary views and label them separately. Let $c\in\real^d$ be the selected well, let $U\in\real^{d\times2}$ contain two orthonormal display directions, and let $V$ span the orthogonal complement. The \emph{affine slice}
\begin{equation}
    H_{\mathrm{slice}}(\zeta)=\mathcal H(c+U\zeta)
\end{equation}
is an exact cross-section: all unshown coordinates are frozen. The \emph{profiled envelope}
\begin{equation}
    \widetilde H(\zeta)=\min_{\omega}\mathcal H(c+U\zeta+V\omega)
\end{equation}
instead minimizes over the unshown coordinates. Up to numerical optimization error, the set $\{\zeta:\widetilde H(\zeta)\leq\epsilon\}$ is the planar projection of the full sublevel set $\{z:\mathcal H(z)\leq\epsilon\}$. The minimization is solved independently over the display grid with Adam and multiple warm starts, including the orthogonal coordinates of discovered wells. We retain the orthogonal-gradient norm as a stationarity diagnostic for the profile.

The display plane is chosen from physically or dynamically informative directions. With multiple discovered wells, the first direction follows the inter-well geometry and the second is taken from the dominant orthogonal variation of the sampled boundary/trajectory cloud. With a single well, the default is PCA of the sampled safety boundary; ordinary trajectory PCA is used only when boundary information is unavailable. Duffing is already two dimensional and therefore requires no projection. The plane limits are derived from robust quantiles of the projected reference states, boundary samples, and discovered wells, with padding added only for readability. The supplementary slice/profile pairs in Figs.~\ref{fig:si_silverbox}, \ref{fig:si_ced}, \ref{fig:si_nlink_n2}--\ref{fig:si_nlink_n3}, and \ref{fig:si_nanodrone_rollouts}, \ref{fig:si_nanodrone_melon_energy} use this procedure.

\paragraph{Input-radius computation.}
At every sampled shell state we evaluate the raw Hamiltonian gradient and the learned $R$ and $G$ matrices, then compute the pointwise boundary margin described in Section~\ref{sec:rob_inv}. For a chosen input norm, the sampled ``exact'' shell radius is the minimum of the pointwise ratio
\begin{equation}
    \frac{d_H(z)-\sigma_{\mathcal W(z)}(\nabla\mathcal H(z))}
         {\|G(z)^\top\nabla\mathcal H(z)\|_*}
\end{equation}
over the traced states for which the input couples to the energy gradient. In parallel, we compute the regular-boundary lower estimate from the sampled extrema of $\|\nabla H\|$, normal dissipation, normal input gain, and disturbance support. The implementation records both values because they answer different questions: the first measures the least favorable sampled direction directly, while the second exposes the geometric factors entering the analytic lower bound. Both remain numerical shell estimates unless the continuous extrema are independently bounded.

\paragraph{Epsilon sweeps.}
The epsilon sweeps in the supplementary figures do not rescale a single boundary. For every energy level, the full-state shell is traced again from scratch and both radii are recomputed. By default, 300 levels are sampled in relative energy coordinates $\epsilon-H_{\min}$, from $0.10$ to $5/3$ times the nominal energy gap $\epsilon_{\mathrm{nom}}-H_{\min}$. Thus the sweep covers the certified operating shell and extends by two thirds of its nominal gap to reveal what happens near and beyond critical levels. The boundary search radius is enlarged by a factor 1.5 during this sweep so that outer shells are not artificially truncated. For Duffing, the range crosses the saddle energy; after components merge, the shell is treated globally rather than assigning points to their original well. Failed levels are stored as missing values and are not interpolated. Every sweep artifact stores the absolute $\epsilon$, $H_{\min}$, relative energy, sampled radius, regular-boundary estimate, number of successful boundary points, failed-ray fraction, and component summaries so the plotted curves can be audited independently.

\paragraph{Hamiltonian-normalized empirical certificate.}
The robustness radius $\uprho_{\epsilon,\star}$ introduced above is an exact certificate defined at a specific energy level $\epsilon$ and expressed in the native coordinates of the model input. As a consequence, its numerical value may vary under simple rescalings of either the energy function or the input variables, making direct comparisons across architectures difficult. To obtain a dimensionless quantity suitable for cross-model evaluation, we introduce an empirical normalization procedure based exclusively on validation data. 

\emph{Common energy level.} 
Given a fixed validation rollout 
\[ \{(z_i,u_i)\}_{i=1}^{N_{\mathrm{val}}}, \] 
we define the reference energy level as the $q$-quantile of the validation energy distribution, 
\begin{equation} \epsilon_q := Q_q\!\left( \{\mathcal H(z_i)\}_{i=1}^{N_{\mathrm{val}}} \right), \qquad q=0.99. 
\end{equation} 
This choice ensures that all models are evaluated on a shell corresponding to the same validation-state occupancy rather than at an arbitrary absolute energy value. 

\emph{Input normalization.} To remove the dependence on the physical units and scaling of the inputs, we normalize the input space using the empirical second moment 
\begin{equation} M_u := \frac{1}{N_{\mathrm{val}}} \sum_{i=1}^{N_{\mathrm{val}}} u_i u_i^\top = L_uL_u^\top, 
\end{equation} 
and introduce normalized coordinates \[ u=L_uv. \] 
When $M_u$ is rank-deficient, $L_u$ is restricted to the subspace effectively excited by the validation data. In these coordinates, distances are measured relative to the variability observed during validation rather than in the original input units.
 
\emph{Normalized robustness margin.} Using the runtime vector field $f_\theta^{\mathrm{run}}$, define the directional energy dissipation rate \begin{equation} \ell_H(z,u) := \nabla\mathcal H(z)^\top f_\theta^{\mathrm{run}}(z,u). 
\end{equation} 
A negative value of $\ell_H$ corresponds to local energy dissipation, whereas $\ell_H\ge 0$ identifies inputs capable of halting or reversing this decrease. We therefore define the normalized robustness margin at state $z$ as 
\begin{equation} 
r_H(z) := \inf \left\{ \|v\|_2 : \ell_H(z,L_uv)\ge 0 \right\}.
\end{equation} 
Equivalently, $r_H(z)$ is the smallest perturbation, measured in normalized input coordinates, capable of eliminating the local decrease of the energy function. By convention, $r_H(z)=0$ whenever $\ell_H(z,0)\ge0$. For input-affine dynamics, \[ f_\theta^{\mathrm{run}}(z,u) = f_\theta^{\mathrm{run}}(z,0) + B_\theta(z)u, \] the above optimization admits the closed-form expression 
\begin{equation} r_H(z) = \frac{ -\nabla\mathcal H(z)^\top f_\theta^{\mathrm{run}}(z,0) }{ \bigl\| L_u^\top B_\theta(z)^\top \nabla\mathcal H(z) \bigr\|_2 }, 
\end{equation} 
whenever the denominator is nonzero. For nonlinear forcing structures, $r_H(z)$ is obtained numerically by locating the first zero crossing of $\ell_H$ along the normalized input direction. 

\emph{Empirical normalized radius.} Let $\widehat{\Gamma}_q$ denote a numerical reconstruction of the energy shell \[ \Gamma_q := \{z:\mathcal H(z)=\epsilon_q\} \] obtained from validation data. We then define \begin{equation} 
\widehat{\rho}_{q,\mathrm{norm}} := \min_{z\in\widehat{\Gamma}_q} r_H(z), \qquad q=0.99. 
\end{equation} 
The quantity $\widehat{\uprho}_{q,\mathrm{norm}}$ measures the smallest normalized input perturbation required to destroy local energy dissipation on the validation shell. By construction, it is invariant under any strictly increasing reparameterization of $\mathcal H$ and uses a common, data-derived input geometry for all models. Since the shell itself is reconstructed numerically from finite validation trajectories, $\widehat{\uprho}_{0.99,\mathrm{norm}}$ should be interpreted as an empirical benchmarking metric for cross-model comparison. In contrast, the radius $\uprho_{\epsilon,\star}$ remains the exact continuous-state robustness certificate associated with a prescribed energy level.

\subsection{Additional benchmark results}

Figures~\ref{fig:si_silverbox}-\ref{fig:si_nanodrone_certificate} collect the supplementary outputs generated by the released experiment pipeline. In particular, the Silverbox and CED plates show that the same certificate machinery can be applied when the learned energy is effectively single-well (or almost); the Duffing figure validates recovery of a genuinely nonconvex physical energy; the $n$-link figures compare prediction/OOD behavior with high-dimensional projected geometry; and the NanoDrone figures expose the distinction between short-horizon fit and long-horizon safe recursive deployment. These visualizations are diagnostic companions to the quantitative table in the main paper and do not replace the full-state certificate computations from which the shell radii are obtained.

\subsection{Ablations and sensitivity analyses}

The released experiments prioritize ablations that alter the scientific certificate rather than exhaustive optimizer sweeps. Duffing compares the modern-Hopfield Hamiltonian with the portHNN-u reference while keeping the port-Hamiltonian interpretation explicit. The $n$-link family changes mechanical dimension under a common eight-dimensional latent pH-EBM template. NanoDrone pairs the structured model with the benchmark black-box model over identical S3 starts and horizons. Finally, the epsilon sweeps provide a post-training sensitivity analysis of the certificate itself: they reveal how the minimum shell gradient, normal dissipation, input exposure, and admissible-input radius change as the certified operating region expands toward critical energy levels.

\end{document}